\PassOptionsToPackage{dvipsnames,table}{xcolor}
\documentclass{article}

\PassOptionsToPackage{numbers,compress}{natbib}
\usepackage[preprint]{neurips_2026}

\usepackage[utf8]{inputenc}
\usepackage[T1]{fontenc}
\usepackage{hyperref}
\usepackage{url}
\usepackage{microtype}
\usepackage{graphicx}
\usepackage{subcaption}
\usepackage{wrapfig}
\usepackage{booktabs}
\usepackage{amsfonts}
\usepackage{nicefrac}
\usepackage{xcolor}
\usepackage[textsize=scriptsize]{todonotes}
\usepackage{soul}
\usepackage{tcolorbox}
\tcbuselibrary{breakable,skins}
\usepackage[framemethod=tikz]{mdframed}
\usepackage{amsmath}
\usepackage{amssymb}
\usepackage{mathtools}
\usepackage{amsthm}
\usepackage{xargs}
\usepackage{multirow}
\usepackage{array}
\usepackage{stmaryrd}
\usepackage{algorithm}
\usepackage{varwidth}
\usepackage{bm}
\usepackage{tikz}


\usepackage{algpseudocode}

\usepackage[capitalize,noabbrev]{cleveref}

\algtext*{EndFor}
\algtext*{EndIf}
\gdef\algorithmiccomment#1{\hfill $\triangleright$ #1}

\hypersetup{
    colorlinks=true,
    citecolor=black!40,
    linkcolor=RoyalBlue!90,
    urlcolor=black!30
}

\newcommand{\runtimefactor}[1]{\textcolor{RoyalBlue}{(\textbf{#1$\boldsymbol{\times}$})}}

\newcommand{\contribnum}[1]{%
    \tikz[baseline=(char.base)]{
        \node[
            circle,
            fill=orange!30,
            inner sep=1.15pt,
            text height=1.45ex,
            text depth=.2ex,
            font=\bfseries\small
        ] (char) {#1};
    }%
}
\newcommand{\leftcell}[1]{\multicolumn{1}{l}{#1}}
\newcommand{\tabstd}[1]{\textcolor{black!45}{\hspace{0.08em}\raisebox{0.18ex}{\scalebox{0.78}{\(\pm\,#1\)}}}}
\newcommand{\tabmean}[1]{\makebox[2.08em][r]{#1}}
\newcommand{\tabwidemean}[1]{\makebox[2.55em][r]{#1}}
\newcommand{\tabval}[2]{\tabmean{#1}\tabstd{#2}}
\newcommand{\tabbest}[2]{\tabmean{\textbf{#1}}\tabstd{#2}}
\newcommand{\tabwideval}[2]{\tabwidemean{#1}\tabstd{#2}}
\newcommand{\tabwidebest}[2]{\tabwidemean{\textbf{#1}}\tabstd{#2}}
\newcommand{\tabhi}[2]{\begingroup\setlength{\fboxsep}{1.1pt}\kern-1.1pt\colorbox{#1}{#2}\kern-1.1pt\endgroup}
\newcommand{\factorcell}[2]{\multicolumn{1}{l}{\makebox[2.4em][l]{#1}\hspace{0.18em}\makebox[2.45em][r]{\runtimefactor{#2}}}}
\newcommand{\colorfactorcell}[3]{\multicolumn{1}{l}{\tabhi{#1}{\makebox[2.4em][l]{#2}\hspace{0.18em}\makebox[2.45em][r]{\runtimefactor{#3}}}}}
\newcommand{\widefactorcell}[2]{\multicolumn{1}{l}{\makebox[2.55em][l]{#1}\makebox[2.95em][r]{\runtimefactor{#2}}}}

\newenvironment{callout}
{
    \begin{tcolorbox}[
        colback=orange!5!white,
        colframe=orange!75!black,
        coltitle=black,
        boxrule=1pt,
        arc=3pt,
        left=6pt,
        right=6pt,
        top=6pt,
        bottom=6pt,
        before upper=\setlength{\parindent}{0pt}]
}
{
    \end{tcolorbox}
}

\newenvironment{warmstartlemmabox}
{
    \begin{mdframed}[
        backgroundcolor=RoyalBlue!10!white,
        linecolor=RoyalBlue!35!black,
        linewidth=0.2pt,
        roundcorner=3pt,
        innerleftmargin=5pt,
        innerrightmargin=5pt,
        innertopmargin=5pt,
        innerbottommargin=5pt,
        skipabove=6pt,
        skipbelow=6pt]
}
{
    \end{mdframed}
}

\theoremstyle{plain}
\newtheorem{theorem}{Theorem}[section]
\newtheorem{proposition}[theorem]{Proposition}
\newtheorem{lemma}[theorem]{Lemma}
\newtheorem{corollary}[theorem]{Corollary}
\theoremstyle{definition}

\newtheorem{assumption}[theorem]{Assumption}
\theoremstyle{remark}
\newtheorem{remark}[theorem]{Remark}

\sethlcolor{yellow!35}

\def\Id{\mathbf{I}}
\def\zero{0}
\def\pV{\mathbb{V}}
\def\pE{\mathbb{E}}

\def\cov{\Sigma}

\def\eqdef{\vcentcolon=}

\newcommand{\tbar}{\mathrel{\raisebox{0.15ex}{$\scriptscriptstyle\mid$}}}

\def\law{\mathsf{Law}}

\DeclareMathOperator*{\argmin}{argmin}

\def\opA{\mathcal{A}}

\newcommand{\tk}[1]{{t_{#1}}}

 \def\eqdef{\coloneqq}

\newcommandx\targ[4][4=]{
    \ifthenelse{\equal{#2}{}}{    
        \ifthenelse{\equal{#3}{}}{    
            p^{#4} _{#1}
        }{
            p^{#4} _{#1}(#3)
            }
    }{
        \ifthenelse{\equal{#3}{}}{    
            p^{#4} _{#1}(\cdot|#2)
        }{
            p^{#4} _{#1}(#3|#2)
        }
    }}

\newcommandx\interp[4][4=\interpscale]{
    \ifthenelse{\equal{#2}{}}{    
        \ifthenelse{\equal{#3}{}}{    
            \pi^{#4} _{#1}
        }{
            \pi^{#4} _{#1}(#3)
            }
    }{
        \ifthenelse{\equal{#3}{}}{    
            \pi^{#4} _{#1}(\cdot|#2)
        }{
            \pi^{#4} _{#1}(#3|#2)
        }
    }}

\newcommandx\pdata[4][4=]{
    \ifthenelse{\equal{#2}{}}{    
        \ifthenelse{\equal{#3}{}}{    
            p^{#4} _{#1}
        }{
            p^{#4} _{#1}(#3)
        }
    }{
        \ifthenelse{\equal{#3}{}}{    
            p^{#4} _{#1}(\cdot|#2)
        }{
            p^{#4} _{#1}(#3|#2)
        }
    }}

\newcommandx\hpdata[4][4=]{
    \ifthenelse{\equal{#2}{}}{    
        \ifthenelse{\equal{#3}{}}{    
            \hat{p}^{#4} _{#1}
        }{
            \hat{p}^{#4} _{#1}(#3)
        }
    }{
        \ifthenelse{\equal{#3}{}}{    
            \hat{p}^{#4} _{#1}(\cdot|#2)
        }{
            \hat{p}^{#4} _{#1}(#3|#2)
        }
    }}

\newcommandx\revker[4][4=]{
\ifthenelse{\equal{#2}{}}{    
    \ifthenelse{\equal{#3}{}}{    
        r^{#4} _{#1}
    }{
        r^{#4} _{#1}(#3)
    }
}{
    \ifthenelse{\equal{#3}{}}{    
        r^{#4} _{#1}(\cdot|#2)
    }{
        r^{#4} _{#1}(#3|#2)
    }
}}

\newcommandx\hatrevker[4][4=]{
    \ifthenelse{\equal{#2}{}}{    
        \ifthenelse{\equal{#3}{}}{    
            \hat{r}^{#4} _{#1}
        }{
            \hat{r}^{#4} _{#1}(#3)
        }
    }{
        \ifthenelse{\equal{#3}{}}{    
            \hat{r}^{#4} _{#1}(\cdot|#2)
        }{
            \hat{r}^{#4} _{#1}(#3|#2)
        }
    }
}

\newcommandx\refreshker[4][4=]{
    \ifthenelse{\equal{#2}{}}{    
        \ifthenelse{\equal{#3}{}}{    
            m^{#4} _{#1}
        }{
            m^{#4} _{#1}(#3)
        }
    }{
        \ifthenelse{\equal{#3}{}}{    
            m^{#4} _{#1}(\cdot|#2)
        }{
            m^{#4} _{#1}(#3|#2)
        }
    }}
\newcommand\jpdata[3]{
    \ifthenelse{\equal{#2}{}}{    
        \ifthenelse{\equal{#3}{}}{    
            \bar{p}_{#1}
        }{
            \bar{p}_{#1}(#3)
        }
    }{
        \ifthenelse{\equal{#3}{}}{    
            \bar{p}_{#1}(\cdot|#2)
        }{
            \bar{p}_{#1}(#3|#2)
        }
    }}

\newcommandx\post[4][4=]{
    \ifthenelse{\equal{#2}{}}{    
        \ifthenelse{\equal{#3}{}}{    
            \pi^{#4} _{#1}
        }{
            \pi^{#4} _{#1}(#3)
        }
    }{
        \ifthenelse{\equal{#3}{}}{    
            \pi^{#4} _{#1}(\cdot|#2)
        }{
            \pi^{#4} _{#1}(#3|#2)
        }
    }}

\newcommandx\hpost[4][4=]{
    \ifthenelse{\equal{#2}{}}{    
        \ifthenelse{\equal{#3}{}}{    
            \hat\pi^{#4} _{#1}
        }{
            \hat\pi^{#4} _{#1}(#3)
        }
    }{
        \ifthenelse{\equal{#3}{}}{    
            \hat\pi^{#4} _{#1}(\cdot|#2)
        }{
            \hat\pi^{#4} _{#1}(#3|#2)
        }
    }}

  \newcommandx\pot[3][3=]{
        \ifthenelse{\equal{#3}{}}{    
            \ell^{#3} _{#1}(\obs|#2)
        }{
            \ell^{#3} _{#1}(\obs|#2)
        }
    }

\newcommandx\hpot[4][4=]{
    \ifthenelse{\equal{#2}{}}{    
        \ifthenelse{\equal{#3}{}}{    
            \hat{\ell}^{#4} _{#1}
        }{
            \hat{\ell}^{#4} _{#1}(#3)
        }
    }{
        \ifthenelse{\equal{#3}{}}{    
            \hat{\ell}^{#4} _{#1}(\cdot|#2)
        }{
            \hat{\ell}^{#4} _{#1}(#3|#2)
        }
    }}

\newcommandx\fw[4][4=]{
        \ifthenelse{\equal{#3}{}}{    
            q^{#4} _{\smash{#1}}(\cdot|#2)
        }{
            q^{#4} _{\smash{#1}}(#3|#2)
        }
    }

\newcommandx\denoiser[5][4=, 5=0]{
    \ifthenelse{\equal{#2}{}}{    
        \ifthenelse{\equal{#3}{}}{    
            \hat\bx^{#4}_{#5}(\cdot, #1)
        }{
            \hat\bx^{#4} _{#5}(#3, #1)
        }
    }{
        \ifthenelse{\equal{#3}{}}{    
            \hat\bx^{#4} _{#5}(\cdot, #1|#2)
        }{
            \hat\bx^{#4} _{#5}(#3, #1|#2)
        }
    }
}

\newcommandx\noisepred[4][4=]{
    \ifthenelse{\equal{#2}{}}{    
        \ifthenelse{\equal{#3}{}}{    
            \hat\bx^{#4}_{1}(\cdot, #1)
        }{
            \hat\bx^{#4} _{1}(#3, #1)
        }
    }{
        \ifthenelse{\equal{#3}{}}{    
            \hat\bx^{#4} _{1}(\cdot, #1|#2)
        }{
            \hat\bx^{#4} _{1}(#3, #1|#2)
        }
    }
}

    \newcommandx\hdenoiser[4][4=]{
    \ifthenelse{\equal{#2}{}}{    
        \ifthenelse{\equal{#3}{}}{    
            \hat{D}^{#4}_{#1}
        }{
            \hat{D}^{#4} _{#1}(#3)
        }
    }{
        \ifthenelse{\equal{#3}{}}{    
            \hat{D}^{#4} _{#1}(\cdot|#2)
        }{
            \hat{D}^{#4} _{#1}(#3|#2)
        }
    }}

\newcommandx\epspred[4][4=]{
    \ifthenelse{\equal{#2}{}}{    
        \ifthenelse{\equal{#3}{}}{    
            \varepsilon^{#4}_{#1}
        }{
            \varepsilon^{#4} _{#1}(#3)
        }
    }{
        \ifthenelse{\equal{#3}{}}{    
            \varepsilon^{#4} _{#1}(\cdot|#2)
        }{
            \varepsilon^{#4} _{#1}(#3|#2)
        }
    }}

\newcommand\clf[3]{
    \ifthenelse{\equal{#2}{}}{    
        g_{#1}(#3|\cdot)
    }{
        g _{#1}(#3|#2)
    }
}

\newcommand\cfgdist[3]{
    \ifthenelse{\equal{#2}{}}{    
        p^w _{#1}(#3|\cdot)
    }{
        p^w _{#1}(#3|#2)
    }
}

\newcommand\cscore[3]{
    \ifthenelse{\equal{#2}{}}{    
        \ifthenelse{\equal{#3}{}}{    
            s _{#1}
        }{
            s _{#1}(#3)
        }
    }{
        \ifthenelse{\equal{#3}{}}{    
            s _{#1}(\cdot|#2)
        }{
            s _{#1}(#3|#2)
        }
    }}

\def\gauss{\mathcal{N}}

\def\interpscale{\eta}

\def\std{\sigma}

\def\acp{\alpha}

\def\bx{\mathbf{x}}

\def\bz{\mathbf{z}}
\def\BZ{Z}

\def\bmu{\boldsymbol{\mu}}

\def\div2k{{\texttt{DIV2K}}}

\def\param{\theta}

\def\obs{\mathbf{y}}

\newcommandx{\hpredx}[3][2=0,3=\param]{\smash{m^{#3} _{#2|#1}}}
\newcommandx{\predx}[2][2=0]{\smash{m _{#2|#1}}}
\newcommandx{\prednoise}[2][2=\param]{\smash{\epsilon^{#2} _{#1}}}
\newcommandx{\score}[2][2=\param]{s^{#2} _{#1}}

\def\encoder{\mathcal{E}}
\def\decoder{\mathcal{D}}
\def\DC{\mathsf{DC}}

\title{Sparse Scheduled Diffusion Guidance for \\ Inverse Problems}

\author{%
Abduragim Shtanchaev \\
MBZUAI, Abu Dhabi, United Arab Emirates \\
\texttt{abduragim.shtanchaev@mbzuai.ac.ae}
\And
Albina Ilina \\
MBZUAI, Abu Dhabi, United Arab Emirates
\And
Yazid Janati \\
MBZUAI, Abu Dhabi, United Arab Emirates \\
Institute of Foundation Models
\And
Arip Asadulaev \\
MBZUAI, Abu Dhabi, United Arab Emirates
\AND
Martin Tak\'a\v{c} \\
MBZUAI, Abu Dhabi, United Arab Emirates
\And
Eric Moulines \\
MBZUAI, Abu Dhabi, United Arab Emirates \\
EPITA, Paris, France
}

\begin{document}

\maketitle

\begin{abstract}
Pretrained diffusion models are effective priors for Bayesian inverse problems, but posterior sampling with these priors is often costly because data-consistency guidance is applied throughout the full reverse trajectory. Existing methods have shown that vector-Jacobian products through the denoiser can sometimes be avoided, yet they typically still rely on dense guidance through the full trajectory or expensive inner solves. We introduce \textbf{Sp}arse Scheduled Diffusion Guidance for \textbf{In}verse Problems (\textsc{Spin}), a solver that avoids starting posterior sampling from pure noise. \textsc{Spin} first samples from a posterior time-marginal at an intermediate timestep $\bm{t_*}$, and then uses that state as a warm start for a guided reverse diffusion process. At guidance time, instead of enforcing the measurement constraint at every denoising step, \textsc{Spin} applies lightweight corrections only at scheduled timesteps where the denoiser can still clean up artifacts. The resulting procedure decouples prior refinement from data consistency: the prior supplies denoising, while lightweight pixel-space optimization enforces the measurement constraint without backpropagation through the denoiser or decoder. Across linear and nonlinear inverse problems on FFHQ and ImageNet, \textsc{Spin} achieves competitive reconstruction quality with a substantially better runtime--memory profile, running $2\times$ faster on pixel-space models and up to $50\times$ faster on latent diffusion models, with lower memory costs.
\end{abstract}

\begin{figure*}[t!]
    \centering
    \includegraphics[width=0.9\linewidth]{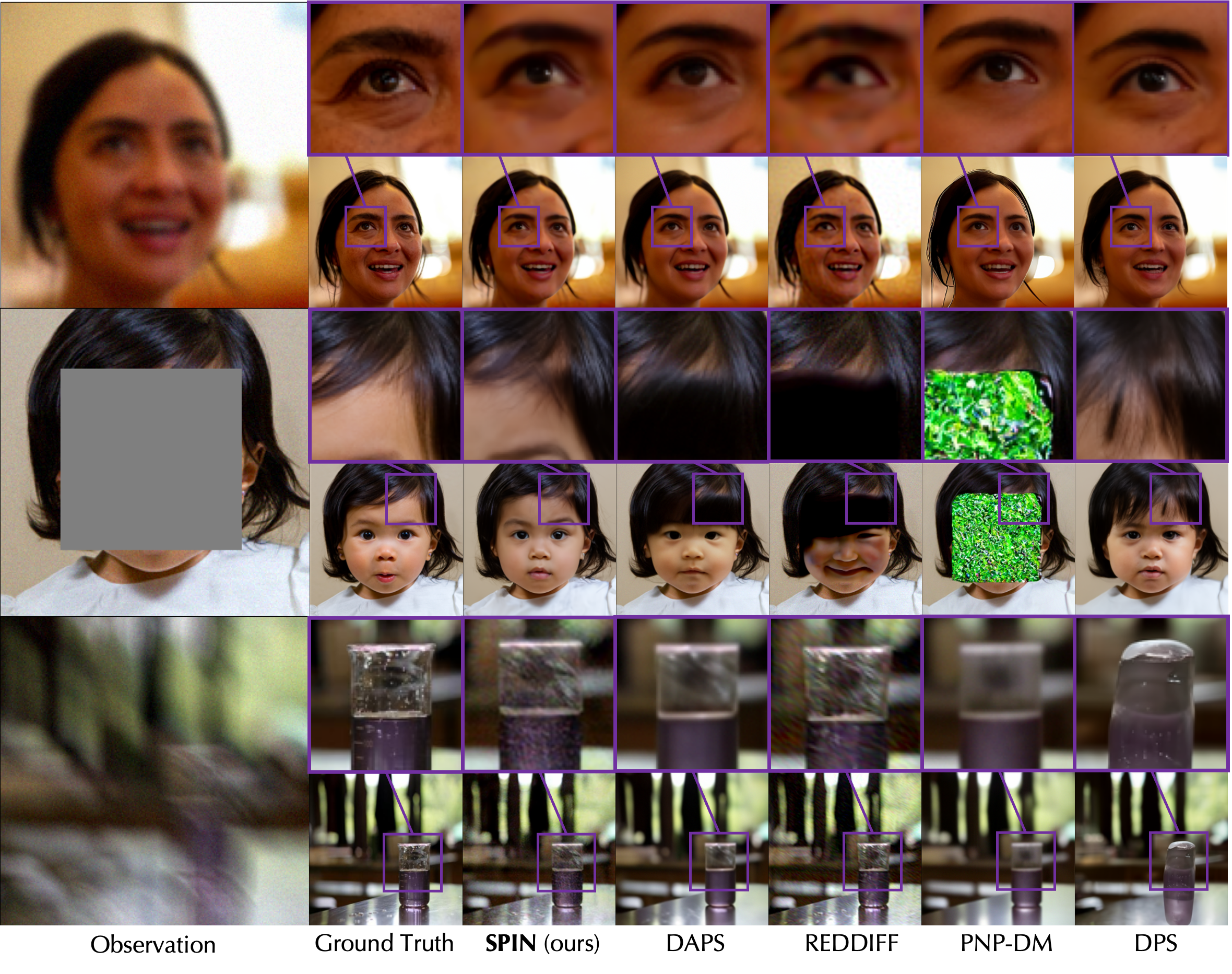}
    \caption{\textbf{Qualitative results on FFHQ and ImageNet.} Comparison with baseline methods across Gaussian deblurring and center inpainting on \texttt{FFHQ} (\textit{top and middle rows, respectively}), and motion deblurring on \texttt{ImageNet} (\textit{bottom row}). \textsc{Spin} (ours) achieves more accurate and detailed restoration compared to existing approaches. See Appendix \ref{sec:images_ffhq}/\ref{sec:images_imagenet} for more samples.}
    \label{fig:main_results}
    \vspace{-15pt}
\end{figure*}

\section{Introduction}
\vspace{-5pt}
\label{sec:intro}

Zero-shot generative modeling refers to leveraging a pre-trained generative model of the data distribution $p(\bx)$ and adapting it \emph{without task-specific retraining} to solve downstream tasks not included during training~\citep{moufad2025efficient}. In this setting, the only new ingredient at test time is a description of the task (e.g., a measurement operator or constraint), and the generative prior is used to produce realistic solutions that satisfy these new conditions.

A convenient probabilistic formalization of many such zero-shot tasks is through \emph{Bayesian inverse problems}. Inverse problems arise when a signal of interest $\bx$ must be recovered from incomplete, corrupted, or indirect observations $\obs$. Formally, this involves retrieving an original signal $\bx$ from a measurement $\obs = \mathcal{A}(\bx) + \mathbf{n}$, where $\mathcal{A}$ is a degradation/forward operator (such as a blur kernel or masking) and $\mathbf{n}$ is noise. Because information is usually lost during this degradation process, the problem is ill-posed and requires strong prior knowledge to recover the original signal $\bx$.

The Bayesian framework offers a principled way to address these challenges by combining a data likelihood with a learned or structured prior. A common strategy is to use a strong generative prior at inference time to solve different inverse problems without retraining. These methods enable adaptation to different measurement processes, such as masking, blurring, or downsampling ~\citep{chung2022diffusion, song2023pseudoinverse}. In this setting, a diffusion model acts as a denoising prior, iteratively guiding candidate solutions toward realistic data while keeping them consistent with the measurements. 

At the core of diffusion-based posterior sampling is a sequential denoising process that approximates a sequence of intermediate posterior distributions. These distributions combine the learned prior with measurement consistency at each noise level. In practice, many existing methods start from pure noise and apply data-consistency guidance throughout the full reverse trajectory. Moreover, approximating the exact posterior denoiser often requires an intractable likelihood gradient, leading common methods to compute vector-Jacobian products through the denoiser at every step~\citep{chung2022diffusion, song2023pseudoinverse}. While effective, these choices introduce substantial computational and memory overhead, limiting scalability and practical deployment.

Our key observation is that \textit{posterior sampling does not need to start from pure noise}, and \textit{data consistency does not need to be enforced at every denoising step}. Instead, we can first construct a task-informed sample at an intermediate timestep $\bm{t_*}$ and then run only the remaining part of the reverse trajectory while applying sparse data-consistency corrections at scheduled timesteps. Applying them too densely can repeatedly perturb the diffusion trajectory, while applying them too late leaves little time for the denoiser to remove artifacts introduced by the measurement objective.

Building on this observation, we introduce \textsc{Spin} (\textbf{Sp}arse Scheduled Diffusion Guidance for \textbf{In}verse Problems), a two-phase framework for efficient diffusion-based inverse problem solving. Our contributions are threefold. \contribnum{1} We formulate an iterative warm-start posterior sampling procedure that targets an intermediate posterior time-marginal instead of initializing the reverse process from pure noise. \contribnum{2} We propose a sparse guidance strategy that applies \textit{lightweight} measurement corrections only at selected noise levels, \textit{thereby avoiding vector-Jacobian products through the denoiser}. \contribnum{3} We show that \textsc{Spin} \textit{reduces inference time and memory usage} while maintaining high reconstruction quality across linear and nonlinear inverse problems. 

\section{Background}
\vspace{-5pt}
\label{sec:background}

Denoising diffusion models (DDMs)~\citep{ddpm} generate samples from a target distribution $p_0$ by constructing a distribution interpolation path $(p_t)_{t \in [0,1]}$ that connects the data distribution $p_0$ to the standard Gaussian base distribution $p_1 := \mathcal{N}(0, I_d)$. This interpolation is defined through a noising process where each intermediate distribution $p_t = \text{Law}(X_t)$ arises from the mixture
\begin{equation}
    X_t = \alpha_t X_0 + \sigma_t X_1, \quad X_0 \sim p_0, \quad X_1 \sim p_1,
    \label{eq:forward_process}
\end{equation}
with independent random variables $X_0 \sim p_0$ and $X_1 \sim p_1$. The coefficients $(\alpha_t)_{t\in[0,1]}$ and $(\sigma_t)_{t\in[0,1]}$ are deterministic schedules that satisfy monotonicity constraints and boundary conditions $(\alpha_0, \sigma_0) := (1, 0)$ and $(\alpha_1, \sigma_1) := (0, 1)$. Common instantiations include the \emph{variance-preserving schedule} where $\alpha_t^2 + \sigma_t^2 = 1$~\citep{ddpm, dhariwal2021diffusion}, and the \emph{linear schedule} with $(\alpha_t, \sigma_t) = (1-t, t)$~\citep{lipman2022flow, esser2024scaling, gao2024diffusion}. 

To sample from $p_0$, the diffusion model starts from a sample drawn from $p_1$ at time $t_K = 1$. It then performs a sequence of deterministic reverse transitions, progressively denoising each sample $\hat{X}_{t_{k+1}}$ into a less noisy sample $\hat{X}_{t_k}$, until reaching the clean distribution at $t_0 = 0$. The DDIM~\citep{ddim} transition from $t_{k+1}$ to $t_k$ is defined as
$$ 
\hat{X}_{t_k} = \acp_{t_k} \denoiser{\tk{k+1}}{}{\hat{X}_\tk{k+1}}[\param] + \std_\tk{k} \noisepred{\tk{k+1}}{}{\hat{X}_\tk{k+1}}[\param],
$$  
where $\denoiser{\tk{k+1}}{}{}[\param]$ and $\noisepred{\tk{k+1}}{}{}[\param]$ are respectively neural network approximations of the denoiser $\pE[X_0 | X_t]$ and noise prediction $\pE[X_1 | X_t]$ under the model \eqref{eq:forward_process}. Under the same forward model, the noise prediction can be recovered from the denoiser through $\pE[X_1 | X_t] = (X_t - \acp_t \pE[X_0 | X_t]) / \sigma_t$. Thus, it is enough to train the denoisers $(\denoiser{t}{}{}[\param])_{t \in [0, 1]}$ using standard denoising objectives~\citep{ddpm,ddim}, and the corresponding noise predictor is defined as $\noisepred{t}{}{\bx}[\param] = (\bx - \acp_t \denoiser{t}{}{\bx}[\param]) / \sigma_t$. The DDIM framework also allows for stochastic updates. In this case, 
\begin{equation*}
\hat{X}_{t_k} = \acp_{t_k} \denoiser{\tk{k+1}}{}{\hat{X}_\tk{k+1}}[\param] + \sqrt{\std^2_\tk{k} - \eta^2 _\tk{k}} \noisepred{\tk{k+1}}{}{\hat{X}_\tk{k+1}}[\param] + \eta_\tk{k} W_k,
\end{equation*}
where $W_k \sim \gauss(\zero, \Id)$ and $0 \leq \eta_\tk{k} \leq \sigma_\tk{k}$. We write $\pdata{\tk{k}\tbar \tk{k+1}}{\bx_\tk{k+1}}{}[\param]$ to refer to the associated conditional density. 

\paragraph{Inverse problems.}  We consider an unknown clean image $\bx_* \in \mathbb{R}^d$ observed through a measurement $\obs = \mathcal{A}(\bx_*) + \mathbf{n}$, where $\mathcal{A} : \mathbb{R}^d \rightarrow \mathbb{R}^m$ may be nonlinear. In the formal model below, we take the noise to be Gaussian, $\mathbf{n} = \sigma_y W$, so that
\begin{equation*}
\obs = \mathcal{A}(\bx_*) + \sigma_y W, \qquad W \sim \mathcal{N}(0, \mathbf{I}_m),
\end{equation*}
where $\sigma_y > 0$ denotes the noise level. The task is to recover a reconstruction $\hat{\bx}$ that explains the observation $(\mathcal{A}(\hat{\bx}) \approx \obs)$ while remaining consistent with the statistical properties of natural images. Within a Bayesian framework, the prior distribution $\pdata{0}{}{}$ captures encoded natural-image statistics, and the measurement process is modeled by the Gaussian likelihood
$ \gauss(\obs; \mathcal{A}(\bx), \sigma_y^2 \mathbf{I}_m),$ 
where $\sigma_y$ governs the trade-off between measurement fidelity and regularization. The posterior distribution over encoded images is then given by
$ \pi_0(\bx|\obs) \propto \ell_0(\obs|\bx)\, \pdata{0}{}{\bx}, $ 
where $\ell_0(\obs|\bx) = \gauss(\obs; \mathcal{A}(\bx), \sigma_y^2 \mathbf{I}_m)$.

\paragraph{Posterior sampling.}
The posterior $\pi_0(\cdot|\obs)$ defines a family of conditional time-marginals by pushing samples through the same forward noising kernel as in \eqref{eq:forward_process}. For any $t \in [0,1]$, let
\begin{equation*}
    \pi_t(\bx_t|\obs) = \int p_t(\bx_t|\bx_0)\,\pi_0(\bx_0|\obs)\,\mathrm{d}\bx_0, \qquad p_t(\bx_t|\bx_0)=\gauss(\bx_t;\alpha_t\bx_0,\sigma_t^2\Id).
\end{equation*}
Equivalently, this conditional marginal can be viewed as the unconditional diffusion marginal reweighted by a propagated likelihood,
\begin{equation}
    \label{eq:posterior_time_marginal_reweighting}
    \pi_t(\bx_t|\obs) \propto \ell_t(\obs|\bx_t)\,\pdata{t}{}{\bx_t},
    \qquad \ell_t(\obs|\bx_t) = \int \ell_0(\obs|\bx_0)\,p_{0|t}(\bx_0|\bx_t)\,\mathrm{d}\bx_0,
\end{equation}
where $p_{0|t}(\bx_0|\bx_t) \propto \pdata{0}{}{\bx_0}\,p_t(\bx_t|\bx_0)$ is the backward kernel induced by the unconditional diffusion prior. This form is useful because it isolates the propagated-likelihood term that posterior-sampling methods must approximate. It also shows that exact posterior sampling can be interpreted as transporting samples along the conditional path $(\pi_t(\cdot|\obs))_{t\in[0,1]}$, rather than along the unconditional path $(p_t)_{t\in[0,1]}$. 

The key insight is to modify the reverse sampling process on-the-fly using likelihood information that steers generated samples toward the observational constraints encoded in $\ell_0(\obs|\cdot)$ while keeping them plausible under the prior $p_0$. This approach treats inverse problems through the Bayesian lens presented in the previous section; we seek to perform approximate sampling from a posterior distribution that conditions on the observation $\obs$. Achieving this with a diffusion model requires access to the denoiser conditioned on the observation $\obs$. Following~\citep[Equations 2.15, 2.17]{daras2024survey}, this posterior denoiser can be written as
\begin{equation}
    \label{eq:posterior-denoiser}
\pE[X_0 | X_t, \obs] = \pE[X_0 | X_t] + \acp_t^{-1} \sigma^2 _t \nabla_{\bx_t} \log \ell_t(\obs|\bx_t), 
\end{equation}
with $\ell_t$ defined in \eqref{eq:posterior_time_marginal_reweighting}. The correction term added to the unconditional denoiser is referred to as \emph{guidance}.


Since the pretrained denoiser already approximates $\pE[X_0\mid X_t]$, the remaining challenge in \eqref{eq:posterior-denoiser} is the propagated-likelihood score $\nabla_{\bx_t}\log \ell_t(\obs|\bx_t)$, which is generally intractable. One practical approximation replaces the backward kernel in \eqref{eq:posterior_time_marginal_reweighting} with a point mass at the denoiser prediction, yielding $\ell_t(\obs|\bx_t)\approx \ell_0(\obs|\denoiser{t}{}{\bx_t}[\param])$. When applied at every reverse step, this approximation requires backpropagation through $\denoiser{t}{}{\bx_t}[\param]$, incurring repeated denoiser VJPs. The time-marginal view therefore suggests two opportunities for acceleration: approximate the conditional path at an intermediate marginal $\pi_{t_*}(\cdot|\obs)$ rather than starting from pure noise, and concentrate data-consistency corrections at selected noise levels rather than applying dense guidance along the entire trajectory.

\begin{figure}[t]
    \centering
    \includegraphics[width=\linewidth]{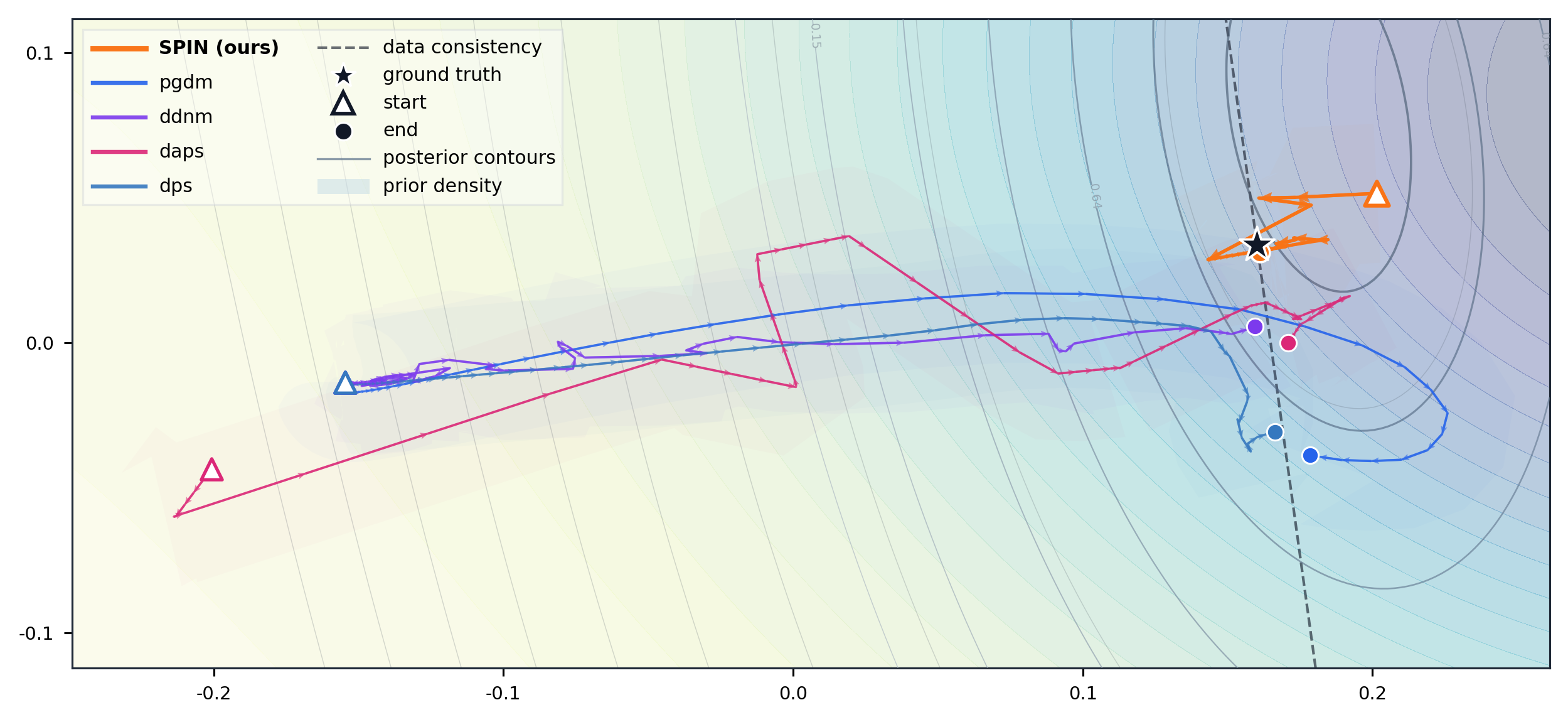}
    \caption{Mean trajectories of posterior samplers on a 2D inverse problem, averaged over $300$ independent runs. \textsc{Spin} first builds a warm start near the data-consistent region, then uses its scheduler to take shorter guided steps toward the posterior mode; the baselines instead follow longer diffusion trajectories from Gaussian noise, leading to more variable paths.}
    \label{fig:real_sampler_trajectory_landscape}
    \vspace{-15pt}
\end{figure}

\section{Method}
\vspace{-5pt}
\label{sec:method}

\paragraph{Motivation.}
We view posterior sampling as a split inference problem: the diffusion model performs prior refinement through ordinary denoising transitions, while data consistency is imposed by explicit optimization in image space. \textsc{Spin} uses this split to avoid spending the full sampling budget on a long reverse trajectory from near-pure noise. It first constructs a warm-start state at an intermediate timestep $t_*$, as illustrated in Figure~\ref{fig:real_sampler_trajectory_landscape}, then runs a truncated reverse trajectory from $t_*$ to $0$ with data-consistency guidance only at a sparse set of scheduled timesteps.

Standard posterior samplers begin from the terminal Gaussian prior and must traverse the full reverse path, which can be expensive. \textsc{Spin} instead uses an initial reconstruction to move the sampler near the data-consistent region at $\tk{*}$, then spends its denoising budget only on the shorter segment $\tk{*}\to0$. The method is therefore useful when the error of constructing the warm start is smaller than the reverse-transition error that would have accumulated over the skipped segment $1\to\tk{*}$. Lemma~\ref{lem:warm_start_truncation} formalizes this trade-off. 

\begin{warmstartlemmabox}
\begin{lemma}[Warm-Start Truncation]
\label{lem:warm_start_truncation}
Let $\mu_{\tk{*}}=\law(\bx^{(N)}_{\tk{*}})$ be the warm-start output law, let $\pi_{\tk{*}}(\cdot\mid\obs)$ denote the target posterior marginal at noise level $\tk{*}$, and let $W_1$ denote the 1-Wasserstein distance between probability laws. Define
\begin{equation}
    \bm{\epsilon_*} \coloneqq W_1\left(\mu_{\tk{*}},\pi_{\tk{*}}(\cdot\mid\obs)\right)
\end{equation}
as the warm-start error. Under a uniform $W_1$-Lipschitz assumption on the approximate reverse kernels (Assumption~\ref{ass:warm_start_stability} in Appendix~\ref{sec:warm_start_error}), let $L$ be the cumulative Lipschitz constant of those kernels along the reverse segment $\tk{*}\to0$, and let $\bm{e_{tr}}$ and $\bm{e_{sk}}$ be the local transition errors accumulated along the truncated reverse segment $\tk{*}\to0$ and the skipped reverse segment $1\to\tk{*}$, respectively, each propagated to time~$0$. Let $\mu_0$ be the output law after the truncated reverse segment. Then
\begin{equation}
    W_1\left(\mu_0,\pi_0(\cdot\mid\obs)\right)
    \le
    L\cdot\bm{\epsilon_*} + \bm{e_{tr}}.
\end{equation}
Running the same approximate reverse sampler from the terminal Gaussian marginal $\pi_1(\cdot\mid\obs)=\gauss(\zero,\Id)$ gives a full-sampler law $\nu_0$ satisfying
\begin{equation}
    W_1\left(\nu_0,\pi_0(\cdot\mid\obs)\right)
    \le
    \bm{e_{tr}} + \bm{e_{sk}}.
\end{equation}
Thus warm starting improves the bound when the propagated warm-start error is smaller than the transition error avoided by skipping the high-noise segment. The warm-started sampler has the tighter upper bound whenever
\begin{equation}
    L\cdot\bm{\epsilon_*} < \bm{e_{sk}}.
\end{equation}
In particular, an exact warm start at $\tk{*}$ avoids the entire accumulated transition error over the skipped reverse segment $1\to\tk{*}$. Proof is given in Appendix~\ref{sec:warm_start_error}.
\end{lemma}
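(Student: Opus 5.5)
The plan is a standard telescoping (coupling) argument over the discrete reverse grid $1=t_K>\dots>t_{k_*}=\tk{*}>\dots>t_0=0$. I will write the approximate reverse kernels as Markov operators $Q_k$ mapping laws at $t_{k+1}$ to laws at $t_k$. Let $Q_{a\to b}$ denote their composition between two grid indices. Assumption~\ref{ass:warm_start_stability} will be used in the form $W_1(\mu Q_k,\nu Q_k)\le L_k W_1(\mu,\nu)$, so that $Q_{k_*\to0}$ is $L$-Lipschitz in $W_1$ with $L=\prod_{k<k_*}L_k$. The local errors are defined against the exact posterior path by $\delta_k\coloneqq W_1\bigl(\pi_{t_{k+1}}(\cdot\mid\obs)Q_k,\ \pi_{t_k}(\cdot\mid\obs)\bigr)$. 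The propagated errors are then $\bm{e_{tr}}=\sum_{k<k_*}\bigl(\prod_{j<k}L_j\bigr)\delta_k$ and $\bm{e_{sk}}=\sum_{k\ge k_*}\bigl(\prod_{j<k}L_j\bigr)\delta_k$. Fixing these definitions precisely, consistent with the statement's phrase ``each propagated to time~$0$'', is the first step.

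For the warm-started bound I write $\mu_0=\mu_{\tk{*}}Q_{k_*\to0}$ and insert the intermediate laws $\pi_{t_{k}}(\cdot\mid\obs)Q_{k\to0}$. The triangle inequality gives
\begin{equation*}
W_1(\mu_0,\pi_0)\le W_1\bigl(\mu_{\tk{*}}Q_{k_*\to0},\pi_{\tk{*}}Q_{k_*\to0}\bigr)+\sum_{k<k_*}W_1\bigl(\pi_{t_{k+1}}Q_k Q_{k\to0},\pi_{t_k}Q_{k\to0}\bigr).
\end{equation*}
The first term is at most $L\bm{\epsilon_*}$ by the Lipschitz property. Each summand is at most $\bigl(\prod_{j<k}L_j\bigr)\delta_k$ by applying the Lipschitz property to $Q_{k\to0}$. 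Summing the summands gives $\bm{e_{tr}}$.

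For the full sampler $\nu_0=\pi_1 Q_{K\to0}$, I run the same telescope over all $k$ from $K-1$ down to $0$. The initial mismatch vanishes because $\pi_1(\cdot\mid\obs)=\gauss(\zero,\Id)$ is exactly the starting law; this identity needs $\alpha_1=0$, so that $p_1(\cdot\mid\bx_0)$ does not depend on $\bx_0$. Splitting the resulting sum at $k_*$ gives $\bm{e_{tr}}+\bm{e_{sk}}$.

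The comparison statement then follows immediately: $L\bm{\epsilon_*}+\bm{e_{tr}}<\bm{e_{tr}}+\bm{e_{sk}}$ if and only if $L\bm{\epsilon_*}<\bm{e_{sk}}$. When $\bm{\epsilon_*}=0$, the warm-started bound drops the entire $\bm{e_{sk}}$ term.

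The main obstacle is bookkeeping rather than analysis. The quantities $L$, $\bm{e_{tr}}$ and $\bm{e_{sk}}$ must be defined so that they are literally the constants produced by the telescope. In particular, the propagation factors $\prod_{j<k}L_j$ for the skipped segment must pass through the truncated segment's kernels. The assumption must also cover every kernel, both guided and unguided, including the sparse data-consistency steps.
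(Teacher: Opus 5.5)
Your proposal is correct and matches the paper's proof. The paper uses exactly your definitions, $\delta_i$, $G_i=\prod_{r<i}L_r$, $L=G_J$, $\bm{e_{tr}}$ and $\bm{e_{sk}}$, and gets both bounds by unrolling the one-step recursion $e_i\le L_i e_{i+1}+\delta_i$, which is simply the step-by-step form of your hybrid-law telescope (including the remark that $\pi_1(\cdot\mid\obs)=\gauss(\zero,\Id)$ needs $\alpha_1=0$).
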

\end{warmstartlemmabox}

\paragraph{Where baselines fail.}
Figure~\ref{fig:real_sampler_trajectory_landscape} shows that the warm-start--truncate split is a structural choice rather than a numerical tweak. Even with an \emph{exact} score (Appendix~\ref{sec:fig2-extended}), the baselines fall into one of two failure modes: drifting along the data-consistency line into low-density posterior regions, or being pulled toward a posterior mode at the cost of measurement fit. We refer to the fixed-$t_*$ warm-start construction as Phase~1 and the subsequent truncated guided reverse process as Phase~2. \textsc{Spin} escapes both because the conflict is resolved \emph{at $\tk{*}$, not along the reverse path}: Phase~1 absorbs data consistency where the prior is still broad, leaving Phase~2 only a short, sparsely guided segment. The condition $L\bm{\epsilon_*}<\bm{e_{sk}}$ of Lemma~\ref{lem:warm_start_truncation} is therefore realized empirically \emph{in the absence of model-approximation noise} --- the advantage is intrinsic to the trajectory architecture, not an artifact of score quality.

\begin{wrapfigure}[15]{r}{0.56\linewidth}
\vspace{-2em}
\refstepcounter{algorithm}\label{alg:spin-highlevel}
\begin{tcolorbox}[
    colback=orange!5!white,
    colframe=orange!5!white,
    boxrule=0pt,
    arc=3pt,
    left=2pt,
    right=2pt,
    top=3pt,
    bottom=3pt,
    before upper=\setlength{\parindent}{0pt}]
\rule{\linewidth}{0.8pt}\\[2pt]
{\small\bfseries Algorithm~\thealgorithm\enspace \textsc{Spin} --- \texttt{Pseudocode}}\\[-2pt]
\rule{\linewidth}{0.3pt}
\small
{\algrenewcommand\alglinenumber[1]{\scriptsize\textcolor{black!40}{#1}}%
\begin{algorithmic}[1]
\renewcommand{\algorithmiccomment}[1]{\hfill#1}
\For{$N$} \Comment{\textcolor{orange}{\scriptsize\texttt{// warm-start; fresh-noise proxy}}}
    \State $\hat{\bx}_0 \gets \denoiser{\bm{t_*}}{}{\bx_{\bm{t_*}}}[\param]$
    \State $\bx_0^* \gets \textsf{Optimize}\big(\hat{\bx}_0 \mapsto \|\obs - \opA(\hat{\bx}_0)\|^2\big)$
    \State $\bx_{\bm{t_*}} = \alpha_{\bm{t_*}}\bx_0^* + \sigma_{\bm{t_*}}\boldsymbol{\epsilon},\quad \boldsymbol{\epsilon}\sim\gauss(\mathbf{0},\Id)$
\EndFor
\For{$k = M, \ldots, 1$} \Comment{\textcolor{orange}{\scriptsize\texttt{// guided reverse}}}
    \State $\tilde{\bx}_{k} \sim \pdata{k\tbar k+1}{\bx_{k+1}}{}[\param]$
    \State $\hat{\bx}_0 \gets \denoiser{k}{}{\tilde{\bx}_{k}}[\param]$
    \State $\bx^*_0 \gets \textsf{Optimize}\big(\bx \mapsto \|\obs - \opA(\bx)\|^2 + \lambda\|\bx - \hat{\bx}_0 \|^2\bigl)$
    \State $\bx_{k+1} = \alpha_{k+1}\bx^*_0 + \sigma_{k+1}\alpha_{k+1}\epsilon_{k}$
    \State $\bx_{k} \gets \textsf{DDIM} \bigl(\bx_{k+1},\;k+1 \xrightarrow{n} k\bigr)$
\EndFor
\State \Return $\denoiser{0}{}{\bx_{1}}[\param]$
\end{algorithmic}
}
\rule{\linewidth}{0.8pt}
\end{tcolorbox}
\end{wrapfigure}

\vspace{-5pt}
\paragraph{Algorithm.}
The remaining part of this section describes how \textsc{Spin} implements the two quantities in this bound: Phase~1 targets $\pi_{\tk{*}}(\cdot\mid\obs)$ to make $\bm{\epsilon_*}$ small, while Phase~2 realizes the truncated reverse sampler whose accumulated error is $\bm{e_{tr}}$. Algorithm~\ref{alg:spin-highlevel} shows simplified pseudocode. The methods described in this section assume pixel space, while the full latent-space algorithm is given in Appendix~\ref{sec:algorithm}. 

\vspace{-5pt}
\paragraph{Warm-start phase.} This phase tries to produce a noisy image-space state that is close to the posterior marginal $\pi_{\tk{*}}(\cdot\mid\obs)$. In the ideal case, we would first draw a clean posterior sample $X^\obs_0 \sim \pi_0(\cdot\mid\obs)$ and then add forward noise,
$X^\obs_{\tk{*}} = \alpha_{\tk{*}} X^\obs_0 + \sigma_{\tk{*}} \epsilon$ with $\epsilon\sim\gauss(\zero,\Id)$.
This is not available in practice, because drawing $X^\obs_0$ is the original posterior sampling problem. Instead, we start from a simple observation-based initializer $\bx_{\mathrm{init}}(\obs)$, noise it to timestep $\tk{*}$:
\begin{equation*}
    \bx^{0}_{\tk{*}} \sim \gauss(\alpha_{\tk{*}}\bx_{\mathrm{init}}(\obs), \sigma^2_{\tk{*}}\Id).
\end{equation*}
For $N$ iterations at fixed timestep $\tk{*}$, we denoise $\bx^{k}_{\tk{*}}$ to $\hat{\bx}^{k}_0$, and solve the unregularized data-consistency problem
$\bx_0^k \in \argmin_{\bx}\; \|\obs-\opA(\bx)\|^2$,
initialized at $\hat{\bx}^{k}_0$. Algorithm~\ref{alg:spin-highlevel} displays the return to $\tk{*}$ as the familiar fresh-noise update $\bx_{\tk{*}}=\alpha_{\tk{*}}\bx^*_0+\sigma_{\tk{*}}\epsilon$ to keep the high-level loop operational. In the method, we use the conservative noise-preserving version of this step: it reuses the current noise estimate and injects only residual fresh noise. Concretely, we renoise the result $\bx^k_0$ back to $\tk{*}$ to obtain the next warm-start iterate,
\begin{equation}
    \label{eq:phase1_renoise}
    \bx^{k+1}_{\tk{*}} \sim \gauss(\alpha_{\tk{*}}\bmu^{k}, \sigma^2_{\tk{*}}\sigma^{k}),
\end{equation}
where $\bmu^{k} := \alpha_{\tk{*}}\bx^{k}_0 + \sigma_{\tk{*}}\hat{\bx}^{k}_0 + \sigma_{\tk{*}}\epsilon^{k}, \qquad \sigma^{k} := \sigma^2_{\tk{*}}\Id, \qquad \epsilon^{k} := (\bx^{k}_{\tk{*}}-\alpha_{\tk{*}}\hat{\bx}^{k}_0)/\sigma_{\tk{*}}$.
The repeated denoising/re-noising loop supplies the prior stabilization, so Phase~1 can use this unanchored correction. Its output $\bx^{(N)}_{\tk{*}}$ initializes the guided reverse process.
\vspace{-5pt}
\paragraph{Sparse guided denoising phase.}
Given the warm start, the second phase runs only the shortened reverse trajectory. Let $\{\tk{1},\dotsc,\tk{M}\}$ be the guidance timesteps, with $\tk{M}=\tk{*}$ and $\tk{1}\approx 0$. The scheduler is the rule that constructs this ordered grid from the fixed guidance budget $M$. We assign a schedule density over the remaining noise interval and place the $M$ guidance points according to its cumulative mass: uniform density gives equally spaced corrections, while our default Gaussian density, with center $\mu_G$ and width $\sigma_G$, clusters corrections in the intermediate-noise region. Thus $M$ controls how often we solve the data-consistency objective, and the schedule parameters control where those solves occur; all intermediate transitions between scheduled points are ordinary DDIM denoising steps without optimization. This sparse schedule is a key design choice: instead of applying data consistency at every reverse step, \textsc{Spin} allocates the guidance budget to timesteps where the denoiser prediction is informative but still has enough remaining trajectory to absorb optimization artifacts. Starting from $\bx^\obs_{\tk{M}}=\bx^{(N)}_{\tk{*}}$, each guidance step first probes the lower-noise state by sampling
$\tilde{\bx}_{\tk{k}}\sim \pdata{\tk{k}\tbar \tk{k+1}}{\bx^\obs_{\tk{k+1}}}{}[\param]$
and denoising it to $\hat{\bx}_0=\denoiser{\tk{k}}{}{\tilde{\bx}_{\tk{k}}}[\param]$. We then solve the anchored data-consistency problem
\begin{equation*}
    \bx^*_0 \in \argmin_{\bx}\; \|\obs-\opA(\bx)\|^2 + \lambda\|\bx-\hat{\bx}_0\|^2
\end{equation*}
initialized at $\hat{\bx}_0$. Algorithm~\ref{alg:spin-highlevel} writes the return to $\tk{k+1}$ as a deterministic mean update using the current noise estimate $\epsilon_{\tk{k}}$. In practice, we use a conservative stochastic version of this update, which keeps this preserved-noise mean and adds only residual fresh noise. Specifically, we renoise the result $\bx^*_0$ back to $\tk{k+1}$ to obtain the next guided denoising iterate,
\begin{equation}
    \label{eq:phase2_renoise}
    \tilde{\bx}_{\tk{k+1}} \sim \gauss(\bmu_{\tk{k+1}}, \sigma^2_{\tk{k+1}}\sigma'_{\tk{k+1}}),
\end{equation}
with $\bmu_{\tk{k+1}} := \alpha_{\tk{k+1}}\bx^*_0 + \sigma_{\tk{k+1}}\alpha_{\tk{k+1}}\epsilon_{\tk{k}}, \qquad \sigma'_{\tk{k+1}} := \sigma^2_{\tk{k+1}}\Id, \qquad \epsilon_{\tk{k}} := (\tilde{\bx}_{\tk{k}}-\alpha_{\tk{k}}\hat{\bx}_0)/\sigma_{\tk{k}}$.
Finally, we run $n$ steps in DDIM style from $\tk{k+1}$ to $\tk{k}$ to obtain the accepted state $\bx^\obs_{\tk{k}}$. Thus the lower-noise sample acts only as a guidance probe: the method corrects the clean component, re-noises it with preserved noise, and lets the diffusion dynamics absorb any artifacts before the next scheduled correction. Appendix~\ref{sec:hyperparameters} reports the task-specific choices of $t_*$, $N$, $M$, and optimizer budgets.

\begin{callout}
\textbf{Takeaway:} \textsc{Spin} \contribnum{1} first builds an iterative warm start at $\bm{t_*}$, \contribnum{2} then runs a truncated reverse trajectory $\bm{t_*\to0}$ with sparse data-consistency corrections. These corrections avoid propagated-likelihood scores and denoiser VJPs. Conservative re-noising returns corrected estimates to the diffusion trajectory while preserving accumulated information.
\end{callout}

\vspace{-5pt}
\section{Related Work}
\vspace{-5pt}
\begin{table}[H]
    \centering
    \caption{Comparison of training-free diffusion/consistency/flow inverse-problem solvers. VJP: backprop through denoiser $\denoiser{\cdot}{}{\cdot}$ and encoder/decoder $\encoder$/$\decoder$ ($^\dagger$latent models only). DC Guidance Density: frequency of data-consistency or likelihood guidance application.}
    \label{tab:method_comparison}
    \fontsize{7.3}{7.5}\selectfont
    \setlength{\tabcolsep}{2pt}
    \renewcommand{\arraystretch}{0.88}
\resizebox{\textwidth}{!}{%
\begin{tabular}{@{}
	        >{\raggedright\arraybackslash}p{2.5cm}
	        >{\raggedright\arraybackslash}p{5.5cm}
	        >{\raggedright\arraybackslash\leavevmode}p{2.5cm}
	        >{\centering\arraybackslash\leavevmode}p{1.2cm}
	        >{\centering\arraybackslash\leavevmode}p{1.4cm}
	        >{\raggedright\arraybackslash\leavevmode}p{2.0cm}
	        @{}}
	    \toprule
	    \textbf{Method} & \textbf{Inference Objective} & \textbf{\shortstack{Guidance Density}} & \textbf{\shortstack{VJP\\$\denoiser{\cdot}{}{\cdot}[\param]$?}} & \textbf{\shortstack{VJP\\$\encoder$/$\decoder$?}} & \textbf{\shortstack{Assumptions\\on $\mathcal{A}$}} 
	    \\
	    \midrule
	    \textsc{DPS}, \textsc{PGDM}&
	    Score-based posterior approximation &
	    {\color{red}Dense} &
	    {\color{red}Yes} & {\color{red}Yes}$^\dagger$ &
	    {\color{Green}General} / known $h^\dagger$ 
	    \\
	    \textsc{DAPS}  &
	    Decoupled posterior annealing &
	    {\color{orange}Moderate} + inner MCMC &
	    {\color{Green}No} & {\color{orange}Often}$^\dagger$ &
	    {\color{Green}General} 
	    \\
	    \textsc{RED-Diff}, \textsc{DDS} &
	    Variational / Krylov data consistency &
	    {\color{red}Dense} / iterative solve &
	    {\color{Green}No} & {\color{Green}No} &
	    {\color{orange}General / linear} 
	    \\
	    \textsc{DiffPIR}, \textsc{PnP-DM} &
	    Plug-and-play / proximal posterior &
	    {\color{red}Dense} + inner loops &
	    {\color{Green}No} & {\color{Green}No} &
	    {\color{Green}General} 
	    \\
	    \textsc{PSLD}, \textsc{ReSample}, \newline \textsc{P2L}, \textsc{TReg}, \textsc{LATINO-PRO} &
	    Latent/text/consistency-model inverse solvers &
	    {\color{orange}Moderate--dense} + inner solve &
	    {\color{orange}Varies} & {\color{orange}Varies}$^\dagger$ &
	    {\color{orange}Varies} 
	    \\
	    \textsc{FlowDPS}, \textsc{PnP-Flow}, \newline\textsc{FLAIR} &
	    Flow / rectified-flow prior guidance &
	    {\color{red}Dense} / iterative solve &
	    {\color{orange}Varies} & {\color{orange}Often}$^\dagger$ &
	    {\color{orange}General / linear} 
	    \\
	    \textsc{CCDF} &
	    Forward-diffused warm initialization &
	    {\color{red}Dense} &
	    {\color{Green}No} & {\color{Green}No} &
	    {\color{orange}Linear + non-expansive} 
	    \\
	    \textsc{DDNM}, \textsc{DDRM} &
	    Subspace projection &
	    {\color{red}Dense} &
	    {\color{Green}No} & {\color{Green}No} &
	    {\color{red}Linear} + pseudo-inverse/SVD 
	    \\
	    \textsc{Spin} (ours) &
	    Sparse time-marginal guidance &
	    {\color{Green}Sparse} &
	    {\color{Green}No} & {\color{Green}No} &
	    {\color{Green}General} 
	    \\
	    \bottomrule
	    \end{tabular}%
}
\vspace{-10pt}
\end{table}

\textbf{Likelihood-score guidance with VJPs.}
Diffusion Posterior Sampling (DPS)~\citep{chung2022diffusion} and Pseudoinverse-Guided Diffusion Models (PGDM)~\citep{song2023pseudoinverse} approximate the intractable posterior score in \eqref{eq:posterior-denoiser} to guide reverse diffusion, but require vector--Jacobian products (VJPs) through the denoiser (and through the decoder for latent diffusion models), incurring large memory overhead and slow inference. Several works reduce this cost without using denoiser VJPs, e.g., decoupled posterior annealing (DAPS)~\citep{zhang2025improving}, variational reformulations (RED-Diff)~\citep{mardani2023variational}, and Krylov/CG data-consistency updates (DDS)~\citep{chung2024decomposed}. \textsc{Spin} likewise avoids denoiser VJPs but additionally avoids encoder/decoder VJPs and applies data consistency only at sparsely scheduled timesteps.

\textbf{Gradient-free and plug-and-play solvers.}
DDNM~\citep{wang2022zero} and DDRM~\citep{kawar2022denoising} avoid VJPs through closed-form projection updates, but are restricted to linear operators: DDNM requires an explicit pseudo-inverse, while DDRM uses an SVD-based spectral decomposition. Plug-and-play and RED methods use off-the-shelf denoisers as implicit priors~\citep{venkatakrishnan2013plugandplay,chan2016plugandplay,romano2017little}; DiffPIR~\citep{diffpir} and PnP-DM~\citep{wu2024principled} adapt the denoise/data-consistency decomposition to diffusion priors. \textsc{Spin} is closest to these alternating schemes, but uses only gradients through $\mathcal{A}$, applies data consistency sparsely, and warm-starts a truncated reverse trajectory rather than running dense conditional reverse diffusion.

\textbf{Warm starts.}
CCDF~\citep{chung2022comecloser} relies on an external pretrained initializer network and is tailored to task-specific conditional updates: it forward-diffuses an initial reconstruction and then runs dense conditional reverse diffusion, with contraction guarantees for non-expansive linear consistency maps. SDEdit~\citep{meng2021sdedit} denoises from a noisy guide image, but does not iteratively refine an inverse-problem warm start at a fixed intermediate noise level. \textsc{Spin} is a general inverse-problem solver that starts from a simple observation-based initialization, constructs the intermediate state by repeated fixed-$t_*$ denoise--optimize--re-noise steps, and then applies sparse scheduled guidance along a truncated DDIM trajectory without requiring an additional learned model.

Table~\ref{tab:method_comparison} summarizes the main differences in targets, gradient requirements, and transition designs. Additional discussion of latent inverse solvers (PSLD, ReSample, P2L, TReg, LATINO-PRO), other warm-start strategies (SDEdit, ILVR), and consistency / flow-based guidance (CM4IR, FlowDPS, PnP-Flow, FLAIR) is deferred to Appendix~\ref{sec:related_work_ext}.

\vspace{-5pt}
\section{Experiments}
\vspace{-5pt}
\label{sec:experiments}

\begin{table*}[t]
    \centering
    \caption[FFHQ LPIPS comparison]{Mean LPIPS for linear/nonlinear imaging tasks on the \textbf{\texttt{FFHQ}} datasets with $\sigma_y = 0.05$. Lower metrics are better. HDR denotes High Dynamic Range. \textcolor{RoyalBlue}{Blue} factors are relative to \textsc{Spin}; larger values mean greater time or memory savings.}
    \label{tab:ffhq}
    \fontsize{7.3}{8.5}\selectfont
    \setlength{\tabcolsep}{2.8pt}
    \begin{tabular}{lcccccccc}
    \toprule
    \multirow{2}{*}{} & \multicolumn{8}{c}{$\downarrow$ \textbf{LPIPS FFHQ}} \\
    \cmidrule(lr){2-9}
    \textbf{Task}                     & \multicolumn{1}{c}{\textsc{Spin}}       & \multicolumn{1}{c}{\textsc{DAPS}} 
          & \multicolumn{1}{c}{\textsc{Red-Diff}} 
           & \multicolumn{1}{c}{\textsc{PNP-DM}} 
            & \multicolumn{1}{c}{\textsc{DPS}} 
               & \multicolumn{1}{c}{\textsc{DDNM}} 
                & \multicolumn{1}{c}{\textsc{PGDM}} 
                  & \multicolumn{1}{c}{\textsc{Diffpir}} 
                   \\
    \midrule
    Gaussian Deblur                    & \tabhi{BurntOrange!40}{\tabval{0.17}{0.06}}    & \tabval{0.19}{0.06}  & \tabval{0.25}{0.07}  & \tabval{0.20}{0.07}  & \tabhi{BurntOrange}{\tabbest{0.16}{0.05}}  & \tabval{0.84}{0.04}  & \tabval{0.87}{0.14}  & --              \\
    Motion Deblur                      & \tabhi{BurntOrange}{\tabbest{0.13}{0.04}}    & \tabhi{BurntOrange!40}{\tabval{0.19}{0.06}}  & \tabval{0.20}{0.06}  & \tabval{0.21}{0.06}  & \tabval{0.21}{0.06}  & --               & --               & --              \\
    SR ($\times$4)                     & \tabhi{BurntOrange!40}{\tabval{0.20}{0.07}}    & \tabhi{BurntOrange}{\tabbest{0.19}{0.06}}  & \tabval{0.44}{0.08}  & \tabval{0.21}{0.06}  & \tabval{0.22}{0.07}  & \tabval{0.36}{0.08}  & \tabval{0.30}{0.07}  & --              \\
    SR ($\times$16)                    & \tabhi{BurntOrange}{\tabbest{0.35}{0.08}}    & \tabval{0.45}{0.10}  & \tabval{0.59}{0.08}  & \tabval{0.60}{0.15}  & \tabhi{BurntOrange!40}{\tabval{0.36}{0.08}}  & \tabval{0.52}{0.09}  & \tabval{0.42}{0.06}  & --              \\
    Box Inpainting                     & \tabhi{BurntOrange!40}{\tabval{0.14}{0.04}}    & \tabhi{BurntOrange}{\tabbest{0.12}{0.04}}  & \tabval{0.18}{0.05}  & \tabval{0.55}{0.07}  & \tabval{0.20}{0.08}  & \tabval{0.18}{0.05}  & \tabval{0.17}{0.04}  & \tabhi{BurntOrange!40}{\tabval{0.14}{0.04}} \\
    Half Inpainting                    & \tabhi{BurntOrange}{\tabbest{0.24}{0.06}}    & \tabhi{BurntOrange}{\tabbest{0.24}{0.06}}  & \tabval{0.29}{0.07}  & \tabval{0.60}{0.04}  & \tabval{0.26}{0.06}  & \tabval{0.28}{0.06}  & \tabhi{BurntOrange!40}{\tabval{0.25}{0.05}}  & \tabhi{BurntOrange!40}{\tabval{0.25}{0.05}} \\
    JPEG (QF=2)                        & \tabhi{BurntOrange}{\tabbest{0.16}{0.06}}    & \tabhi{BurntOrange!40}{\tabval{0.22}{0.07}}  & \tabval{0.32}{0.08}  & \tabval{0.27}{0.07}  & \tabval{0.28}{0.07}  & --               & --               & --              \\
    Phase Retrieval                    & \tabhi{BurntOrange!40}{\tabval{0.35}{0.24}}    & \tabhi{BurntOrange}{\tabbest{0.30}{0.25}}  & \tabhi{BurntOrange!40}{\tabval{0.35}{0.24}}  & \tabval{0.44}{0.23}  & \tabval{0.48}{0.18}  & --               & --               & --              \\
    HDR                                & \tabhi{BurntOrange!40}{\tabval{0.13}{0.05}}    & \tabhi{BurntOrange}{\tabbest{0.08}{0.05}}  & \tabval{0.19}{0.06}  & \tabval{0.18}{0.07}  & \tabval{0.64}{0.36}  & --               & --               & --              \\
    \midrule
    \textbf{Memory (MB)}               & \tabhi{BurntOrange}{\textbf{1983}}               & \factorcell{2095}{1.1}             & \colorfactorcell{BurntOrange!40}{1985}{1.0}             & \factorcell{1985}{1.0}             & \factorcell{3309}{1.7}             & \factorcell{2019}{1.0}             & \factorcell{3409}{1.7}             & \colorfactorcell{BurntOrange!40}{1985}{1.0}            \\
    \textbf{Runtime (sec)}             & \tabhi{BurntOrange}{\textbf{25}}                 & \factorcell{75}{3.0}               & \factorcell{50}{2.0}               & \factorcell{194}{7.8}              & \factorcell{105}{4.2}              & \colorfactorcell{BurntOrange!40}{47}{1.9}               & \factorcell{101}{4.0}              & \factorcell{50}{2.0}              \\
    \bottomrule
    \end{tabular}
    \vspace{-10pt}
\end{table*}

\textbf{Datasets, Models and Baselines.} We use the pixel-space \texttt{FFHQ} diffusion model of \citet{choi2021ilvr}, the \texttt{ImageNet} model of \citet{dhariwal2021diffusion}, and the \texttt{FFHQ} latent diffusion model of \citet{rombach2022high}. Following the protocol of \textsc{PnP-DM}~\citep{wu2024principled}, we evaluate on 100 validation samples per task and report mean $\pm$ std for every metric. We use the first 100 samples from \texttt{FFHQ}~\citep{karras2019style} and a random subset of \texttt{ImageNet}~\citep{ILSVRC15} to mitigate class bias. Pixel-space experiments use $256 \times 256$ resolution and we compare against \textsc{DPS}~\citep{chung2022diffusion}, \textsc{PGDM}~\citep{song2023pseudoinverse}, \textsc{DDNM}~\citep{wang2022zero}, \textsc{DiffPIR}~\citep{diffpir}, \textsc{Red-Diff}~\citep{mardani2023variational}, \textsc{DAPS}~\citep{zhang2025improving}, and \textsc{PnP-DM}~\citep{wu2024principled}; for latent-space, against \textsc{DAPS}~\citep{zhang2025improving}, \textsc{ReSample}~\citep{song2023resample}, \textsc{PSLD}~\citep{rout2023solving}, and \textsc{PnP-DM}~\citep{wu2024principled}. Implementation details and hyperparameters for each competitor are in Appendix~\ref{sec:competitors}.

\textbf{Inverse Tasks and Settings.} Following the protocol of \citet{janati2025mixture}, we evaluate at noise level $\sigma_y = 0.05$ on the following tasks. \textit{Linear:} super-resolution ($\times4$, $\times16$); inpainting with a $150 \times 150$ central box mask and a right-half mask; Gaussian and motion deblurring with $61 \times 61$ kernels~\citep{chung2022diffusion}. \textit{Nonlinear:} JPEG dequantization at quality factor $2\%$ via the differentiable operator of \citet{shin2017jpeg}; phase retrieval with $\times2$ oversampling; non-uniform deblurring~\citep{tran2021explore}; and high dynamic range (HDR) reconstruction~\citep{mardani2023variational}.

\begin{table*}[t]
    \centering
    \caption[ImageNet LPIPS comparison]{Mean LPIPS for linear/nonlinear imaging tasks on the \textbf{\texttt{ImageNet}} datasets with $\sigma_y = 0.05$. Lower metrics are better. HDR denotes High Dynamic Range. \textcolor{RoyalBlue}{Blue} factors are relative to \textsc{Spin}; larger values mean greater time or memory savings.}
    \label{tab:imagenet}

    \fontsize{7.3}{8.5}\selectfont
    \setlength{\tabcolsep}{2.8pt}
    \begin{tabular}{lcccccccc}
    \toprule
    \multirow{2}{*}{} & \multicolumn{8}{c}{$\downarrow$ \textbf{LPIPS ImageNet}} \\
    \cmidrule(lr){2-9}

    \textbf{Task}                     & \multicolumn{1}{c}{\textsc{Spin}}          & \multicolumn{1}{c}{\textsc{DAPS}} 
          & \multicolumn{1}{c}{\textsc{Red-Diff}} 
           & \multicolumn{1}{c}{\textsc{PNP-DM}} 
            & \multicolumn{1}{c}{\textsc{DPS}} 
               & \multicolumn{1}{c}{\textsc{DDNM}} 
                & \multicolumn{1}{c}{\textsc{PGDM}} 
                  & \multicolumn{1}{c}{\textsc{Diffpir}} 
                   \\
    \midrule
    Gaussian Deblur          & \tabhi{BurntOrange}{\tabbest{0.34}{0.10}}    & \tabval{0.41}{0.12}  & \tabval{0.44}{0.11}  & \tabval{0.47}{0.13}  & \tabhi{BurntOrange!40}{\tabval{0.35}{0.14}}  & \tabval{0.42}{0.13}  & \tabval{1.06}{0.10}  & --              \\
    Motion Deblur            & \tabhi{BurntOrange}{\tabbest{0.28}{0.09}}    & \tabval{0.40}{0.11}  & \tabhi{BurntOrange!40}{\tabval{0.35}{0.09}}  & \tabval{0.50}{0.13}  & \tabval{0.41}{0.13}  & --               & --               & --              \\
    SR ($\times$4)           & \tabhi{BurntOrange!40}{\tabval{0.39}{0.11}}    & \tabval{0.41}{0.12}  & \tabval{0.53}{0.13}  & \tabval{0.49}{0.14}  & \tabval{0.47}{0.14}  & \tabhi{BurntOrange}{\tabbest{0.27}{0.09}}  & \tabval{0.63}{0.12}  & --              \\
    SR ($\times$16)          & \tabval{0.68}{0.13}    & \tabval{0.80}{0.13}  & \tabval{0.82}{0.11}  & \tabval{0.88}{0.10}  & \tabhi{BurntOrange}{\tabbest{0.58}{0.10}}  & \tabval{0.68}{0.17}  & \tabhi{BurntOrange!40}{\tabval{0.59}{0.08}}  & --              \\
    Box Inpainting           & \tabval{0.32}{0.05}    & \tabhi{BurntOrange!40}{\tabval{0.31}{0.05}}  & \tabval{0.36}{0.06}  & \tabval{0.71}{0.14}  & \tabval{0.46}{0.13}  & \tabhi{BurntOrange}{\tabbest{0.28}{0.06}}  & \tabhi{BurntOrange}{\tabbest{0.28}{0.05}}  & \tabhi{BurntOrange}{\tabbest{0.28}{0.05}}              \\
    Half Inpainting          & \tabval{0.38}{0.08}    & \tabval{0.40}{0.05}  & \tabval{0.46}{0.05}  & \tabval{0.70}{0.12}  & \tabval{0.47}{0.10}  &  \tabval{0.38}{0.07}  & \tabhi{BurntOrange}{\tabbest{0.33}{0.05}}  & \tabhi{BurntOrange!40}{\tabval{0.34}{0.07}}              \\
    JPEG (QF=2)              & \tabhi{BurntOrange}{\tabbest{0.30}{0.11}}    & \tabhi{BurntOrange!40}{\tabval{0.44}{0.12}}  & \tabval{0.47}{0.12}  & \tabval{0.54}{0.13}  & \tabval{0.57}{0.12}  & --               & --               & --              \\
    Phase Retrieval          & \tabval{0.65}{0.14}    & \tabhi{BurntOrange}{\tabbest{0.60}{0.18}}  & \tabhi{BurntOrange!40}{\tabval{0.64}{0.12}}  & \tabval{0.73}{0.11}  & \tabval{0.67}{0.08}  & --               & --               & --              \\
    HDR & \tabhi{BurntOrange!40}{\tabval{0.17}{0.10}}    & \tabhi{BurntOrange}{\tabbest{0.14}{0.11}}  & \tabval{0.21}{0.13}  & \tabval{0.34}{0.18}  & \tabval{0.88}{0.16}  & --               & --               & --              \\
    \midrule
    \textbf{Memory (MB)}               & \tabhi{BurntOrange}{\textbf{4991}}              & \colorfactorcell{BurntOrange!40}{4993}{1.0}             & \factorcell{4995}{1.0}             & \factorcell{4993}{1.0}             & \factorcell{8701}{1.7}             & \factorcell{5031}{1.0}             & \factorcell{8741}{1.8}             & \factorcell{5007}{1.0}              \\
    \textbf{Run time (sec)}            & \tabhi{BurntOrange}{\textbf{83}}                & \factorcell{219}{2.6}              & \colorfactorcell{BurntOrange!40}{165}{2.0}              & \factorcell{636}{7.7}              & \factorcell{360}{4.3}              & \factorcell{296}{3.6}              & \factorcell{371}{4.5}              & \factorcell{169}{2.0}              \\

    \bottomrule
    \end{tabular}
    \vspace{-10pt}
\end{table*}

\begin{figure*}[t] 
    \centering
    \includegraphics[width=0.95\linewidth]{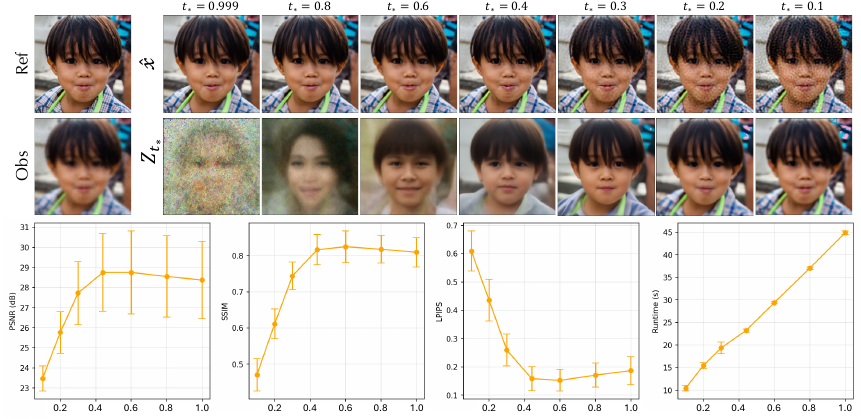}
   \caption{Impact of the initial timestep $t_*$ on \textsc{Spin}. The first row shows reconstruction quality for different $t_*$ values, while the second row shows the corresponding initial guesses. The plots report how quality metrics and runtime vary with $t_*$.}
    \label{fig:ablation}
    \vspace{-17pt}
\end{figure*} 

\subsection{Results}
\vspace{-5pt}

\textsc{Spin} delivers a strong quality--compute trade-off across all three experimental settings: it is competitive with the best-performing baselines in LPIPS while consistently reducing runtime and memory. On pixel-space \texttt{FFHQ}, \textsc{Spin} is best or second-best on most tasks; on \texttt{ImageNet} and latent \texttt{FFHQ}, it remains close to the strongest baselines while providing substantially lower wall-clock cost. The efficiency gains come from the warm start, which skips the high-noise reverse segment from $1$ to $t_*$, and from applying measurement-driven updates only at scheduled high-impact timesteps, avoiding backpropagation through the denoiser (and the encoder--decoder in latent-space models). Concretely, on \texttt{FFHQ} pixel-space, \textsc{Spin} requires only 1983 MB and 25 sec, compared to \textsc{DPS} (3309 MB, 105 sec) and \textsc{PnP-DM} (194 sec). The efficiency gains are largest in the latent setting, where \textsc{Spin} runs over \textcolor{RoyalBlue}{\textbf{$20\times$}} faster than \textsc{ReSample} and over \textcolor{RoyalBlue}{\textbf{$50\times$}} faster than \textsc{DAPS}, while avoiding denoiser and encoder--decoder VJPs.

Tables \ref{tab:ffhq}, \ref{tab:ffhq_latent}, and \ref{tab:imagenet} together with Figure \ref{fig:main_results} provide the quantitative and qualitative comparison on FFHQ and ImageNet; SSIM and PSNR results (Tables \ref{tab:imagenet_ssim}, \ref{tab:imagenet_psnr}) and additional samples are reported in Appendix \ref{sec:additional-experiments}. We report LPIPS~\citep{zhang2018unreasonable} as our primary metric, with PSNR and SSIM provided in the supplementary material since they perform pixel-wise comparisons and favor overly smooth images (see \citet{janati2025mixture}, Appendix B.6).
The \textbf{best performing method} in each row is highlighted as \colorbox{BurntOrange}{\phantom{XX}} and the \textbf{$2^{\text{nd}}$ best} as \colorbox{BurntOrange!40}{\phantom{XX}}; '--' denotes a baseline that is not applicable or not working on the given task.

\textbf{Hyperparameters.} For the warm-start we set the initial timestep $t_*$ depending on the task family: deblurring tasks use $t_* \in [0.44, 0.50]$, super-resolution uses $t_* \in [0.50, 0.60]$, and inpainting uses larger values $t_* \in [0.70, 0.80]$. We use $N=10$ warm-start iterations and $\eta_{\text{init}} = 10^{-4}$ as the default pixel-space setting, with each iteration performing $N_{\texttt{OPT}}^{\texttt{int}} = 50$ gradient steps on the observation likelihood; more details on the settings are reported in Appendix~\ref{sec:hyperparameters}.

For the guided denoising phase (Phase 2), our default pixel-space setting uses $M=30$ scheduled guidance timesteps, a Gaussian schedule, $\eta_{\text{main}} = 10^{-3}$, and $\lambda=0$ in the pixel-space objective. We use $G_{\texttt{OPT}}=50$ as a representative guidance budget, with easier or harder tasks adjusting only $G_{\texttt{OPT}}$ and, in a few cases, $M$; the full task-specific values are reported in Appendix~\ref{sec:hyperparameters}.

\begin{table*}[t]
    \centering
    \begin{minipage}[t]{0.60\textwidth}%
        \vspace{0pt}
        \centering
        \begingroup
    \fontsize{6}{6.0}\selectfont
    \setlength{\tabcolsep}{1.2pt}
    \def\latentnumfont{\fontsize{6.45}{6.2}\selectfont}
    \resizebox{\linewidth}{!}{%
    \begin{tabular}{l>{\latentnumfont}c>{\latentnumfont}c>{\latentnumfont}c>{\latentnumfont}c>{\latentnumfont}c}
    \toprule
    \multirow{2}{*}{} & \multicolumn{4}{c}{$\downarrow$\textbf{LPIPS FFHQ Latent}} \\
    \cmidrule(lr){2-6}
    \text{Task}                      & \multicolumn{1}{c}{\textsc{Spin}}                              & \multicolumn{1}{c}{\textsc{ReSample} 
                              }                               & \multicolumn{1}{c}{\textsc{PSLD} 
                                             }                & \multicolumn{1}{c}{\textsc{DAPS} 
                                                                 }                    & \multicolumn{1}{c}{\textsc{PNP-DM} 
                                                                                     }    \\
    \midrule
    Gauss. Deblur                  & \tabhi{orange!40}{\tabval{0.18}{0.02}}                & \tabhi{BurntOrange}{\tabbest{0.16}{0.04}}          & \tabval{0.59}{0.03}                        & \tabval{0.32}{0.07}                            & \tabval{0.32}{0.07}            \\
    Mot. Deblur                    & \tabhi{orange!40}{\tabval{0.30}{0.05}}               & \tabhi{BurntOrange}{\tabbest{0.20}{0.06}}          & \tabval{0.70}{0.02}                        & \tabval{0.36}{0.07}                            & \tabval{0.36}{0.08}            \\
    SR ($\times$4)                   & \tabval{0.20}{0.07}              & \tabhi{orange!40}{\tabval{0.22}{0.05}}            & \tabhi{BurntOrange}{\tabbest{0.21}{0.03}}  & \tabval{0.28}{0.06}                            & \tabval{0.40}{0.07}            \\
    SR ($\times$16)                  & \tabval{0.45}{0.10}                                      & \tabhi{orange!40}{\tabval{0.38}{0.12}}            & \tabhi{BurntOrange}{\tabbest{0.36}{0.05}}  & \tabval{0.52}{0.09}                            & \tabval{0.71}{0.11}           \\
    Box Inp.                   & \tabhi{BurntOrange}{\tabbest{0.19}{0.06}}               & \tabhi{orange!40}{\tabval{0.22}{0.04}}            & \tabval{0.27}{0.13}                        & \tabval{0.37}{0.05}                            & \tabval{0.31}{0.12}            \\
    Half Inp.                  & \tabhi{BurntOrange}{\tabbest{0.30}{0.04}}              & \tabhi{orange!40}{\tabval{0.30}{0.03}}          & \tabval{0.32}{0.02}   & \tabval{0.49}{0.01}                            & \tabval{0.44}{0.08}          \\
    JPEG (QF=2)                      & \tabval{0.34}{0.10}           & \tabhi{BurntOrange}{\tabbest{0.26}{0.05}}          & --                           & \tabhi{orange!40}{\tabval{0.32}{0.06}}       & \tabval{0.36}{0.03}              \\
    Phase Retr.                  & \tabval{0.55}{0.07}           & \tabhi{orange!40}{\tabval{0.39}{0.17}}            & --                           & \tabhi{BurntOrange}{\tabbest{0.25}{0.21}}   & \tabval{0.50}{0.14}           \\
    HDR & \tabval{0.27}{0.04}            & \tabhi{BurntOrange}{\tabbest{0.12}{0.04}}          & --                           & \tabval{0.24}{0.09}       &\tabhi{orange!40}{\tabval{0.24}{0.05}}            \\
    \midrule
    \textbf{Memory (MB)}             & \tabhi{BurntOrange}{\textbf{4525}}             & \widefactorcell{6238}{1.4}                                   & \widefactorcell{6919}{1.5}                             & \widefactorcell{5885}{1.3}                             & \widefactorcell{5885}{1.3}             \\
    \textbf{Runtime (sec)}           & \tabhi{BurntOrange}{\textbf{24}}               & \widefactorcell{509}{21.2}                                    & \widefactorcell{244}{10.2}                             & \widefactorcell{1254}{52.3}                             & \widefactorcell{1323}{55.1}             \\
    \bottomrule
    \end{tabular}%
    }
\par\endgroup

        \captionof{table}[FFHQ latent LPIPS comparison]{Mean LPIPS for linear/nonlinear imaging tasks on the \textbf{\texttt{FFHQ}} datasets with LDM prior and $\sigma_y = 0.05$. \textcolor{RoyalBlue}{Blue} factors are relative to \textsc{Spin}; larger values mean greater time or memory savings.}
        \label{tab:ffhq_latent}
    \end{minipage}%
    \hspace{0.005\textwidth}%
    \begin{minipage}[t]{0.385\textwidth}%
        \vspace{0pt}
        \centering
        \begingroup
    \fontsize{8.4}{8}\selectfont
    \setlength{\tabcolsep}{3.2pt}
    \def\schedulenumfont{\fontsize{8.9}{8.3}\selectfont}
    \resizebox{\linewidth}{!}{%
    \begin{tabular}{l>{\schedulenumfont}c>{\schedulenumfont}c>{\schedulenumfont}c}
    \toprule
    Schedule & \multicolumn{1}{c}{\textsc{Deblur}} & \multicolumn{1}{c}{\textsc{Motion Deblur}} & \multicolumn{1}{c}{\textsc{SR4}} \\
    \midrule
    Uniform         & \tabval{0.16}{0.05} & \tabval{0.14}{0.04} & \tabhi{BurntOrange!40}{\tabval{0.15}{0.04}} \\
    Linear          & \tabval{0.18}{0.05} & \tabval{0.19}{0.06} & \tabval{0.18}{0.05} \\
    Polynomial      & \tabval{0.16}{0.04} & \tabval{0.14}{0.04} & \tabval{0.21}{0.05} \\
    Exponential     & \tabhi{BurntOrange!40}{\tabval{0.15}{0.04}} & \tabhi{BurntOrange}{\tabbest{0.11}{0.03}} & \tabval{0.19}{0.05} \\
    Beta            & \tabval{0.22}{0.05} & \tabval{0.32}{0.06} & \tabval{0.28}{0.06} \\
    Gaussian        & \tabhi{BurntOrange}{\tabbest{0.15}{0.03}} & \tabhi{BurntOrange!40}{\tabval{0.12}{0.03}} & \tabhi{BurntOrange}{\tabbest{0.15}{0.03}} \\
    \bottomrule
    \end{tabular}%
    }
\par\endgroup
        \captionof{table}{LPIPS scores for different noise schedules across tasks.}
        \label{tab:schedule_comparison}
        \vspace{3pt}
        \begingroup
    \fontsize{8.4}{8}\selectfont
    \setlength{\tabcolsep}{4.4pt}
    \resizebox{0.96\linewidth}{!}{%
    \begin{tabular}{lccccc}
    \toprule
     & \textsc{Spin} & \textsc{DAPS} & \textsc{Red-Diff} & \textsc{DPS} & \textsc{PNP-DM} \\
    \midrule
    NFE $\downarrow$ & \tabhi{BurntOrange}{\textbf{631}} & 1000 & \tabhi{BurntOrange!40}{999} & \tabhi{BurntOrange!40}{999} & 3848 \\
    \bottomrule
    \end{tabular}%
    }
\par\endgroup

        \captionof{table}{Average FFHQ NFE over valid task cells for the strongest complete methods in Table~\ref{tab:ffhq}. NFE counts only denoiser/score-network forward passes, not optimizer steps.}
        \label{tab:ffhq_nfe}
    \end{minipage}%
\end{table*}

The spacing of the guidance/optimization steps grid is controlled by our iteration scheduling strategy. Our default pixel-space setting uses a Gaussian schedule with center $\mu_G=0.4$ and width $\sigma_G=10.0$, which concentrates optimization steps in the intermediate noise regime where the denoiser can still correct artifacts introduced by measurement updates. For interleaving DDIM steps between guidance operations, we use $\eta_{\text{interleave}}=1.0$, corresponding to ancestral sampling. The small number of task-specific schedule adjustments is reported in Appendix~\ref{sec:hyperparameters}.
\vspace{-5pt}
\subsection{Ablation Studies}
\vspace{-5pt}

Ablations examine \textsc{Spin}'s allocation of computation along the diffusion trajectory. The initial timestep $t_*$ controls both the Phase~1 warm-start quality and the trajectory left for Phase~2 refinement. Figure \ref{fig:ablation} shows performance peaks at $t_* \in [0.4, 0.6]$, an intermediate-noise regime. Starting too early ($t_* \gg 0.6$) increases runtime with little gain, as quality saturates and outputs become overly smooth; starting too late ($t_* \ll 0.4$) leaves insufficient denoising to absorb data-consistency artifacts. Additional metrics are in Appendix \ref{sec:ablation}.
The Phase~2 schedule controls where the remaining guidance budget is spent. Table \ref{tab:schedule_comparison} shows uniform spacing is suboptimal: data-consistency updates are not equally useful at all noise levels. Schedules concentrating guidance in the intermediate regime perform better, with the Gaussian schedule most consistent across tasks. This matches Figure \ref{fig:ablation}: guidance is most effective when the denoiser prediction is informative while leaving enough reverse steps to correct optimization artifacts. We default to $\mu_{G} = 0.4$, $\sigma_G = 10.0$, concentrating optimization around $t_k \in [0.4, 0.6]$ (Appendix \ref{sec:sampling-schedule}).

\textbf{Limitations.} The warm-start timestep $t_*$, Phase~2 schedule parameters, and optimization budget are sensitive to the task and may need to be re-calibrated for a new degradation operator or dataset to recover the best speed--quality trade-off. Our experiments also focus on pretrained DDPM/DDIM-style image priors on \texttt{FFHQ} and \texttt{ImageNet} at $256 \times 256$ resolution; extending the same two-phase design to higher-resolution settings and to flow-matching or consistency-model priors remains future work.

\vspace{-5pt}
\section{Conclusion}
\vspace{-5pt}
\label{sec:conclusion}

We introduced \textsc{Spin}, a two-phase framework for efficient diffusion-based inverse problem solving that avoids starting posterior sampling from pure noise. The method first constructs a task-informed warm start at an intermediate timestep $t_*$, then runs a truncated reverse trajectory with data-consistency corrections applied only at scheduled timesteps where the denoiser can still absorb optimization artifacts; gradients are required only through the forward operator $\mathcal{A}$, avoiding backpropagation through the denoiser, encoder, and decoder. Across linear and nonlinear inverse problems on \texttt{FFHQ} and \texttt{ImageNet}, \textsc{Spin} maintains competitive reconstruction quality while substantially reducing memory and inference time, achieving at least 2$\times$ speedup over competing baselines. These results suggest that posterior sampling can be made more practical by allocating computation selectively along the trajectory rather than applying dense guidance from pure noise, opening a path toward faster inverse solvers that preserve the flexibility of diffusion priors while reducing their deployment cost.

\newpage
\bibliographystyle{plainnat}
\bibliography{bibliography/bibliography}

@String(IJCV = {Int. J. Comput. Vis.})

@String(CVPR= {IEEE Conf. Comput. Vis. Pattern Recog.})

@String(NIPS= {Adv. Neural Inform. Process. Syst.})

@String(IJCV = {IJCV})

@String(CVPR = {CVPR})

@String(NIPS = {NeurIPS})

@inproceedings{rombach2022high,
  title     = {High-resolution image synthesis with latent diffusion models},
  author    = {Rombach, Robin and Blattmann, Andreas and Lorenz, Dominik and Esser, Patrick and Ommer, Bj{"o}rn},
  booktitle = {Proceedings of the IEEE/CVF conference on computer vision and pattern recognition},
  pages     = {10684--10695},
  year      = {2022}
}

@article{ddpm,
  title   = {Denoising diffusion probabilistic models},
  author  = {Ho, Jonathan and Jain, Ajay and Abbeel, Pieter},
  journal = {Advances in neural information processing systems},
  volume  = {33},
  pages   = {6840--6851},
  year    = {2020}
}

@article{ddim,
  title   = {Denoising diffusion implicit models},
  author  = {Song, Jiaming and Meng, Chenlin and Ermon, Stefano},
  journal = {arXiv preprint arXiv:2010.02502},
  year    = {2020}
}

@article{chung2022diffusion,
  title   = {Diffusion posterior sampling for general noisy inverse problems},
  author  = {Chung, Hyungjin and Kim, Jeongsol and Mccann, Michael T and Klasky, Marc L and Ye, Jong Chul},
  journal = {arXiv preprint arXiv:2209.14687},
  year    = {2022}
}

@inproceedings{janati2025mixture,
  title     = {A Mixture-Based Framework for Guiding Diffusion Models},
  author    = {Janati, Yazid and Moufad, Badr and Abou El Qassime, Mehdi and Durmus, Alain Oliviero and Moulines, Eric and Olsson, Jimmy},
  booktitle = {Forty-second International Conference on Machine Learning},
  year      = {2025}
}

@inproceedings{diffpir,
  author    = {Zhu, Yuanzhi and Zhang, Kai and Liang, Jingyun and Cao, Jiezhang and Wen, Bihan and Timofte, Radu and Van Gool, Luc},
  title     = {Denoising Diffusion Models for Plug-and-Play Image Restoration},
  booktitle = {Proceedings of the IEEE/CVF Conference on Computer Vision and Pattern Recognition (CVPR) Workshops},
  month     = {June},
  year      = {2023},
  pages     = {1219-1229}
}

@article{song2023resample,
  title   = {Solving inverse problems with latent diffusion models via hard data consistency},
  author  = {Song, Bowen and Kwon, Soo Min and Zhang, Zecheng and Hu, Xinyu and Qu, Qing and Shen, Liyue},
  journal = {arXiv preprint arXiv:2307.08123},
  year    = {2023}
}

@article{mardani2023variational,
  title   = {A variational perspective on solving inverse problems with diffusion models},
  author  = { Mardani, Morteza and Song, Jiaming and Kautz, Jan and Vahdat, Arash},
  journal = {arXiv preprint arXiv:2305.04391},
  year    = {2023}
}

@article{dhariwal2021diffusion,
  title   = {Diffusion models beat gans on image synthesis},
  author  = {Dhariwal, Prafulla and Nichol, Alexander},
  journal = {Advances in neural information processing systems},
  volume  = {34},
  pages   = {8780--8794},
  year    = {2021}
}

@article{lipman2022flow,
  title   = {Flow matching for generative modeling},
  author  = {Lipman, Yaron and Chen, Ricky TQ and Ben-Hamu, Heli and Nickel, Maximilian and Le, Matt},
  journal = {arXiv preprint arXiv:2210.02747},
  year    = {2022}
}

@inproceedings{esser2024scaling,
  title     = {Scaling rectified flow transformers for high-resolution image synthesis},
  author    = {Esser, Patrick and Kulal, Sumith and Blattmann, Andreas and Entezari, Rahim and M{"u}ller, Jonas and Saini, Harry and Levi, Yam and Lorenz, Dominik and Sauer, Axel and Boesel, Frederic and others},
  booktitle = {Forty-first international conference on machine learning},
  year      = {2024}
}

@article{gao2024diffusion,
  title   = {Diffusion meets flow matching: Two sides of the same coin. 2024},
  author  = {Gao, Ruiqi and Hoogeboom, Emiel and Heek, Jonathan and De Bortoli, Valentin and Murphy, Kevin P and Salimans, Tim},
  journal = {URL https://diffusionflow. github. io},
  year    = {2024}
}

@article{kadkhodaie2020solving,
  title   = {Solving linear inverse problems using the prior implicit in a denoiser},
  author  = {Kadkhodaie, Zahra and Simoncelli, Eero P},
  journal = {arXiv preprint arXiv:2007.13640},
  year    = {2020}
}

@article{daras2024survey,
  title   = {A survey on diffusion models for inverse problems},
  author  = {Daras, Giannis and Chung, Hyungjin and Lai, Chieh-Hsin and Mitsufuji, Yuki and Ye, Jong Chul and Milanfar, Peyman and Dimakis, Alexandros G and Delbracio, Mauricio},
  journal = {arXiv preprint arXiv:2410.00083},
  year    = {2024}
}

@article{kawar2022denoising,
  title   = {Denoising diffusion restoration models},
  author  = {Kawar, Bahjat and Elad, Michael and Ermon, Stefano and Song, Jiaming},
  journal = {Advances in neural information processing systems},
  volume  = {35},
  pages   = {23593--23606},
  year    = {2022}
}

@inproceedings{song2023pseudoinverse,
  title     = {Pseudoinverse-guided diffusion models for inverse problems},
  author    = {Song, Jiaming and Vahdat, Arash and Mardani, Morteza and Kautz, Jan},
  booktitle = {International Conference on Learning Representations},
  year      = {2023}
}

@inproceedings{spagnoletti2025latino-pro,
  title     = {{LATINO-PRO}: {LAtent} consisTency INverse sOlver with {PRompt} Optimization},
  author    = {Spagnoletti, Alessio and Prost, Jean and Almansa, Andr{\'e}s and Papadakis, Nicolas and Pereyra, Marcelo},
  booktitle = {Proceedings of the IEEE/CVF International Conference on Computer Vision},
  pages     = {19597--19607},
  year      = {2025}
}

@inproceedings{chung2024decomposed,
  title     = {Decomposed Diffusion Sampler for Accelerating Large-Scale Inverse Problems},
  author    = {Chung, Hyungjin and Lee, Suhyeon and Ye, Jong Chul},
  booktitle = {International Conference on Learning Representations},
  year      = {2024}
}

@inproceedings{chung2024prompt,
  title     = {Prompt-tuning Latent Diffusion Models for Inverse Problems},
  author    = {Chung, Hyungjin and Ye, Jong Chul and Milanfar, Peyman and Delbracio, Mauricio},
  booktitle = {Proceedings of the 41st International Conference on Machine Learning},
  pages     = {8941--8967},
  year      = {2024},
  volume    = {235},
  series    = {Proceedings of Machine Learning Research},
  publisher = {PMLR}
}

@inproceedings{kim2025regularization,
  title     = {Regularization by Texts for Latent Diffusion Inverse Solvers},
  author    = {Kim, Jeongsol and Park, Geon Yeong and Chung, Hyungjin and Ye, Jong Chul},
  booktitle = {International Conference on Learning Representations},
  year      = {2025}
}

@article{erbach2025flair,
  title   = {Solving Inverse Problems with {FLAIR}},
  author  = {Erbach, Julius and Narnhofer, Dominik and Dombos, Andreas and Schiele, Bernt and Lenssen, Jan Eric and Schindler, Konrad},
  journal = {arXiv preprint arXiv:2506.02680},
  year    = {2025}
}

@article{lipman2024flowmatchingguidecode,
  title   = {Flow Matching Guide and Code},
  author  = {Lipman, Yaron and Havasi, Marton and Holderrieth, Peter and Shaul, Neta and Le, Matt and Karrer, Brian and Chen, Ricky T. Q. and Lopez-Paz, David and Ben-Hamu, Heli and Gat, Itai},
  journal = {arXiv preprint arXiv:2412.06264},
  year    = {2024}
}

@article{kim2025flowdps,
  title   = {Flowdps: Flow-driven posterior sampling for inverse problems},
  author  = {Kim, Jeongsol and Kim, Bryan Sangwoo and Ye, Jong Chul},
  journal = {arXiv preprint arXiv:2503.08136},
  year    = {2025}
}

@article{choi2021ilvr,
  title   = {Ilvr: Conditioning method for denoising diffusion probabilistic models},
  author  = {Choi, Jooyoung and Kim, Sungwon and Jeong, Yonghyun and Gwon, Youngjune and Yoon, Sungroh},
  journal = {arXiv preprint arXiv:2108.02938},
  year    = {2021}
}

@inproceedings{karras2019style,
  title     = {A style-based generator architecture for generative adversarial networks},
  author    = {Karras, Tero and Laine, Samuli and Aila, Timo},
  booktitle = {Proceedings of the IEEE/CVF conference on computer vision and pattern recognition},
  pages     = {4401--4410},
  year      = {2019}
}

@article{wang2022zero,
  title   = {Zero-shot image restoration using denoising diffusion null-space model},
  author  = {Wang, Yinhuai and Yu, Jiwen and Zhang, Jian},
  journal = {arXiv preprint arXiv:2212.00490},
  year    = {2022}
}

@inproceedings{zhang2025improving,
  title     = {Improving diffusion inverse problem solving with decoupled noise annealing},
  author    = {Zhang, Bingliang and Chu, Wenda and Berner, Julius and Meng, Chenlin and Anandkumar, Anima and Song, Yang},
  booktitle = {Proceedings of the Computer Vision and Pattern Recognition Conference},
  pages     = {20895--20905},
  year      = {2025}
}

@article{wu2024principled,
  title   = {Principled probabilistic imaging using diffusion models as plug-and-play priors},
  author  = {Wu, Zihui and Sun, Yu and Chen, Yifan and Zhang, Bingliang and Yue, Yisong and Bouman, Katherine},
  journal = {Advances in Neural Information Processing Systems},
  volume  = {37},
  pages   = {118389--118427},
  year    = {2024}
}

@article{rout2023solving,
  title   = {Solving linear inverse problems provably via posterior sampling with latent diffusion models},
  author  = {Rout, Litu and Raoof, Negin and Daras, Giannis and Caramanis, Constantine and Dimakis, Alex and Shakkottai, Sanjay},
  journal = {Advances in Neural Information Processing Systems},
  volume  = {36},
  pages   = {49960--49990},
  year    = {2023}
}

@article{martin2024pnp,
  title   = {Pnp-flow: Plug-and-play image restoration with flow matching},
  author  = {Martin, S{\'e}gol{\`e}ne and Gagneux, Anne and Hagemann, Paul and Steidl, Gabriele},
  journal = {arXiv preprint arXiv:2410.02423},
  year    = {2024}
}

@inproceedings{shin2017jpeg,
  title     = {Jpeg-resistant adversarial images},
  author    = {Shin, Richard and Song, Dawn and others},
  booktitle = {NIPS 2017 workshop on machine learning and computer security},
  volume    = {1},
  pages     = {8},
  year      = {2017}
}

@inproceedings{tran2021explore,
  title     = {Explore image deblurring via encoded blur kernel space},
  author    = {Tran, Phong and Tran, Anh Tuan and Phung, Quynh and Hoai, Minh},
  booktitle = {Proceedings of the IEEE/CVF conference on computer vision and pattern recognition},
  pages     = {11956--11965},
  year      = {2021}
}

@inproceedings{meng2021sdedit,
  title     = {SDEdit: Guided Image Synthesis and Editing with Stochastic Differential Equations},
  author    = {Meng, Chenlin and He, Yutong and Song, Yang and Song, Jiaming and Wu, Jiajun and Zhu, Jun-Yan and Ermon, Stefano},
  booktitle = {International Conference on Learning Representations},
  year      = {2022},
  url       = {https://openreview.net/forum?id=aBsCjcPu_tE}
}

@inproceedings{chung2022comecloser,
  title     = {Come-Closer-Diffuse-Faster: Accelerating Conditional Diffusion Models for Inverse Problems Through Stochastic Contraction},
  author    = {Chung, Hyungjin and Sim, Byeongsu and Ye, Jong Chul},
  booktitle = {Proceedings of the IEEE/CVF Conference on Computer Vision and Pattern Recognition},
  pages     = {12413--12422},
  year      = {2022},
  doi       = {10.1109/CVPR52688.2022.01209}
}

@inproceedings{song2023consistency,
  title        = {Consistency Models},
  author       = {Song, Yang and Dhariwal, Prafulla and Chen, Mark and Sutskever, Ilya},
  booktitle    = {Proceedings of the 40th International Conference on Machine Learning},
  pages        = {32211--32252},
  year         = {2023},
  organization = {PMLR}
}

@inproceedings{garber2025zeroshot,
  title     = {Zero-Shot Image Restoration Using Few-Step Guidance of Consistency Models (and Beyond)},
  author    = {Garber, Tomer and Tirer, Tom},
  booktitle = {Proceedings of the IEEE/CVF Conference on Computer Vision and Pattern Recognition},
  pages     = {2398--2407},
  year      = {2025}
}

@inproceedings{venkatakrishnan2013plugandplay,
  title        = {Plug-and-Play Priors for Model Based Reconstruction},
  author       = {Venkatakrishnan, Singanallur V. and Bouman, Charles A. and Wohlberg, Brendt},
  booktitle    = {2013 IEEE Global Conference on Signal and Information Processing},
  pages        = {945--948},
  year         = {2013},
  organization = {IEEE}
}

@article{chan2016plugandplay,
  title   = {Plug-and-Play {ADMM} for Image Restoration: Fixed-Point Convergence and Applications},
  author  = {Chan, Stanley H. and Wang, Xiran and Elgendy, Omar A.},
  journal = {IEEE Transactions on Computational Imaging},
  volume  = {3},
  number  = {1},
  pages   = {84--98},
  year    = {2017}
}

@article{romano2017little,
  title   = {The Little Engine That Could: Regularization by Denoising ({RED})},
  author  = {Romano, Yaniv and Elad, Michael and Milanfar, Peyman},
  journal = {SIAM Journal on Imaging Sciences},
  volume  = {10},
  number  = {4},
  pages   = {1804--1844},
  year    = {2017}
}

@inproceedings{zhang2018unreasonable,
  title     = {The Unreasonable Effectiveness of Deep Features as a Perceptual Metric},
  author    = {Zhang, Richard and Isola, Phillip and Efros, Alexei A and Shechtman, Eli and Wang, Oliver},
  booktitle = {CVPR},
  year      = {2018}
}

@article{ILSVRC15,
  author  = {Olga Russakovsky and Jia Deng and Hao Su and Jonathan Krause and Sanjeev Satheesh and Sean Ma and Zhiheng Huang and Andrej Karpathy and Aditya Khosla and Michael Bernstein and Alexander C. Berg and Li Fei-Fei},
  title   = {{ImageNet Large Scale Visual Recognition Challenge}},
  year    = {2015},
  journal = {International Journal of Computer Vision (IJCV)},
  doi     = {10.1007/s11263-015-0816-y},
  volume  = {115},
  number  = {3},
  pages   = {211-252}
}

@article{moufad2025efficient,
  title={Efficient Zero-Shot Inpainting with Decoupled Diffusion Guidance},
  author={Moufad, Badr and Shouraki, Navid Bagheri and Durmus, Alain Oliviero and Hirtz, Thomas and Moulines, Eric and Olsson, Jimmy and Janati, Yazid},
  journal={arXiv preprint arXiv:2512.18365},
  year={2025}
}

\appendix
\onecolumn
\begin{center}
\LARGE \textbf{Supplementary Material}
\end{center}

\section{Algorithm}
\label{sec:algorithm}

This appendix complements Section~\ref{sec:method} with the implementation-level details of \textsc{Spin}. We (\textbf{i}) lift the pixel-space description of the main paper to the latent-space algorithm actually used in our experiments, (\textbf{ii}) present the full algorithm and walk through its line groups, and (\textbf{iii}) derive in closed form the conservative re-noising step that returns each phase to the noisy manifold. The conceptual justification of conservative re-noising as an optimal coupling of Gaussians is given separately in Appendix~\ref{sec:renoising_app}.

\subsection{From the pixel-space description to the latent-space implementation}
\label{sec:algorithm-latent}

Section~\ref{sec:method} writes the method in pixel space ($\bx$) for clarity, but our experiments also use a latent diffusion prior, so the implemented algorithm operates on a latent variable $\bz$ with $\bz = \encoder(\bx)$ and $\bx = \decoder(\bz)$. The denoiser $\denoiser{t}{}{\cdot}[\param]$ maps a noisy latent to its predicted clean latent, and forward noising follows the latent variance-preserving schedule $\bz_t = \alpha_t \bz_0 + \sigma_t \epsilon$ with $\alpha_t^2 + \sigma_t^2 = 1$.

The pixel/latent boundary is crossed only at the data-consistency step. The forward operator $\opA$ is defined in pixel space, so each \textsf{Optimize} call decodes the predicted clean latent, performs gradient descent on
$\|\obs - \opA(\bx)\|^2$ (Phase~1) or its anchored version
$\|\obs - \opA(\bx)\|^2 + \lambda\|\bx - \hat{\bx}_0\|^2$ (Phase~2),
and re-encodes the result. Crucially, the gradient flows only through $\opA$: \emph{no} VJPs are taken through $\denoiser{}{}{}[\param]$, $\encoder$, or $\decoder$. This is the source of the wall-clock and memory savings reported in Section~\ref{sec:experiments} relative to score-correction methods such as DPS.

The symbol map between Section~\ref{sec:method} and Algorithm~\ref{alg:spin-current} is direct: pixel-space iterates $\bx^k_0$, $\hat{\bx}^k_0$, $\bx^*_0$ correspond, after encoding/decoding, to the latent iterates $\encoder(\bx^k_0)$, $\hat{\bz}^k_0$, $\bz^*_0=\encoder(\bx^*_0)$ used in the algorithm. All re-noising algebra below is therefore identical to the pixel-space derivation up to relabeling.

\subsection{Full algorithm}
\label{sec:algorithm-full}

\begin{figure}[t]
\refstepcounter{algorithm}\label{alg:spin-current}
\begin{tcolorbox}[
    colback=orange!5!white,
    colframe=orange!5!white,
    boxrule=0pt,
    arc=3pt,
    left=2pt,
    right=2pt,
    top=3pt,
    bottom=3pt,
    breakable,
    before upper=\setlength{\parindent}{0pt}]
\rule{\linewidth}{0.8pt}\\[2pt]
{\small\bfseries Algorithm~\thealgorithm\enspace \textsc{Spin} --- \texttt{Full}}\\[-2pt]
\rule{\linewidth}{0.3pt}
\small
{\algrenewcommand\alglinenumber[1]{\scriptsize\textcolor{black!40}{#1}}%
\begin{algorithmic}[1]
\Require Observation $\obs$, initializer $\bx_{\mathrm{init}}(\obs)$, forward operator $\opA$, encoder $\encoder$, decoder $\decoder$, denoiser $\denoiser{\cdot}{}{\cdot}[\param]$, timesteps $\{\tk{1}, \dots, \tk{M}\}$, warm-start iterations $N$, DDIM steps $n$  
\State \textcolor{orange}{\small{\texttt{Phase 1: Initial Guess}}}
\State $\bz^{0}_{\tk{*}} \sim \gauss(\alpha_{\tk{*}}\encoder(\bx_{\mathrm{init}}(\obs)), \sigma^2_{\tk{*}}\Id)$ 
\For{$k = 0, \dots, N-1$}
    \State $\hat{\bz}^{k}_0 \gets \denoiser{\tk{*}}{}{\bz^{k}_{\tk{*}}}[\param]$ \Comment{\textcolor{orange}{\scriptsize\texttt{Predict clean latent}}}
    \State $\epsilon^{k} \gets (\bz^{k}_{\tk{*}} - \alpha_{\tk{*}} \hat{\bz}^{k}_0) / \sigma_{\tk{*}}$
    \State $\bx^{k}_0 \gets \textsf{Optimize}(\bx \mapsto \|\obs - \opA(\bx)\|^2, \bx_0 = \decoder(\hat{\bz}^{k}_0))$
    \State $\bmu^{k} \gets \alpha_{\tk{*}} \encoder(\bx^{k}_0) + \sigma_{\tk{*}} \hat{\bz}^{k}_0 + \sigma_{\tk{*}} \epsilon^{k}$
    \State $\sigma^{k} \gets \sigma^2_{\tk{*}} \Id$
    \State $\bz^{k+1}_{\tk{*}} \sim  \gauss(\alpha_{\tk{*}}\bmu^{k}, \sigma^2_{\tk{*}}\sigma^{k})$  \Comment{\textcolor{orange}{\scriptsize{\texttt{Re-noise}}}}
\EndFor
\State $\bz^\obs_{\tk{M}} \gets \bz^{N}_{\tk{*}}$
\State \textcolor{orange}{\small{\texttt{Phase 2: Guided denoising}}}
\For{$k = M-1, \dots, 1$}
    \State $\tilde{\bz}_{\tk{k}} \sim \pdata{\tk{k}\tbar\tk{k+1}}{\bz^\obs_{\tk{k+1}}}{}[\param]$
    \State $\hat{\bz}_0 \gets \denoiser{\tk{k}}{}{\tilde{\bz}_{\tk{k}}}[\param]$
    \State $\epsilon_{\tk{k}} \gets (\tilde{\bz}_{\tk{k}} - \alpha_{\tk{k}} \hat{\bz}_0) / \sigma_{\tk{k}}$
    \State $\hat{\bx}_0 \gets \decoder(\hat{\bz}_0)$
    \State $\bx^*_0 \gets \textsf{Optimize}(\bx \mapsto \| \obs - \opA(\bx) \|^2 + \lambda \| \bx - \hat{\bx}_0 \|^2, \bx_0 = \hat{\bx}_0)$
    \State $\bmu_{\tk{k+1}} \gets \alpha_{\tk{k+1}} \encoder(\bx^*_0) + \sigma_{\tk{k+1}} \alpha_{\tk{k+1}} \epsilon_{\tk{k}}$
    \State $\sigma'_{\tk{k+1}} \gets \sigma^2_{\tk{k+1}} \Id$ 
    \State $\tilde{\bz}_{\tk{k+1}} \sim \gauss(\bmu_{\tk{k+1}}, \sigma^2_{\tk{k+1}} \sigma'_{\tk{k+1}})$ 
    \State $\bz^\obs_{\tk{k}} \gets \textsf{DDIM}(\tilde{\bz}_{\tk{k+1}}, \tk{k+1} \xrightarrow[\text{ }]{n}{\tk{k}})$ \Comment{\textcolor{orange}{\tiny{\texttt{DDIM in n steps}}}}

\EndFor
\State \Return $\hat{\bx} = \decoder(\denoiser{0}{}{\bz^\obs_{\tk{1}}}[\param])$
\end{algorithmic}
}
\rule{\linewidth}{0.8pt}
\end{tcolorbox}
\end{figure}

\paragraph{Line-group walkthrough.} The two phases of Algorithm~\ref{alg:spin-current} mirror the structure of Section~\ref{sec:method}:
\begin{itemize}
    \item \textbf{Phase~1, line~2:} initialize the warm-start state at $\tk{*}$ from the observation-based initializer $\bx_{\mathrm{init}}(\obs)$ (a task-specific zero-fill / pseudo-inverse; see Appendix~\ref{sec:hyperparameters}).
    \item \textbf{Phase~1, lines~4--5:} predict the clean latent $\hat{\bz}^k_0$ and extract the implied noise $\epsilon^k$ from the denoiser output.
    \item \textbf{Phase~1, line~6:} decode and solve the unregularized data-consistency objective in pixel space.
    \item \textbf{Phase~1, lines~7--9:} re-noise back to $\tk{*}$. The mean $\bmu^k$ and the sampling distribution are derived in Appendix~\ref{sec:phase1-renoise}.
    \item \textbf{Phase~2, lines~14--17:} probe a lower-noise state along the reverse trajectory, denoise it to $\hat{\bz}_0$, extract its noise term $\epsilon_{\tk{k}}$, and decode it to $\hat{\bx}_0$.
    \item \textbf{Phase~2, line~18:} solve the anchored data-consistency objective initialized at $\hat{\bx}_0$.
    \item \textbf{Phase~2, lines~19--21:} re-noise back to $\tk{k+1}$ (Appendix~\ref{sec:phase2-renoise}).
    \item \textbf{Phase~2, line~22:} run $n$ DDIM steps from $\tk{k+1}$ to $\tk{k}$ to reach the next accepted state.
\end{itemize}
Intermediate transitions between scheduled timesteps $\{\tk{1},\dots,\tk{M}\}$ are vanilla denoiser steps; data consistency is solved only at scheduled timesteps. The schedule itself is described in Appendix~\ref{sec:sampling-schedule}.

\paragraph{Computational cost.} Algorithm~\ref{alg:spin-current} performs $N + M + nM$ denoiser evaluations (warm-start iterations plus probe and DDIM substeps in Phase~2) and $N+M$ pixel-space optimization passes through the forward operator $\opA$. Every gradient step backpropagates only through $\opA$; no Jacobian-vector products are taken through the denoiser, encoder, or decoder. This contrasts with DPS-style samplers, which take a denoiser VJP at every reverse step.

\subsection{Conservative re-noising: Phase~1}
\label{sec:phase1-renoise}

After the optimization in Algorithm~\ref{alg:spin-current} (line~6) yields $\bz^k_0=\encoder(\bx^k_0)$, we must return to a noisy state at timestep $\tk{*}$ for the next iteration. A standard forward-noising step $\bz^{k+1}_{\tk{*}} = \alpha_{\tk{*}}\bz^k_0 + \sigma_{\tk{*}}\epsilon$ with $\epsilon\sim\gauss(\zero,\Id)$ would discard the current noise realization. Lemma~\ref{lem:optimal_coupling} (Appendix~\ref{sec:renoising_app}) shows that the least-disruptive return to the noisy manifold instead reuses the predicted noise $\epsilon^k$ extracted from the denoiser output; we construct such a step explicitly here. Throughout we use the variance-preserving identity $\alpha_t^2 + \sigma_t^2 = 1$; for non-VP schedules the same construction applies after normalizing the mixing coefficients.

We treat the prior-predicted clean latent $\hat{\bz}^k_0$ as a second clean estimate and form a forward-noising-style combination of the two,
\begin{equation*}
\bz^{\text{mix}}_0 := \alpha_{\tk{*}} \bz^{k}_0 + \sigma_{\tk{*}} \hat{\bz}^{k}_0.
\end{equation*}
The coefficients are chosen so that the limit $\tk{*}\to 0$ recovers the optimized solution ($\alpha_{\tk{*}}\to 1$, $\sigma_{\tk{*}}\to 0$), while in the high-noise limit $\tk{*}\to 1$ the prior prediction takes over. We mix the predicted noise
$\epsilon^{k} = (\bz^{k}_{\tk{*}} - \alpha_{\tk{*}} \hat{\bz}^{k}_0)/\sigma_{\tk{*}}$
with fresh Gaussian noise $\epsilon \sim \gauss(\zero, \Id)$ in the same proportion:
\begin{equation*}
\epsilon^{\text{mix}} := \alpha_{\tk{*}} \epsilon^{k} + \sigma_{\tk{*}} \epsilon.
\end{equation*}
Under VP and an ideal unit-covariance noise prediction, $\epsilon^{\text{mix}}$ has unit marginal covariance, while the conditional fresh-noise covariance is only $\sigma_{\tk{*}}^4\Id$ --- strictly smaller than the $\Id$ that a naive forward step would inject. Applying a forward-noising update to $\bz^{\text{mix}}_0$ yields:
\begin{align*}
\bz^{k+1}_{\tk{*}}
&= \alpha_{\tk{*}} \bz^{\text{mix}}_0 + \sigma_{\tk{*}} \epsilon^{\text{mix}} \\
&= \alpha_{\tk{*}} \left(\alpha_{\tk{*}} \bz^{k}_0 + \sigma_{\tk{*}} \hat{\bz}^{k}_0\right) + \sigma_{\tk{*}} \left(\alpha_{\tk{*}} \epsilon^{k} + \sigma_{\tk{*}} \epsilon\right) \\
&= \alpha_{\tk{*}} \underbrace{\left(\alpha_{\tk{*}} \bz^{k}_0 + \sigma_{\tk{*}} \hat{\bz}^{k}_0 + \sigma_{\tk{*}} \epsilon^{k}\right)}_{\bmu^{k}} + \sigma^2_{\tk{*}} \epsilon.
\end{align*}
This matches lines~7--9 of Algorithm~\ref{alg:spin-current}: setting $\sigma^k := \sigma^2_{\tk{*}}\Id$, the update is equivalent to sampling
\begin{equation*}
\bz^{k+1}_{\tk{*}} \sim \gauss\!\left(\alpha_{\tk{*}}\,\bmu^{k},\; \sigma^4_{\tk{*}}\Id\right),
\end{equation*}
which the algorithm writes compactly as $\gauss(\alpha_{\tk{*}}\bmu^k,\,\sigma^2_{\tk{*}}\sigma^k)$. The total marginal covariance is preserved at $\sigma^2_{\tk{*}}\Id$; only the conservative residual $\sigma^4_{\tk{*}}\Id$ is freshly injected per iteration.

\subsection{Conservative re-noising: Phase~2}
\label{sec:phase2-renoise}

In Phase~2 the data-consistency step (Algorithm~\ref{alg:spin-current}, line~18) is already anchored to the prior-predicted clean state $\hat{\bx}_0$ via the regularizer $\lambda\|\bx-\hat{\bx}_0\|^2$, so the optimized output $\bx^*_0$ already encodes the trade-off between observation and prior. \emph{No second clean-state mixing is therefore needed}: unlike Phase~1, only the noise component requires the conservative treatment of Lemma~\ref{lem:optimal_coupling} (Appendix~\ref{sec:renoising_app}).

Starting from $\bz^*_0=\encoder(\bx^*_0)$ and the noise extracted from the probe state,
$\epsilon_{\tk{k}} = (\tilde{\bz}_{\tk{k}} - \alpha_{\tk{k}} \hat{\bz}_0)/\sigma_{\tk{k}}$,
we mix $\epsilon_{\tk{k}}$ with fresh Gaussian noise $\epsilon\sim\gauss(\zero,\Id)$ at the next-timestep coefficients:
\begin{equation*}
\epsilon^{\text{mix}} := \alpha_{\tk{k+1}} \epsilon_{\tk{k}} + \sigma_{\tk{k+1}} \epsilon.
\end{equation*}
A forward-noising update with $\bz^*_0$ as the (already anchored) clean state then gives:
\begin{align*}
\tilde{\bz}_{\tk{k+1}}
&= \alpha_{\tk{k+1}} \bz^*_0 + \sigma_{\tk{k+1}} \epsilon^{\text{mix}} \\
&= \alpha_{\tk{k+1}} \bz^*_0 + \sigma_{\tk{k+1}} \alpha_{\tk{k+1}} \epsilon_{\tk{k}} + \sigma^2_{\tk{k+1}} \epsilon \\
&= \underbrace{\alpha_{\tk{k+1}} \bz^*_0 + \sigma_{\tk{k+1}} \alpha_{\tk{k+1}} \epsilon_{\tk{k}}}_{\bmu_{\tk{k+1}}} + \sigma^2_{\tk{k+1}} \epsilon.
\end{align*}
This matches lines~19--21 of Algorithm~\ref{alg:spin-current}: with $\sigma'_{\tk{k+1}} := \sigma^2_{\tk{k+1}}\Id$, we sample
\begin{equation*}
\tilde{\bz}_{\tk{k+1}} \sim \gauss\!\left(\bmu_{\tk{k+1}},\; \sigma^4_{\tk{k+1}}\Id\right),
\end{equation*}
which the algorithm writes compactly as $\gauss(\bmu_{\tk{k+1}},\,\sigma^2_{\tk{k+1}}\sigma'_{\tk{k+1}})$. The accepted state $\bz^\obs_{\tk{k}}$ is then obtained by $n$ DDIM denoising steps from $\tk{k+1}$ to $\tk{k}$ (Algorithm~\ref{alg:spin-current}, line~22). The placement of the guidance timesteps $\{\tk{1},\dots,\tk{M}\}$ is described in Appendix~\ref{sec:sampling-schedule}.

\clearpage
\section{Theoretical Interpretation}
\label{sec:theory}

This appendix provides a concise interpretation of \textsc{Spin} as an \emph{approximate split inference} procedure that alternates between (i) a prior-driven diffusion update and (ii) a data-consistency update implemented via pixel-space optimization. The exposition proceeds as follows:
\begin{itemize}
    \item Section~\ref{sec:setup_app} fixes notation and recalls the posterior time-marginals targeted by the method;
    \item Section~\ref{sec:renoising_app} formalizes the conservative re-noising step as an optimal coupling of Gaussians;
    \item Section~\ref{sec:gibbs_app} interprets the resulting prior--likelihood--re-noise loop as an inexact Gibbs sampler;
    \item Section~\ref{sec:phase1_contraction} establishes a $W_1$-contraction for the Phase~1 inner loop and bounds $\bm{\epsilon_*}$ geometrically in the iteration count $N$;
    \item Section~\ref{sec:warm_start_error} bounds the warm-start truncation error and proves Lemma~\ref{lem:warm_start_truncation}; combined with Section~\ref{sec:phase1_contraction} this yields the end-to-end posterior-error bound (Corollary~\ref{cor:end_to_end}).
\end{itemize}

\subsection{Setup}
\label{sec:setup_app}

\paragraph{Notation.}
We follow the paper's convention that $Z_t$ (resp.\ $X_t$) denotes a latent-space (resp.\ pixel-space) variable, while bold symbols such as $\bz_t$ and $\bx$ denote generic realizations/iterates.
Algorithm~\ref{alg:spin-current} follows this convention and writes iterates in bold (e.g., $\bz$, $\bx$), while the background uses uppercase notation ($Z_t$, $X_t$) when referring to the underlying diffusion random variables and their marginals.
In Phase~2, the algorithm uses a constant weight $\lambda$; throughout this appendix we allow a time-dependent $\lambda_t$, and the algorithm is recovered by taking $\lambda_t\equiv \lambda$ (or $\lambda_{t_k}$ on the chosen schedule). The case $\lambda_t=0$ used in some experiments corresponds to the limiting unregularized data-consistency update, with regularization supplied implicitly by the surrounding denoising dynamics and re-noising steps.

\paragraph{Target posterior and time-marginals.} Recall the latent-space posterior at $t=0$:
\begin{equation}
\label{eq:app_posterior_target}
\pi_0(\bz_0 \mid \obs) \;\propto\; \ell_0(\obs \mid \bz_0)\, p_0(\bz_0),
\qquad
\ell_0(\obs \mid \bz_0) = \gauss\!\left(\obs;\, \opA(\decoder(\bz_0)), \sigma_y^2 \Id\right).
\end{equation}
For any noise level $t \in [0,1]$, define the posterior time-marginal $\pi_t(\bz_t\mid \obs)$ by pushing $\pi_0(\cdot\mid\obs)$ through the forward noising kernel $p_t(\bz_t \mid \bz_0)$, (i.e., $\bz_t=\alpha_t\bz_0+\sigma_t\bz_1$). This yields the classical form
\begin{equation}
\label{eq:app_posterior_marginal}
\pi_t(\bz_t \mid \obs) \;\propto\; \ell_t(\obs \mid \bz_t)\, p_t(\bz_t),
\qquad
\ell_t(\obs\mid\bz) = \pE[\ell_0(\obs\mid \BZ_0)\mid \BZ_t=\bz],
\end{equation}
and exact posterior sampling can be implemented by reverse-time transitions that depend on the score of $\ell_t$, cf.\ \eqref{eq:posterior-denoiser}. Our method avoids evaluating $\nabla_{\bz_t}\log \ell_t(\obs\mid \bz_t)$ via VJPs, and instead enforces data consistency through optimization steps at selected timesteps.

\subsection{Re-noising as an optimal coupling}
\label{sec:renoising_app}

After updating the clean estimate via optimization, the algorithm re-noises to return to a noisy manifold (Phase~1 and Phase~2). A useful perspective is that re-noising should change the mean induced by the updated clean state \emph{while minimally perturbing the current sample}, which is naturally formalized via optimal couplings of Gaussians.

\begin{lemma}[Optimal coupling of equal-covariance Gaussians]
\label{lem:optimal_coupling}
Let $P=\gauss(\mathbf{m}, \cov)$ and $P'=\gauss(\mathbf{m}', \cov)$ with the same covariance $\cov \succ 0$. The coupling
\(
X=\mathbf{m}+\xi,\; X'=\mathbf{m}'+\xi
\)
with $\xi \sim \gauss(0,\cov)$ minimizes $\pE\|X-X'\|^2$ among all couplings of $(P,P')$, and achieves
\(
\inf \pE\|X-X'\|^2=\|\mathbf{m}-\mathbf{m}'\|^2.
\)

\end{lemma}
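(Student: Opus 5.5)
The plan is to prove a lower bound valid for every coupling and then check that the shared-noise coupling attains it. The first step is feasibility. If $\xi\sim\gauss(0,\cov)$, then $X=\mathbf{m}+\xi\sim P$ and $X'=\mathbf{m}'+\xi\sim P'$, so $(X,X')$ is an admissible coupling. Moreover $X-X'=\mathbf{m}-\mathbf{m}'$ holds deterministically, which gives $\pE\|X-X'\|^2=\|\mathbf{m}-\mathbf{m}'\|^2$ exactly.

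For the lower bound, take an arbitrary coupling $(X,X')$ of $(P,P')$; both marginals have finite second moments. I would use the bias--variance identity
\begin{equation*}
\pE\|X-X'\|^2=\|\pE[X-X']\|^2+\pE\|(X-X')-\pE[X-X']\|^2 .
\end{equation*}
Since $\pE[X-X']=\mathbf{m}-\mathbf{m}'$ depends only on the marginals, and the second term is nonnegative, this gives $\pE\|X-X'\|^2\ge\|\mathbf{m}-\mathbf{m}'\|^2$. Combined with the first step, the infimum equals $\|\mathbf{m}-\mathbf{m}'\|^2$ and is attained by the shared-noise coupling. This settles the statement as written.

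There is no real obstacle here; the only point needing care is that the lower bound uses nothing beyond the means. For this reason the equal-covariance hypothesis is needed only to make the shared-noise coupling admissible, since it ensures $\mathbf{m}'+\xi$ has law $P'$. As an optional remark, the minimizer is essentially unique: equality forces $X-X'$ to be almost surely constant. One could also cross-check the value against the Gelbrich formula for $W_2^2$ between Gaussians, whose trace term vanishes when the two covariances are equal.
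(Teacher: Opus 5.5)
Your proof is correct and is essentially the paper's argument. The paper writes $X=\mathbf{m}+\xi$, $X'=\mathbf{m}'+\xi'$ and uses $\pE\|X-X'\|^2=\|\mathbf{m}-\mathbf{m}'\|^2+\pE\|\xi-\xi'\|^2$. This is exactly your bias--variance identity, since $(X-X')-\pE[X-X']=\xi-\xi'$. The paper then attains the bound with the synchronous coupling $\xi'=\xi$; your added remarks on uniqueness and on why equal covariances are needed are correct extras that the paper does not include.
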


\begin{proof}
Let $(X,X')$ be an arbitrary coupling of $(P,P')$. Write
$X=\mathbf{m}+\xi$ and $X'=\mathbf{m}'+\xi'$, where both
$\xi$ and $\xi'$ have law $\gauss(0,\cov)$. Then
\begin{equation*}
    \pE\|X-X'\|^2
    =
    \|\mathbf{m}-\mathbf{m}'\|^2
    +
    \pE\|\xi-\xi'\|^2.
\end{equation*}
It follows that, for every coupling,
\begin{equation*}
    \pE\|X-X'\|^2 \ge \|\mathbf{m}-\mathbf{m}'\|^2.
\end{equation*}
The lower bound is attained by the synchronous coupling
$\xi'=\xi$, which is feasible since the centered marginals coincide.
This proves both optimality and the value of the infimum.
\end{proof}

Lemma~\ref{lem:optimal_coupling} motivates \emph{noise preservation}: when the clean estimate changes (e.g., from $\hat{\bx}_0$ to $\bx^*_0$), the least-disruptive way to return to the noisy manifold is to keep the same underlying noise realization and only update the mean term. In practice the true noise is not observed; we therefore approximate it using the predicted noise extracted from the denoiser output (see Appendices~\ref{sec:phase1-renoise} and~\ref{sec:phase2-renoise} for the explicit construction in each phase). We additionally mix this predicted noise with fresh Gaussian noise. Under a variance-preserving schedule, $\alpha_t^2+\sigma_t^2=1$, and an ideal unit-covariance noise prediction, the blended noise has unit marginal covariance, while the conditional fresh-noise covariance is only $\sigma_t^4\Id$. Thus the update is deliberately more conservative than drawing a fully fresh sample from the forward kernel: it preserves accumulated data-consistent information while allowing controlled stochastic refresh.

\subsection{Inexact Gibbs / alternating update viewpoint}
\label{sec:gibbs_app}

This viewpoint explains \textsc{Spin} as a simple ``alternate between prior and likelihood'' routine at a fixed noise level.

At a fixed timestep $t$, consider the augmented latent posterior over $(\bz_0,\bz_t)$:
\begin{equation}
\label{eq:augmented_posterior}
\pi(\bz_0,\bz_t \mid \obs)\;\propto\; \ell_0(\obs \mid \bz_0)\, p_0(\bz_0)\, p(\bz_t\mid \bz_0),
\end{equation}
where $p(\bz_t\mid \bz_0)=\gauss(\alpha_t \bz_0, \sigma_t^2 \Id)$ is the forward noising kernel.

A (hypothetical) exact Gibbs sampler for \eqref{eq:augmented_posterior} would alternate between sampling
\(
\bz_0 \sim \pi(\bz_0\mid \bz_t,\obs)
\)
and
\(
\bz_t \sim p(\bz_t\mid \bz_0).
\)
The first conditional is intractable in general. \textsc{Spin} can be seen as an \emph{inexact} replacement of this step:
\begin{enumerate}
    \item \textbf{Prior prediction:} use the pretrained denoiser to produce $\hat{\bz}_0=\denoiser{t}{}{\bz_t}[\param]$ (a point estimate for $\bz_0\mid \bz_t$), and decode to $\hat{\bx}_0=\decoder(\hat{\bz}_0)$.
    \item \textbf{Data consistency:} incorporate $\obs$ by solving the MAP/prox problem, which outputs $\bx_t^*$.
    \item \textbf{Re-noise:} map $\bx_t^*$ back to the noisy manifold by a conservative forward-style update that preserves the current noise realization as much as possible (Lemma~\ref{lem:optimal_coupling}).
\end{enumerate}
This yields an intuitive ``prior $\rightarrow$ data-consistency $\rightarrow$ re-noise'' loop.

\begin{remark}[Phase 1 as repeated approximate conditioning at $t_*$]
\label{rem:phase1}
Phase~1 fixes a single noise level $t_*$ and repeats the inexact update above for $N$ iterations. This can be viewed as iteratively refining an approximate sample from the posterior time-marginal $\pi_{t_*}(Z_{t_*}\mid \obs)$ using repeated ``(prior prediction $\rightarrow$ prox data-consistency $\rightarrow$ re-noise)'' cycles, so that Phase~2 can start closer to the conditional manifold while skipping expensive steps from $t=1$.
\end{remark}

\begin{remark}[Guidance schedule]
\label{rem:intermediate_noise}
Interpreting $\lambda_t$ as a proxy for SNR, guidance is typically most effective at intermediate noise levels.
When $t\to 0$ (high SNR), the diffusion prior/denoiser is already confident and can be ``too rigid'': heavy optimization may introduce artifacts that the remaining denoising steps cannot easily correct.
When $t\to 1$ (low SNR), the denoiser prediction is highly uncertain, so the quadratic anchoring term in the MAP/prox objective becomes less informative.
This motivates concentrating optimization at intermediate timesteps (Appendix~\ref{sec:sampling-schedule}).
\end{remark}

\subsection{Phase~1 contraction and bound on \texorpdfstring{$\bm{\epsilon_*}$}{epsilon\_*}}
\label{sec:phase1_contraction}

Lemma~\ref{lem:warm_start_truncation} treats the warm-start error
\begin{equation*}
    \bm{\epsilon_*}
    =
    W_1\!\left(\mu_{\tk{*}},\pi_{\tk{*}}(\cdot\mid\obs)\right)
\end{equation*}
as a free quantity. This subsection establishes an explicit bound on $\bm{\epsilon_*}$ as a function of the number of Phase~1 iterations $N$. Under Lipschitz assumptions on the denoiser and the data-consistency map (Assumption~\ref{ass:phase1_lipschitz}), the Phase~1 update is a $W_1$-contraction at the fixed timestep $\tk{*}$ (Proposition~\ref{prop:phase1_contraction}). The iterates $\mu^N$ therefore converge geometrically to a unique fixed-point law $\mu^\infty$, and the residual mismatch $W_1(\mu^\infty,\pi_{\tk{*}}(\cdot\mid\obs))$ is a standing bias attributable to approximation error in the denoiser and the data-consistency step (Remark~\ref{rem:phase1_bias}).

For analytical clarity we work with the simplified pixel-space Phase~1 loop with fresh-noise re-noising. The conservative noise-preserving variant used in the implementation and the latent-space extension are addressed in Remarks~\ref{rem:phase1_conservative} and~\ref{rem:phase1_latent}.

\paragraph{Phase~1 update operator.}
At fixed timestep $\tk{*}$, one Phase~1 iteration takes the current iterate $\bx^k_{\tk{*}}$ through the same three steps used in Section~\ref{sec:method}. (i) Apply the pretrained denoiser to get a clean estimate,
\begin{equation}
\label{eq:phase1_denoise}
    \hat{\bx}^k_0 \;=\; \denoiser{\tk{*}}{}{\bx^k_{\tk{*}}}[\param].
\end{equation}
(ii) Apply the unregularized Phase~1 data-consistency map of Section~\ref{sec:method},
\begin{equation}
\label{eq:phase1_dc}
    \bx^k_0 \;=\; \DC(\hat{\bx}^k_0),
\end{equation}
where $\DC$ denotes the operator that sends $\hat{\bx}$ to the solution of the unregularized data-consistency problem of Section~\ref{sec:method}. Assumption~\ref{ass:phase1_lipschitz}(L1) below specifies $\DC$ concretely in the linear case.
(iii) Re-noise back to noise level $\tk{*}$,
\begin{equation}
\label{eq:phase1_steps}
    \bx^{k+1}_{\tk{*}}\;=\;\alpha_{\tk{*}}\bx^k_0+\sigma_{\tk{*}}\xi^k, \qquad \xi^k\sim\gauss(\zero,\Id)\;\text{i.i.d.}
\end{equation}
The denoiser $\denoiser{\tk{*}}{}{\cdot}[\param]$ is the same object used throughout Sections~\ref{sec:background}--\ref{sec:method}. The data-consistency step~\eqref{eq:phase1_dc} is the operator counterpart of the Phase~1 problem solved in Section~\ref{sec:method}; the symbol $\DC$ for that operator is the only new notation in this subsection. Let $T$ denote the induced update on probability laws:
\begin{equation*}
    T\mu^k=\mu^{k+1}.
\end{equation*}
After $N$ warm-start iterations, $\mu^N$ is the law denoted by $\mu_{\tk{*}}$ in Lemma~\ref{lem:warm_start_truncation}.

\begin{assumption}[Component Lipschitz constants]
\label{ass:phase1_lipschitz}
\hfill
\begin{enumerate}
    \item[(L1)] The forward operator $\opA$ is linear, and $\DC$ returns the minimum-displacement least-squares solution
    \begin{equation*}
    \begin{aligned}
        \DC(\hat{\bx})
        \;=\;
        \argmin_{\bx\,\in\,\argmin_{\bz}\|\obs-\opA\bz\|^2}\;\|\bx-\hat{\bx}\|^2
        \;=\;
        \opA^{\dagger}\obs + P_{\ker\opA}\,\hat{\bx},
    \end{aligned}
    \end{equation*}
    where $\opA^{\dagger}$ is the Moore--Penrose pseudoinverse and $P_{\ker\opA}$ the orthogonal projection onto $\ker\opA$; equivalently, $\DC(\hat{\bx})$ is the limit of gradient descent on $\|\obs-\opA\bx\|^2$ initialized at $\hat{\bx}$ with step size $\eta\in(0,1/\|\opA\|_{\mathrm{op}}^2)$. In the noiseless consistent case $\obs\in\operatorname{range}(\opA)$ the constraint set $\{\bx:\opA\bx=\obs\}$ is nonempty and this reduces to the Euclidean projection onto it. Since the data-dependent term $\opA^{\dagger}\obs$ cancels in differences, for any two inputs $\hat{\bx},\hat{\bx}'$,
    \begin{equation*}
        \DC(\hat{\bx})-\DC(\hat{\bx}')=P_{\ker\opA}(\hat{\bx}-\hat{\bx}'),
    \end{equation*}
    so $\DC$ is non-expansive with $L_{DC}\coloneqq\|P_{\ker\opA}\|_{\mathrm{op}}\in\{0,1\}$.
    \item[(L2)] The denoiser $\denoiser{\tk{*}}{}{\cdot}[\param]$ is $L_D$-Lipschitz in its input. For the exact MMSE denoiser, the Tweedie--Stein identity gives
    \begin{equation*}
        \nabla_{\bx}\,\denoiser{\tk{*}}{}{\bx}[\param]
        =
        \frac{\alpha_{\tk{*}}}{\sigma_{\tk{*}}^2}\,
        \Sigma_{0\mid \tk{*}}(\bx),
    \end{equation*}
    where $\Sigma_{0\mid \tk{*}}(\bx)\coloneqq\pV[X_0\mid X_{\tk{*}}=\bx]$ is the posterior covariance under the prior. We assume the conditional covariance is uniformly bounded in operator norm,
    \begin{equation*}
        M_{\tk{*}} \;\coloneqq\; \sup_{\bx}\,\bigl\|\Sigma_{0\mid \tk{*}}(\bx)\bigr\|_{\mathrm{op}} \;<\; \infty,
    \end{equation*}
    which holds in particular for any compactly supported prior (e.g.\ bounded pixel intensities). One may then take
    \begin{equation*}
        L_D \;\le\; \frac{\alpha_{\tk{*}}}{\sigma_{\tk{*}}^2}\,M_{\tk{*}}.
    \end{equation*}
    (A finite \emph{unconditional} second moment $\pE\|X_0\|^2$ does \emph{not} suffice: by the law of total variance it bounds only the \emph{average} conditional covariance $\pE[\Sigma_{0\mid\tk{*}}(X_{\tk{*}})]$, not its supremum over $\bx$, which can be inflated on low-probability conditioning events.)
\end{enumerate}
\end{assumption}

\begin{proposition}[Phase~1 contraction]
\label{prop:phase1_contraction}
Under Assumption~\ref{ass:phase1_lipschitz}, the Phase~1 law update $T$ satisfies
\begin{equation}
\label{eq:phase1_rate}
    W_1(T\mu,T\mu')
    \;\le\;
    \rho\,W_1(\mu,\mu'),
    \qquad
    \rho \coloneqq \alpha_{\tk{*}} L_{DC}L_D.
\end{equation}
In particular, if $\rho<1$, then $T$ has a unique fixed-point law $\mu^\infty$, and for any initialization $\mu^0$,
\begin{equation}
\label{eq:phase1_geometric}
    W_1(\mu^N,\mu^\infty)\;\le\;\rho^N\,W_1(\mu^0,\mu^\infty).
\end{equation}
\end{proposition}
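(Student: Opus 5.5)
The plan is to prove \eqref{eq:phase1_rate} by a synchronous coupling argument, then obtain \eqref{eq:phase1_geometric} from the Banach fixed-point theorem on the complete metric space $(\mathcal{P}_1(\mathbb{R}^d),W_1)$. The Phase~1 iteration \eqref{eq:phase1_denoise}--\eqref{eq:phase1_steps} can be written as $\bx_{\tk{*}}\mapsto \Phi(\bx_{\tk{*}})+\sigma_{\tk{*}}\xi$ with a deterministic map
\begin{equation*}
    \Phi(\bx)\coloneqq\alpha_{\tk{*}}\,\DC\bigl(\denoiser{\tk{*}}{}{\bx}[\param]\bigr),
\end{equation*}
and a noise $\xi\sim\gauss(\zero,\Id)$ independent of the input. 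Thus $T\mu=(\Phi_\#\mu)*\gauss(\zero,\sigma^2_{\tk{*}}\Id)$.

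\textbf{Step 1: $\Phi$ is Lipschitz.} For any $\bx,\bx'$, (L1) gives
\begin{equation*}
    \DC(\hat\bx)-\DC(\hat\bx')=P_{\ker\opA}(\hat\bx-\hat\bx'),
\end{equation*}
so $\|\DC(\hat\bx)-\DC(\hat\bx')\|\le L_{DC}\|\hat\bx-\hat\bx'\|$. Combining this with (L2) yields
\begin{equation*}
    \|\Phi(\bx)-\Phi(\bx')\|\le \alpha_{\tk{*}}L_{DC}L_D\|\bx-\bx'\|=\rho\|\bx-\bx'\|.
\end{equation*}
For (L2) I take $L_D$ as given by the assumption. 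The covariance bound there is only a sufficient way to justify it, via the mean value inequality applied to the Jacobian formula, and I do not need to re-prove it.

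\textbf{Step 2: coupling.} Let $(X,X')$ be an optimal $W_1$ coupling of $(\mu,\mu')$; one exists for laws with finite first moment. Draw a single $\xi\sim\gauss(\zero,\Id)$ independent of $(X,X')$ and set
\begin{equation*}
    Y=\Phi(X)+\sigma_{\tk{*}}\xi,\qquad Y'=\Phi(X')+\sigma_{\tk{*}}\xi .
\end{equation*}
Then $(Y,Y')$ is a coupling of $(T\mu,T\mu')$. Since the noise cancels,
\begin{equation*}
    W_1(T\mu,T\mu')\le\pE\|Y-Y'\|=\pE\|\Phi(X)-\Phi(X')\|\le\rho\,\pE\|X-X'\|=\rho W_1(\mu,\mu').
\end{equation*}
This is the same synchronous-noise idea as in Lemma~\ref{lem:optimal_coupling}, used here in $W_1$.

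\textbf{Step 3: fixed point.} I need $T$ to map $\mathcal{P}_1$ into itself. The map $\Phi$ has linear growth because it is Lipschitz, and the Gaussian has a finite first moment, so $T\mu\in\mathcal{P}_1$ whenever $\mu\in\mathcal{P}_1$. For $\rho<1$, Banach's theorem then gives a unique fixed point $\mu^\infty$. Iterating Step~2 with $\mu'=\mu^\infty=T\mu^\infty$ gives \eqref{eq:phase1_geometric}.

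The only delicate points are measure-theoretic: the existence of an optimal coupling, which can be replaced by an $\varepsilon$-optimal one, and the completeness of $(\mathcal{P}_1,W_1)$, which is standard. I expect no real obstacle beyond checking that Step~1 correctly uses (L1) only through differences, so that the data term $\opA^\dagger\obs$ drops out.
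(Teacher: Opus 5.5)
Your proposal is correct and follows essentially the same route as the paper. Both use a synchronous coupling with one shared Gaussian draw $\xi$, so the noise cancels and the Lipschitz constants of $\DC$ and the denoiser chain to give $\rho=\alpha_{\tk{*}}L_{DC}L_D$, followed by Banach's fixed-point theorem. Your extra checks are details the paper leaves implicit: that $T$ maps $\mathcal{P}_1$ into itself, that $(\mathcal{P}_1,W_1)$ is complete, and that an optimal (or $\varepsilon$-optimal) coupling exists.
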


\begin{proof}
Write $D_*(\bx)=\denoiser{\tk{*}}{}{\bx}[\param]$. Let $(X,X')$
be an optimal $W_1$-coupling of $\mu$ and $\mu'$, so that
\begin{equation*}
    \pE\|X-X'\|=W_1(\mu,\mu').
\end{equation*}
Given $(X,X')$, draw a common $\xi\sim\gauss(\zero,\Id)$ and define
\begin{equation*}
    \begin{aligned}
        Y
        &\coloneqq
        \alpha_{\tk{*}}\DC(D_*(X))+\sigma_{\tk{*}}\xi,\\
        Y'
        &\coloneqq
        \alpha_{\tk{*}}\DC(D_*(X'))+\sigma_{\tk{*}}\xi.
    \end{aligned}
\end{equation*}
By construction, $(Y,Y')$ is a coupling of $T\mu$ and $T\mu'$. Therefore, using Assumption~\ref{ass:phase1_lipschitz},
\begin{equation*}
    \begin{aligned}
        W_1(T\mu,T\mu')
        &\le
        \pE\|Y-Y'\| \\
        &=
        \alpha_{\tk{*}}\,
        \pE\bigl\|\DC(D_*(X))-\DC(D_*(X'))\bigr\| \\
        &\le
        \alpha_{\tk{*}}L_{DC}\,
        \pE\|D_*(X)-D_*(X')\| \\
        &\le
        \alpha_{\tk{*}}L_{DC}L_D\,\pE\|X-X'\| \\
        &=
        \rho\,W_1(\mu,\mu').
    \end{aligned}
\end{equation*}
This proves~\eqref{eq:phase1_rate}. If $\rho<1$, the Banach fixed-point theorem yields a unique fixed point $\mu^\infty$ of $T$, and iterating~\eqref{eq:phase1_rate} gives~\eqref{eq:phase1_geometric}.
\end{proof}

\begin{remark}[Contraction regime]
\label{rem:contraction_regime}
The rate~\eqref{eq:phase1_rate} is $\rho=\alpha_{\tk{*}}L_{DC}L_D\le \alpha_{\tk{*}}^2 M_{\tk{*}}/\sigma_{\tk{*}}^2$, using $L_{DC}\le1$ and Assumption~\ref{ass:phase1_lipschitz}(L2). Whether $\rho<1$ thus depends on the size of the conditional covariance $M_{\tk{*}}$ relative to $\sigma_{\tk{*}}^2$. For a Gaussian prior $X_0\sim\gauss(\zero,\Id)$ one has $\Sigma_{0\mid\tk{*}}(\bx)=\sigma_{\tk{*}}^2\Id$ (VP), so $M_{\tk{*}}=\sigma_{\tk{*}}^2$ and
\begin{equation*}
    \rho \;\le\; \alpha_{\tk{*}}^2 \;<\; 1 \qquad\text{for every } \tk{*}\in(0,1),
\end{equation*}
i.e.\ the Phase~1 loop contracts at all noise levels. We caution that the contraction rate alone does \emph{not} single out the empirically preferred window $\tk{*}\in[0.4,0.6]$ (Appendix~\ref{sec:ablation}): that preference reflects the trade-off between warm-start quality and the length of the truncated reverse segment left for Phase~2 (Lemma~\ref{lem:warm_start_truncation}), together with the Phase~1 bias $\bm{B}$ (Remark~\ref{rem:phase1_bias}), rather than any failure of Phase~1 contraction. Replacing $M_{\tk{*}}$ by a crude constant bound $M_{\tk{*}}\le M$ would instead give $\rho\le\alpha_{\tk{*}}^2 M/\sigma_{\tk{*}}^2$, whose apparent blow-up as $\tk{*}\to0$ is an artifact of discarding the $\sigma_{\tk{*}}^2$ decay of $M_{\tk{*}}$, not a genuine loss of contraction.
\end{remark}

\begin{remark}[Bias of the Phase~1 fixed point]
\label{rem:phase1_bias}
Convergence of the Phase~1 loop does not mean convergence to the exact posterior marginal. The fixed point $\mu^\infty$ is the invariant law of the approximate update $T$, whereas the desired target is $\pi_{\tk{*}}(\cdot\mid\obs)$. We define the \textbf{Phase~1 bias}
\begin{equation*}
    \bm{B} \;\coloneqq\; W_1\!\left(\mu^\infty,\pi_{\tk{*}}(\cdot\mid\obs)\right).
\end{equation*}
This bias has two main sources. First, the learned denoiser $\denoiser{\tk{*}}{}{\cdot}[\param]$ is only an approximation to the exact MMSE denoiser. Second, the unregularized Phase~1 data-consistency step enforces the hard constraint $\opA\bx=\obs$, while the noisy posterior for $\obs=\opA\bx_0+\sigma_y\nu$ typically concentrates near this set rather than exactly on it. In the noiseless limit $\sigma_y\to0$, the second contribution disappears because the posterior itself concentrates on $\{\bx:\opA\bx=\obs\}$. For nonzero measurement noise, this contribution can be reduced by replacing the hard projection with a regularized data-consistency step matched to the likelihood noise level, as in the role of $\lambda$ in Phase~2.
\end{remark}

\paragraph{End-to-end posterior-error bound.}
Combining Proposition~\ref{prop:phase1_contraction} with Lemma~\ref{lem:warm_start_truncation} via the triangle inequality
\begin{equation*}
    \bm{\epsilon_*}
    \le
    W_1(\mu^N,\mu^\infty)+\bm{B}
\end{equation*}
yields the following corollary.

\begin{corollary}[End-to-end warm-started error]
\label{cor:end_to_end}
Under Assumptions~\ref{ass:warm_start_stability} and~\ref{ass:phase1_lipschitz}, and when $\rho=\alpha_{\tk{*}}L_{DC}L_D<1$, the Phase~2 output law $\mu_0$ of \textsc{Spin} satisfies
\begin{equation}
\label{eq:end_to_end_bound}
    W_1\!\left(\mu_0,\pi_0(\cdot\mid\obs)\right)
    \;\le\;
    L\cdot\rho^N\cdot W_1(\mu^0,\mu^\infty)
    \;+\;
    L\cdot\bm{B}
    \;+\;
    \bm{e_{tr}},
\end{equation}
where $\mu^0$ is the Phase~1 initialization law, and $L,\bm{e_{tr}}$ are as defined in Lemma~\ref{lem:warm_start_truncation}.
\end{corollary}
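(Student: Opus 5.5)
The corollary follows by chaining three bounds already in hand: the first inequality of Lemma~\ref{lem:warm_start_truncation}, the triangle inequality for $W_1$ at noise level $\tk{*}$, and the geometric estimate~\eqref{eq:phase1_geometric} of Proposition~\ref{prop:phase1_contraction}.

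\textbf{Step 1: bound the Phase~2 output by the warm-start error.} The law $\mu_{\tk{*}}$ in Lemma~\ref{lem:warm_start_truncation} is exactly the Phase~1 output law $\mu^N$. Under Assumption~\ref{ass:warm_start_stability}, the lemma therefore gives
\[
W_1\bigl(\mu_0,\pi_0(\cdot\mid\obs)\bigr)\le L\,\bm{\epsilon_*}+\bm{e_{tr}}, \qquad \bm{\epsilon_*}=W_1\bigl(\mu^N,\pi_{\tk{*}}(\cdot\mid\obs)\bigr).
\]

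\textbf{Step 2: split the warm-start error.} Since $W_1$ is a metric on laws with finite first moment, the triangle inequality through the fixed point $\mu^\infty$ gives
\[
\bm{\epsilon_*}\le W_1(\mu^N,\mu^\infty)+W_1\bigl(\mu^\infty,\pi_{\tk{*}}(\cdot\mid\obs)\bigr)=W_1(\mu^N,\mu^\infty)+\bm{B},
\]
using the definition of the bias $\bm{B}$ in Remark~\ref{rem:phase1_bias}.

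\textbf{Step 3: apply the contraction.} Because $\rho<1$, Proposition~\ref{prop:phase1_contraction} supplies the unique fixed point $\mu^\infty$ and the estimate $W_1(\mu^N,\mu^\infty)\le\rho^N W_1(\mu^0,\mu^\infty)$. Substituting this into Step~2, and the result into Step~1, while using $L\ge0$ to preserve the inequality direction, yields~\eqref{eq:end_to_end_bound}.

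\textbf{Main obstacle: the setting for the Banach fixed-point argument.} The proposition needs a complete metric space that $T$ maps into itself. I would take $\mathcal P_1(\mathbb R^d)$, the laws with finite first moment; it is complete under $W_1$. Invariance holds because the denoiser is Lipschitz, $\DC$ is affine, and Gaussian noise has a finite first moment, so $T$ maps $\mathcal P_1$ into $\mathcal P_1$. The corollary then holds provided $\mu^0$ and $\pi_{\tk{*}}(\cdot\mid\obs)$ both lie in $\mathcal P_1$; otherwise $\bm{B}$ may be infinite and the bound is vacuous but still true. A second point to check is that the pixel-space fresh-noise Phase~1 analyzed in the proposition is the same loop whose output feeds the lemma. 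I would state this identification as part of the standing assumptions, deferring the conservative and latent variants to Remarks~\ref{rem:phase1_conservative} and~\ref{rem:phase1_latent}.
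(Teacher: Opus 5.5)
Your proposal is correct and is the paper's argument: the paper also inserts the triangle inequality $\bm{\epsilon_*}\le W_1(\mu^N,\mu^\infty)+\bm{B}$ and the geometric bound of Proposition~\ref{prop:phase1_contraction} into the first inequality of Lemma~\ref{lem:warm_start_truncation}, using $\mu_{\tk{*}}=\mu^N$. Your added points are sensible clarifications that the paper leaves implicit: running the Banach fixed-point step in $\mathcal P_1$ with $W_1$, and identifying the analyzed fresh-noise loop with the implemented one (which the paper addresses only in Remark~\ref{rem:phase1_conservative}).
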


The bound separates three effects. The first term decays geometrically with the number of Phase~1 iterations $N$. The second term is the irreducible Phase~1 bias from the approximate denoiser and approximate data-consistency map. The third term, $\bm{e_{tr}}$, is the same truncated reverse-sampling error incurred by any sampler that starts at $\tk{*}$ and runs to $0$. Thus, compared with Lemma~\ref{lem:warm_start_truncation}, Corollary~\ref{cor:end_to_end} replaces the abstract condition $L\bm{\epsilon_*}<\bm{e_{sk}}$ with an explicit description of how Phase~1 reduces $\bm{\epsilon_*}$ as $N$ grows.

\begin{remark}[Conservative re-noising]
\label{rem:phase1_conservative}
The implementation uses conservative re-noising rather than the fresh-noise proxy analyzed above. It reuses the predicted noise
\begin{equation*}
    \epsilon^k=(\bx^k_{\tk{*}}-\alpha_{\tk{*}}\hat{\bx}^k_0)/\sigma_{\tk{*}},
\end{equation*}
which introduces an explicit feedthrough from the current noisy iterate $\bx^k_{\tk{*}}$. A direct Lipschitz bound for this variant is therefore looser than~\eqref{eq:phase1_rate}. The qualitative fixed-point picture is unchanged, however: the step still updates the clean component and returns to the same noise level. Empirically, this conservative variant converges faster because it reduces the variance injected per iteration from $\Id$ (the full fresh-noise replacement of~\eqref{eq:phase1_steps}) to $\sigma_{\tk{*}}^4\Id$ (Appendix~\ref{sec:renoising_app}).
\end{remark}

\begin{remark}[Latent space]
\label{rem:phase1_latent}
For latent diffusion, data consistency is applied after decoding, so $\opA$ is replaced by $\opA\circ\decoder$, and the optimized image is encoded again before re-noising. Under Lipschitz assumptions on $\decoder$ and $\encoder$, the same argument applies with $L_{DC}$ in~\eqref{eq:phase1_rate} replaced by $L_{DC}L_{\decoder}L_{\encoder}$. The conclusion is unchanged as long as the resulting $\rho$ is below one.
\end{remark}

\begin{remark}[Nonlinear $\opA$]
\label{rem:nonlinear_A}
For nonlinear $\opA$, $\DC$ is no longer an exact affine projection; it is an approximate least-squares solve. The global projection argument in Assumption~\ref{ass:phase1_lipschitz}(L1) should then be read locally. If $\opA$ has a well-conditioned full-rank Jacobian on the region visited by the iterates, the data-consistency map is locally Lipschitz, giving an analogous local contraction statement. A uniform global rate is not guaranteed in this case, but empirically the Phase~1 loop remains stable on the nonlinear tasks considered in Section~\ref{sec:experiments}.
\end{remark}

\subsection{Warm-start truncation error}
\label{sec:warm_start_error}

We prove Lemma~\ref{lem:warm_start_truncation} by tracking how local transition errors propagate, in $W_1$, along (i) the warm-started reverse trajectory $\tk{*}\to0$ used by \textsc{Spin} and (ii) the full reverse trajectory $1\to0$ used by a baseline sampler initialized from the endpoint Gaussian. The two bounds share the same one-step recursion; the difference is only the initialization point and how far the recursion must be unrolled.

\paragraph{Setup.}
Let
\begin{equation}
    0=\tk{0}<\cdots<\tk{J}=\tk{*}<\cdots<\tk{K}=1
\end{equation}
be the reverse-time discretization, and let $\mathsf{P}_i$ denote the approximate reverse Markov kernel used by the sampler from $\tk{i+1}$ to $\tk{i}$, i.e., the Markov operator on probability measures induced by the conditional density $\pdata{\tk{i}\tbar\tk{i+1}}{\bx_{\tk{i+1}}}{}[\param]$ defined in Section~\ref{sec:background}, extended at scheduled guidance times to include the \textsc{Spin} data-consistency and re-noising correction. The notation $\mathsf{P}_i$ is used throughout this subsection to keep the measure-action notation $\mu\mathsf{P}_i$ readable; it refers to the same Markov kernel as the conditional density above.

\begin{assumption}[Stable reverse error propagation]
\label{ass:warm_start_stability}
For each $i$, the approximate reverse kernel $\mathsf{P}_i$ is $L_i$-Lipschitz in $W_1$:
\begin{equation*}
    W_1(\rho\mathsf{P}_i,\eta\mathsf{P}_i)
    \;\le\;
    L_i\, W_1(\rho,\eta)
    \qquad\text{for all probability measures }\rho,\eta.
\end{equation*}
\end{assumption}

\paragraph{Key quantities.}
Define the \textbf{local posterior-transition error} and the \textbf{cumulative propagation factor}
\begin{equation*}
    \delta_i
    \;\coloneqq\;
    W_1\!\left(
        \pi_{\tk{i+1}}(\cdot\mid\obs)\mathsf{P}_i,\;
        \pi_{\tk{i}}(\cdot\mid\obs)
    \right),
    \qquad
    G_i \;\coloneqq\; \prod_{r=0}^{i-1}L_r,
\end{equation*}
with the convention $G_0=1$. The scalar $\delta_i$ measures how far one application of $\mathsf{P}_i$ takes the true posterior marginal at $\tk{i+1}$ from the true posterior marginal at $\tk{i}$, while $G_i$ is the Lipschitz amplification of any error sitting at time $\tk{i}$ as it is pushed forward by $\mathsf{P}_{i-1},\ldots,\mathsf{P}_0$ down to $\tk{0}=0$.

The three named scalars used in Lemma~\ref{lem:warm_start_truncation} are then
\begin{equation*}
    \underbrace{L}_{\text{\bfseries stability}}
    \;\coloneqq\; G_J,
    \qquad
    \underbrace{\bm{e_{tr}}}_{\text{\bfseries truncated error}}
    \;\coloneqq\; \sum_{i=0}^{J-1} G_i\,\delta_i,
    \qquad
    \underbrace{\bm{e_{sk}}}_{\text{\bfseries skipped error}}
    \;\coloneqq\; \sum_{i=J}^{K-1} G_i\,\delta_i.
\end{equation*}
Intuitively, $\bm{e_{tr}}$ accumulates the per-step posterior-transition errors over the truncated segment $\tk{*}\to0$ that \textsc{Spin} traverses, while $\bm{e_{sk}}$ accumulates them over the skipped segment $1\to\tk{*}$ that a full sampler must additionally traverse. Each contribution is propagated to $\tk{0}=0$ through the corresponding Lipschitz factor $G_i$.

\begin{proof}[Proof of Lemma~\ref{lem:warm_start_truncation}]
Consider first any sequence of laws $\{\eta_{\tk{i}}\}_{i=0}^{K}$
satisfying $\eta_{\tk{i}}=\eta_{\tk{i+1}}\mathsf{P}_i$, and define
\begin{equation*}
    e_i \;\coloneqq\; W_1\!\left(\eta_{\tk{i}},\pi_{\tk{i}}(\cdot\mid\obs)\right).
\end{equation*}
By the triangle inequality and Assumption~\ref{ass:warm_start_stability},
\begin{equation}
\label{eq:one_step_recursion}
\begin{aligned}
    e_i
    &\le
    W_1\!\left(
        \eta_{\tk{i+1}}\mathsf{P}_i,\;
        \pi_{\tk{i+1}}(\cdot\mid\obs)\mathsf{P}_i
    \right)
    +
    W_1\!\left(
        \pi_{\tk{i+1}}(\cdot\mid\obs)\mathsf{P}_i,\;
        \pi_{\tk{i}}(\cdot\mid\obs)
    \right) \\
    &\le L_i\, e_{i+1} + \delta_i.
\end{aligned}
\end{equation}
Thus any trajectory driven by the kernels $\mathsf{P}_i$ satisfies the
same recursion.

For the warm-started sampler, set $\eta_{\tk{J}}=\mu_{\tk{*}}$ and
propagate through $\eta_{\tk{i}}=\eta_{\tk{i+1}}\mathsf{P}_i$ for
$i=J-1,\ldots,0$. Unrolling~\eqref{eq:one_step_recursion} gives
\begin{equation*}
    e_0
    \;\le\;
    G_J\, e_J \;+\; \sum_{i=0}^{J-1} G_i\,\delta_i
    \;=\; L\, e_J + \bm{e_{tr}}.
\end{equation*}
Since $e_J=\bm{\epsilon_*}$, we obtain
\begin{equation*}
    W_1\!\left(\mu_0,\pi_0(\cdot\mid\obs)\right) \;\le\; L\cdot\bm{\epsilon_*} + \bm{e_{tr}}.
\end{equation*}

For the full sampler, write the laws as $\nu_{\tk{i}}$ and initialize at
the exact endpoint marginal
\begin{equation*}
    \nu_{\tk{K}}
    \;=\; \pi_{\tk{K}}(\cdot\mid\obs)
    \;=\; \pi_1(\cdot\mid\obs)
    \;=\; \gauss(\zero,\Id),
\end{equation*}
where the last equality follows from $\alpha_1=0$, $\sigma_1=1$, and the
independence of $X_1$ from $X_0$ and $\obs$. Hence
$W_1(\nu_{\tk{K}},\pi_{\tk{K}}(\cdot\mid\obs))=0$, and unrolling
\eqref{eq:one_step_recursion} from $K-1$ to $0$ yields
\begin{equation*}
    W_1\!\left(\nu_0,\pi_0(\cdot\mid\obs)\right)
    \;\le\; \sum_{i=0}^{K-1} G_i\,\delta_i
    \;=\; \bm{e_{tr}} + \bm{e_{sk}}.
\end{equation*}
The two upper bounds differ only by $L\bm{\epsilon_*}$ versus
$\bm{e_{sk}}$, which gives the stated comparison. The case
$\bm{\epsilon_*}=0$ follows immediately.
\end{proof}

\begin{remark}[Comparison of upper bounds]
Lemma~\ref{lem:warm_start_truncation} compares the canonical $W_1$ upper bounds for the two samplers, not their actual errors. The bounds are tight under standard assumptions, but the actual error ordering can in principle differ; our experiments confirm that the warm-started ordering predicted here matches the empirical ordering.
\end{remark}

\begin{remark}[Interpretation of $\delta_i$ at guidance steps]
At scheduled guidance times, $\mathsf{P}_i$ is the composite ``denoise + pixel-MAP + re-noise + DDIM'' kernel, and $\delta_i$ measures how much this composite kernel deviates from the true posterior transition---generally larger than at unmodified DDIM steps. The lemma is agnostic to this distinction; its content is that warm-starting at $\tk{*}$ replaces $\bm{e_{sk}}$ by $L\cdot\bm{\epsilon_*}$, regardless of how $\delta_i$ is distributed across the timeline.
\end{remark}

\clearpage
\section{Method Comparison}
\label{sec:method_comparison}

\begin{table*}[!t]
    \centering
    \caption{Comparison of training-free diffusion/consistency-model inverse-problem solvers. ``VJP'' indicates backprop through the denoiser or encoder/decoder (latent diffusion models only).}
    \label{tab:method_comparison_supp}
    \fontsize{7.3}{8.5}\selectfont
    \setlength{\tabcolsep}{2pt}
    \begin{tabular}{@{}
        >{\raggedright\arraybackslash}p{2.5cm}
        >{\raggedright\arraybackslash}p{3.3cm}
        >{\raggedright\arraybackslash}p{4.5cm}
        >{\centering\arraybackslash}p{1.2cm}
        >{\centering\arraybackslash}p{1.4cm}
        >{\raggedright\arraybackslash}p{3.5cm}
        @{}}
    \toprule
    \textbf{Method} & \textbf{Target} & \textbf{Kernel / DC step} & \textbf{\shortstack{VJP\\den.?}} & \textbf{\shortstack{VJP\\enc/dec?}} & \textbf{Notes} \\
    \midrule
    \textsc{DPS}, \textsc{PGDM}\\[-0.25em]{\scriptsize\citep{chung2022diffusion,song2023pseudoinverse}} &
    Posterior via DPS-style score approximation &
    Reverse diffusion + per-step likelihood guidance &
    Yes & Yes$^\dagger$ &
    General $\mathcal{A}$; PGDM uses known/generalized $h^\dagger$; high memory/NFE \\
    \addlinespace
    \textsc{DAPS}, \textsc{RED-Diff}\\[-0.25em]{\scriptsize\citep{zhang2025improving,mardani2023variational}} &
    Decoupled / variational posterior &
    Annealed MCMC / variational update without denoiser VJPs &
    No & Often$^\dagger$ &
    Inner sampling or optimization; DAPS latent variants may use encoder/decoder gradients \\
    \addlinespace
    \textsc{DDNM}, \textsc{DDRM}\\[-0.25em]{\scriptsize\citep{wang2022zero,kawar2022denoising}} &
    Linear posterior / projection &
    Denoise + pseudo-inverse / SVD projection (DC) &
    No & No &
    Needs linear $\mathcal{A}$; DDNM uses pseudo-inverse, DDRM uses SVD \\
    \addlinespace
    \textsc{DiffPIR}\\[-0.25em]{\scriptsize\citep{diffpir}} &
    PnP / proximal-MAP &
    Denoise $\leftrightarrow$ pixel-space prox/grad step (DC) &
    No & No &
    Often applies DC at every step \\
    \addlinespace
    \textsc{PnP-DM}\\[-0.25em]{\scriptsize\citep{wu2024principled}} &
    Split sampling / PnP posterior &
    Prior step + likelihood step (often Langevin) &
    No & No &
    Inner MCMC/optimization loops \\
    \addlinespace
    \textsc{ReSample}\\[-0.25em]{\scriptsize\citep{song2023resample}} &
    Hard DC for latent diffusion &
    Skipped DC subproblem + resample to noisy manifold &
    No & Often$^\dagger$ &
    Inner optimization + resampling, typically not every reverse step \\
    \addlinespace
    \textsc{Spin} (ours) &
    Approx.\ split inference (posterior time-marginals) &
    Phase 1: fixed $t_*$ loop; Phase 2: DDIM jumps + pixel opt + noise-preserving re-noise &
    No & No &
    Gradients only through $\mathcal{A}$; sparse schedule + warm start \\
    \bottomrule
    \end{tabular}

    \vspace{1mm}
    {\footnotesize $^\dagger$Only relevant for latent diffusion models (pixel-space models have identity encoder/decoder).}
\end{table*}

\clearpage
\section{Extended Related Work}
\label{sec:related_work_ext}

This appendix expands on families of training-free inverse-problem solvers that are adjacent to but less directly comparable with \textsc{Spin}. We discuss them here to keep the main related-work section focused on methods that share \textsc{Spin}'s core ingredients (avoiding denoiser/encoder VJPs, sparse data consistency, and warm starts).

\paragraph{Latent inverse solvers.}
Several recent methods adapt diffusion priors to latent or text-to-image models, including PSLD~\citep{rout2023solving}, ReSample~\citep{song2023resample}, P2L~\citep{chung2024prompt}, TReg~\citep{kim2025regularization}, and LATINO-PRO~\citep{spagnoletti2025latino-pro}. These works address latent-specific issues such as linear inverse problems, hard data consistency, prompt tuning, text regularization, or few-step consistency-model inference, and they differ in whether data consistency is applied densely or through skipped/inner optimization steps. In contrast, \textsc{Spin} uses the latent model only as a prior: data consistency is solved in pixel space after decoding and then re-encoded, avoiding denoiser and encoder/decoder VJPs while preserving the same warm-start and sparse-guidance structure.

\paragraph{Other warm-start strategies.}
Several earlier works explore alternatives to pure-noise initialization that are not aimed at general inverse problems with explicit measurement operators. SDEdit~\citep{meng2021sdedit} starts the reverse process from an intermediate noise level using a noisy guide image, primarily for stroke-based image synthesis and editing. ILVR~\citep{choi2021ilvr} iteratively injects low-frequency reference information into the reverse trajectory for conditional generation. While these methods share the high-level idea of skipping early reverse steps, they do not enforce data consistency with respect to a forward operator $\mathcal{A}$ and do not provide a guidance schedule, so we treat them as conceptually adjacent rather than direct baselines.

\paragraph{Consistency and flow-based guidance.}
Consistency models~\citep{song2023consistency} have also been adapted to inverse problems, e.g., CM4IR~\citep{garber2025zeroshot}, and related flow or rectified-flow solvers include FlowDPS~\citep{kim2025flowdps}, PnP-Flow~\citep{martin2024pnp}, and FLAIR~\citep{erbach2025flair}; see also~\citep{lipman2024flowmatchingguidecode}. These works are adjacent to our goal of decoupling prior evolution from data consistency, but \textsc{Spin} is developed for DDPM/DDIM priors, where re-noising is a noise-preserving coupling (Appendix~\ref{sec:algorithm}); extending the two-phase design to flow matching is left for future work.

\clearpage
\section{Sampling Schedule}
\label{sec:sampling-schedule}

\paragraph{Motivation:}
In Phase 2 of our method, we perform optimization at a selected subset of $M$ timesteps from the diffusion trajectory. A critical design choice is \textbf{where} along this trajectory to concentrate our computational effort. The spacing of these timesteps---controlled by our scheduling strategy---has a significant impact on reconstruction quality.
\textbf{Key insight}: Optimization applied too late in the denoising process (near $t=0$) has limited opportunity for the denoiser to refine artifacts, while optimization applied too early (near $\tk{*}$) may be wasted on high-noise states. Our scheduling strategies allow flexible allocation of computational budget across the denoising trajectory.

\subsection{Timestep Allocation Framework}

Given $M$ guidance steps and starting timestep $\tk{*}$, we construct a grid of discrete sampler indices $\{\tk{1}, \tk{2}, \ldots, \tk{M}\}$ with $0 < \tk{1} < \cdots < \tk{M} = \tk{*}$. The normalized times reported elsewhere correspond to these indices after rescaling to $[0,1]$. Thus the index $k$ increases with noise level, while Phase~2 traverses the grid in reverse order from $\tk{M}$ to $\tk{1}$.

\paragraph{Weight-based allocation.} Rather than spacing timesteps uniformly, we assign a weight $w_i$ to each position $i \in \{1, \ldots, M\}$ and allocate spacing proportionally. Higher $w_i$ produces a larger gap between adjacent scheduled points in that region, while smaller $w_i$ produces denser guidance. We first define cumulative weights and their integer offsets
\begin{equation*}
S_i \eqdef \sum_{j=1}^{i} w_j,
\qquad
C_i \eqdef \mathrm{floor}\!\left((\tk{*} - 1) \cdot \frac{S_i}{S_M}\right),
\qquad
C_0 \eqdef 0,
\end{equation*}
Equivalently, the adjacent gaps are $C_i-C_{i-1}$. Since the cumulative spacing sum telescopes and $C_M=\tk{*}-1$, the scheduled timesteps are simply
\begin{equation*}
\tk{k} = C_k + 1, \qquad k = 1, \ldots, M,
\end{equation*}
so that $\tk{M} = \tk{*}$ by construction.

\subsection{Scheduling Strategies}

We compare a uniform baseline with five non-uniform scheduling strategies that concentrate optimization steps at different noise levels. The weight functions for each strategy are summarized in Table~\ref{tab:schedule_weights}.

\begin{table}[h]
\centering
\caption{Weight functions for different scheduling strategies. Weights are computed for $i \in \{1, \ldots, M\}$ and used to derive the timestep grid via the construction above.}
\label{tab:schedule_weights}
\begin{tabular}{lcc}
\toprule
\textbf{Strategy} & \textbf{Weight Function} $w_i$ & \textbf{Parameters} \\
\midrule
Uniform & $1$ & --- \\
Linear & $i + 1$ & --- \\
Polynomial & $(i+1)^p$ & $p > 1$ \\
Exponential & $\rho^{i}$ & $\rho > 1$ \\
Gaussian & $\exp\!\left(\dfrac{(i - \mu\,M)^2}{2\sigma^2}\right)$ & $\mu \in [0,1],\, \sigma > 0$ \\
Beta & $\mathrm{Beta}(x_i;\, a, b)^{-1}$, $x_i=\dfrac{i-\frac{1}{2}}{M}$ & $a, b > 0$ \\
\bottomrule
\end{tabular}
\end{table}

\paragraph{Uniform Schedule:} $w_i = 1$ for all $i$. Equal spacing between all guidance steps. This serves as our baseline, distributing computational effort uniformly across the denoising trajectory.

\paragraph{Linear Schedule:} $w_i = i + 1$.
Because $i$ increases with noise level, the gaps grow toward the high-noise end near $\tk{*}$ and shrink toward the low-noise end. This makes guidance points denser later in the reverse trajectory.

\paragraph{Polynomial Schedule:} $w_i = (i+1)^p$ with power $p > 1$.
Similar to linear but with adjustable aggressiveness controlled by power $p$. Higher values of $p$ make the high-noise gaps larger and concentrate guidance more strongly toward lower-noise timesteps. Common choices include $p = 2$ (quadratic growth, moderate concentration) or $p = 3$ (cubic growth, aggressive concentration). We find that this schedule works best for \texttt{FFHQ LDM}.

\paragraph{Exponential Schedule:} $w_i = \rho^{i}$ with growth rate $\rho > 1$.
Rapidly increases the gaps toward the high-noise end using exponential growth. The rate $\rho$ controls how aggressively guidance is concentrated toward low-noise regions. Typical values are $\rho \in [1.5, 2.0]$.

\paragraph{Gaussian Schedule (Default):}
\begin{equation*}
w_i = \exp\!\left(\frac{(i - \mu\,M)^2}{2\sigma^2}\right)
\end{equation*}
with fractional center $\mu \in [0,1]$ and width $\sigma > 0$ (in units of the guidance-step index, so $\sigma$ here corresponds to $\sigma_G$ in the main text). The weight is proportional to the reciprocal of a Gaussian density centered at $\mu\,M$, so $w_i$ is small near the center and large at the endpoints, producing close timestep spacing---and thus dense guidance---around the chosen position $\mu$. For example, $\mu = 0.5$ centers around mid-trajectory, while smaller $\sigma$ creates tighter concentration.
This schedule works best for most tasks. Our experiments (Table~\ref{tab:schedule_comparison}) show this provides the best balance, typically with $\mu \in [0.3, 0.5]$ and $\sigma \in [10, 15]$.
By concentrating optimization in the intermediate noise regime (approximately $t \in [0.4, 0.6]$), the denoiser has sufficient structure to work with while retaining enough trajectory to refine artifacts before reaching the final output.

\paragraph{Beta Schedule:} Let $x_i=(i-\frac{1}{2})/M$ and set $w_i = \mathrm{Beta}(x_i;\, a, b)^{-1}$, where
\begin{equation*}
\mathrm{Beta}(x;\, a, b) = \frac{x^{a-1}(1-x)^{b-1}}{\mathrm{B}(a, b)}
\end{equation*}
is the Beta probability density with shape parameters $a, b > 0$ and $\mathrm{B}(a, b)$ the Beta function.
The reciprocal flips the role of high- and low-density regions: small $w_i$ produces small spacing, so guidance concentrates where the Beta density is \emph{high}. Symmetric concentration occurs when $a = b$; $a < b$ skews concentration toward later iterations (lower noise), while $a > b$ skews it toward earlier iterations (higher noise).

\clearpage
\section{Competitors}
\label{sec:competitors}

In this section, we detail the implementation of baseline methods. We use the hyperparameters suggested by the original authors and perform additional tuning for each dataset when specific values are not provided.

\paragraph{\textsc{DPS.}}
We implemented the method from \citet{chung2022diffusion}, adopting the hyperparameters for each task as specified in \citet{chung2022diffusion} (App. D). For tasks not covered in the original work, we performed our own tuning: specifically, we set $\psi = 0.2$ for JPEG 2\%, $\psi = 0.07$ for High Dynamic Range.

\paragraph{\textsc{DiffPIR.}}
We implemented \citet{diffpir} to ensure compatibility with our codebase, using the hyperparameters from the official released version. We attempted to extend the method to nonlinear problems following the guidelines in \citet{diffpir} (Eqn. (13)); however, the algorithm diverged in these cases. We were unable to resolve this issue as neither the paper nor the released code provide examples for nonlinear problems. For motion blur, \citet{diffpir} offers an FFT-based solution that is only applicable to circular convolution. Since we adopt the experimental setup of \citet{chung2022diffusion}, which employs convolution with reflection padding, we exclude \textsc{DiffPIR} from the motion blur evaluation.

\paragraph{\textsc{DDNM}~\citep{wang2022zero}.}
We adapted the implementation from the released code. The original code provides separate classes for each degradation operator in the module \texttt{functions/svd\_operators.py}. We refactored these into a single unified class to support all SVD-decomposable linear degradation operators. We observe that \textsc{DDNM} exhibits instability for operators whose \texttt{SVD} decomposition is susceptible to numerical errors, such as Gaussian blur with wide convolution kernels. This instability arises from the algorithm's reliance on the pseudo-inverse of the operator.

\paragraph{\textsc{Red-Diff}~\citep{mardani2023variational}.}
We employed the implementation of \textsc{Red-Diff} from the released code. For linear problems, we initialize the variational optimization using the pseudo-inverse of the observation. For nonlinear problems, where the pseudo-inverse is unavailable, we initialize the optimization with a sample drawn from a standard Gaussian distribution.

\paragraph{\textsc{PGDM}.}
We use the implementation provided in the \textsc{Red-Diff} repository, as several authors of \textsc{Red-Diff} are also co-authors of \textsc{PGDM}. We note a minor deviation from the algorithm presented in \citet{song2023pseudoinverse} (Algorithm 1): in the final step, the guidance term $g$ is scaled by $\alpha_{t-1}\alpha_t$ in the implementation, whereas the original formulation scales it by $\alpha_t\sqrt{\alpha_t}$. We find that this modification improves performance across most tasks, with the exception of JPEG dequantization, for which the original $\alpha_t$ scaling yields better results.

\paragraph{\textsc{PSLD}.}
We implemented the PSLD algorithm from \citet{kadkhodaie2020solving} and configured the hyperparameters for each task based on the publicly available implementation.

\paragraph{\textsc{ReSample}.}
We modified the original code from the authors to enable direct adjustment of key hyperparameters: the tolerance $\varepsilon$ and maximum iteration count $N$ for the optimization problems enforcing hard data consistency, as well as the variance scaling factor $\gamma$ for the stochastic resampling distribution. We observe that the algorithm is sensitive to $\varepsilon$, with optimal reconstructions achieved by setting it equal to the noise level of the inverse problem across all tasks and noise levels. In contrast, we find that $\gamma$ has minimal impact on reconstruction quality. To reduce computational cost, we set a maximum threshold of $N = 200$ gradient iterations.

\paragraph{\textsc{DAPS}.}
We use the official codebase and configure the hyperparameters according to \citet{zhang2025improving} (Table 7). For audio-source separation, we set $\sigma_{\max}$ and $\sigma_{\min}$ to match the values used in the sound model, and adapt the Langevin step size \texttt{lr} and standard deviation \texttt{tau} accordingly.

\paragraph{\textsc{PnP-DM}.}
We adapted the implementation from the released code, making the coupling parameter $\rho$ (including its initial value, minimum value, and decay rate) and the number of Langevin steps with their step size directly adjustable. We configure the hyperparameters following \citet{wu2024principled} (Tables 3 and 4). For inpainting tasks, although exact likelihood steps are theoretically possible via Gaussian conjugacy~\citep[Sec.~3.1]{wu2024principled}, we find that Langevin dynamics yield superior results in practice. For instance, the reconstructions in Figure 6 (left) are obtained by exact posterior sampling, whereas the right-hand side uses Langevin dynamics.

\clearpage
\section{Ablation Studies}
\label{sec:ablation}

\begin{table}[h]
    \centering
    \caption{SSIM scores for different noise schedules across tasks}
    \label{tab:schedule_comparison_ssim}
    \fontsize{7.6}{7}\selectfont
    \begin{tabular}{lccc}
    \toprule
    Schedule & \textsc{Deblur} & \textsc{Motion Deblur} & \textsc{SR4} \\
    \midrule
    Uniform         & \tabwideval{0.772}{0.057} & \tabwideval{0.768}{0.057} & \tabwideval{0.772}{0.056} \\
    Linear          & \tabwideval{0.692}{0.068} & \tabwideval{0.673}{0.076} & \tabwideval{0.691}{0.073} \\
    Polynomial      & \tabwideval{0.818}{0.042} & \tabwideval{0.754}{0.041} & \tabwideval{0.785}{0.036} \\
    Exponential     & \tabhi{BurntOrange!40}{\tabwideval{0.823}{0.045}} & \tabhi{BurntOrange!40}{\tabwideval{0.808}{0.040}} & \tabhi{BurntOrange!40}{\tabwideval{0.805}{0.036}} \\
    Beta            & \tabwideval{0.759}{0.034} & \tabwideval{0.523}{0.046} & \tabwideval{0.717}{0.032} \\
    Gaussian        & \tabhi{BurntOrange}{\tabwidebest{0.825}{0.045}} & \tabhi{BurntOrange}{\tabwidebest{0.824}{0.041}} & \tabhi{BurntOrange}{\tabwidebest{0.827}{0.038}} \\
    \bottomrule
    \end{tabular}
\end{table}

\begin{table}[h]
    \centering
    \caption{PSNR scores for different noise schedules across tasks}
    \label{tab:schedule_comparison_psnr}
    \fontsize{8.0}{7}\selectfont
    \begin{tabular}{lccc}
    \toprule
    Schedule & \textsc{Deblur} & \textsc{Motion Deblur} & \textsc{SR4} \\
    \midrule
    Uniform         & \tabwideval{26.42}{1.71} & \tabwideval{26.24}{1.72} & \tabwideval{26.38}{1.69} \\
    Linear          & \tabwideval{23.63}{1.58} & \tabwideval{23.04}{1.82} & \tabwideval{23.48}{1.68} \\
    Polynomial      & \tabhi{BurntOrange}{\tabwidebest{28.78}{1.91}} & \tabwideval{27.93}{1.64} & \tabwideval{28.18}{1.60} \\
    Exponential     & \tabhi{BurntOrange!40}{\tabwideval{28.71}{2.04}} & \tabhi{BurntOrange!40}{\tabwideval{28.52}{1.85}} & \tabhi{BurntOrange!40}{\tabwideval{28.40}{1.71}} \\
    Beta            & \tabwideval{27.84}{1.58} & \tabwideval{24.70}{0.92} & \tabwideval{27.19}{1.31} \\
    Gaussian        & \tabhi{BurntOrange!40}{\tabwideval{28.72}{2.00}} & \tabhi{BurntOrange}{\tabwidebest{28.57}{2.00}} & \tabhi{BurntOrange}{\tabwidebest{28.66}{1.92}} \\
    \bottomrule
    \end{tabular}
\end{table}

\onecolumn
\section{Additional Experiments}
\label{sec:additional-experiments}

\subsection{Computational Resources}
All experiments are conducted on a single node with 8 NVIDIA A6000 GPUs. Performance metrics, including runtime and memory usage, are measured on GPUs without competing processes or memory allocation. CPU load is monitored throughout to ensure no performance degradation from CPU-GPU synchronization bottlenecks.

\subsection{Extended discussion of the 2D trajectory visualization (Figure~\ref{fig:real_sampler_trajectory_landscape})}
\label{sec:fig2-extended}

\begin{figure}[h]
    \centering
    \includegraphics[width=0.95\linewidth]{assets/graphics/real_sampler_trajectory_landscape.png}
    \caption{Reproduction of Figure~\ref{fig:real_sampler_trajectory_landscape} for ease of reference. Mean trajectories of posterior samplers on the 2D inverse problem described in this section. Filled background: exact GMM prior density. Gray contours: relative posterior density. Dashed line: data-consistency subspace $\bm{A}\bx=\obs$. Star: ground truth $\bx^\star$. Empty triangles ($\triangle$) and filled circles ($\bullet$) mark each sampler's trajectory start and endpoint, respectively.}
    \label{fig:real_sampler_trajectory_landscape_app}
\end{figure}

\paragraph{Why a 2D problem.}
Image-domain posterior sampling is hard to inspect because the state is high-dimensional and the prior is implicit in a trained network. To isolate sampler behavior from model-approximation noise, we set up a controlled 2D inverse problem in which both the prior and the score are available in closed form. This reduces the experiment to a microscope for the central conflict any posterior sampler must resolve --- prior plausibility against measurement consistency --- and lets us draw the prior density and the posterior contours on the same plane, which is impossible at image resolution.

\paragraph{Problem setup.}
The unknown is a point $\bx=[x_1,x_2]\in\mathbb{R}^2$ drawn from a four-component Gaussian mixture prior $p(\bx)=\sum_{k=1}^{4}w_k\,\gauss(\bx;\bm{\mu}_k,\Sigma_k)$ with weights $w=[0.38,\,0.26,\,0.22,\,0.14]$, means
\[
\bm{\mu}_1=[0.45,\,0.55],\;\bm{\mu}_2=[-0.55,\,0.45],\;\bm{\mu}_3=[0.02,-0.62],\;\bm{\mu}_4=[0.82,-0.25],
\]
and identity component covariances. The forward operator is a single linear measurement $\bm{A}=[0.55,\;0.65]$ with Gaussian noise of standard deviation $\sigma_\obs=0.08$. The ground-truth point is $\bx^\star=[0.35,\,0.40]$, producing $\obs=\bm{A}\bx^\star=0.4525$. Because $\bm{A}$ is $1\times 2$, the measurement defines a one-dimensional affine subspace $\{\bx:0.55\,x_1+0.65\,x_2=0.4525\}$ in the plane, drawn as the dashed \emph{data-consistency} line in Figure~\ref{fig:real_sampler_trajectory_landscape}. The data alone cannot identify a unique solution; identification only emerges where this line intersects regions of high prior density.

\paragraph{Diffusion model and exact score.}
We use the standard variance-preserving schedule with $T=1000$ linear $\beta_t$'s ranging from $10^{-4}$ to $0.02$, and the corresponding cumulative coefficients $\bar\alpha_t$. A Gaussian mixture remains a Gaussian mixture under the forward noising $\bx_t=\sqrt{\bar\alpha_t}\bx_0+\sqrt{1-\bar\alpha_t}\,\epsilon$, with time-dependent component statistics
\[
\bm{\mu}_k(t)=\sqrt{\bar\alpha_t}\,\bm{\mu}_k,\qquad
\Sigma_k(t)=\bar\alpha_t\,\Sigma_k+(1-\bar\alpha_t)\Id.
\]
We therefore evaluate the noisy-time score $\nabla_{\bx_t}\log p_t(\bx_t)$ analytically (via the closed-form responsibilities of the noised mixture) and convert it to an $\epsilon$-predictor through $\epsilon_\theta(\bx_t,t)=-\sqrt{1-\bar\alpha_t}\,\nabla_{\bx_t}\log p_t(\bx_t)$. This means the diffusion model is an \emph{oracle}: there is no training error and no neural-network approximation, so any difference between samplers reflects the sampler itself, not modeling artifacts.

\paragraph{Construction of the plotted prior and posterior.}
The filled background in Figure~\ref{fig:real_sampler_trajectory_landscape} is the exact normalized prior density $p(\bx)=\sum_k w_k\,\gauss(\bx;\bm{\mu}_k,\Sigma_k)$, evaluated on a $220\times 220$ grid via the closed-form mixture log-density. The thin gray curves are level sets of the posterior, obtained up to a constant from the unnormalized log posterior
\[
\log p(\bx\mid\obs)\;=\;\log p(\bx)\;-\;\tfrac{1}{2\sigma_\obs^2}\|\bm{A}\bx-\obs\|^2\;+\;\text{const}.
\]
Because the normalizer is irrelevant for visualization, the plot shows a \emph{relative} posterior density rescaled so that the maximum is~$1$: contour shapes are exact, but absolute density values are not. The dashed line marks $\bm{A}\bx=\obs$, the black star marks $\bx^\star$, empty triangles mark each trajectory's start, and filled circles mark its endpoint at $t=0$.

\paragraph{Samplers compared and trajectory averaging.}
We run \textsc{Spin} alongside four representative baselines --- \textsc{dps}~\citep{chung2022diffusion}, \textsc{ddnm}~\citep{wang2022zero}, \textsc{pgdm}~\citep{song2023pseudoinverse}, and \textsc{daps}~\citep{zhang2025improving} --- using the same sampler implementations as in the image experiments. Only the problem adapter (forward operator, observation, score model) is replaced by its analytic 2D version; sampler logic is unchanged. Each sampler is executed for $N=300$ independent seeds, with seed $s$ shared across samplers within a run so that trajectories are paired and directly comparable. The curves shown in the figure are the pointwise means of these $300$ runs.

\paragraph{Quantitative summary across 300 runs.}
For each sampler we report the mean Euclidean distance from the endpoint to the ground truth and the mean residual data loss $\tfrac{1}{2\sigma_\obs^2}\|\bm{A}\bx-\obs\|^2$. \textsc{Spin} attains both the smallest distance to the ground truth ($0.143$) and a near-zero data residual ($2.8\!\times\!10^{-5}$); the baselines achieve markedly worse endpoint accuracy (\textsc{pgdm} $0.304$, \textsc{daps} $0.311$, \textsc{dps} $0.350$, \textsc{ddnm} $0.382$) and either trade prior plausibility for data fit (\textsc{daps} $0.139$, \textsc{dps} $0.126$) or fit the measurement only loosely (\textsc{ddnm} $0.087$, \textsc{pgdm} $0.018$).

\begin{table}[h]
    \centering
    \small
    \begin{tabular}{lcc}
        \toprule
        Sampler & Mean dist.\ to truth $\downarrow$ & Mean data loss $\downarrow$ \\
        \midrule
        \textsc{Spin} (ours) & \textbf{0.1431} & \textbf{2.79e-05} \\
        \textsc{pgdm}        & 0.3041 & 0.0184 \\
        \textsc{daps}        & 0.3108 & 0.1391 \\
        \textsc{dps}         & 0.3498 & 0.1260 \\
        \textsc{ddnm}        & 0.3819 & 0.0870 \\
        \bottomrule
    \end{tabular}
    \caption{Mean endpoint metrics across $300$ paired seeds for the 2D landscape problem of Figure~\ref{fig:real_sampler_trajectory_landscape}.}
    \label{tab:fig2-summary}
\end{table}

\paragraph{Reading the figure.}
Three behaviors are worth highlighting. First, the baseline trajectories begin near pure noise and traverse a long, often curved path through low-density regions before reaching the posterior support, while \textsc{Spin}'s warm-start at $\tk{*}$ already lies close to a posterior mode --- so its triangle starts visibly closer to the star than any baseline triangle. Second, baseline endpoints tend to land somewhere along the data-consistency line but away from the dominant posterior mode, indicating samples that satisfy the measurement without being prior-consistent; \textsc{Spin}'s endpoint sits on a high-density posterior contour near the ground truth. Third, because trajectories are averages over $300$ seeds, the smoother and shorter \textsc{Spin} curve also reflects lower across-seed variability: the truncated reverse segment $\tk{*}\to 0$ leaves less room for stochastic drift than the full $1\to 0$ path traversed by the baselines.

\paragraph{Connection to the warm-start truncation bound.}
This visualization is the geometric counterpart of Lemma~\ref{lem:warm_start_truncation}. The baselines accumulate the skipped-segment error $\bm{e_{sk}}$ over the full reverse path, while \textsc{Spin} substitutes the smaller warm-start error $L\cdot\bm{\epsilon_*}$ at the cost of starting from $\tk{*}$ rather than $1$. The figure, together with the endpoint metrics reported above, shows that on this problem $L\cdot\bm{\epsilon_*}\ll\bm{e_{sk}}$, comfortably inside the regime in which the warm-start bound is the tighter of the two (Appendix~\ref{sec:warm_start_error}).

\paragraph{Scope.}
Because the prior, score, and posterior are all known analytically, this experiment isolates sampler behavior but does not test the model-approximation regime that dominates real image problems. It is intended as an interpretive companion to the image-domain results in Section~\ref{sec:experiments}, not as evidence at scale.

\begin{table*}[!htbp]
    \centering
    \caption[FFHQ SSIM comparison]{Quantitative comparison of different methods across various inverse problems. Results are reported as SSIM $\pm$ standard deviation. Higher is better. \textcolor{RoyalBlue}{Blue} factors are relative to \textsc{Spin}; larger values mean greater time or memory savings.}
    \label{tab:ffhq_ssim}
    \fontsize{7.3}{8.5}\selectfont
    \setlength{\tabcolsep}{2.0pt}
    \begin{tabular}{lrrrrrrrr}
    \toprule
    \multirow{2}{*}{} & \multicolumn{8}{c}{\textbf{SSIM FFHQ}} \\
    \cmidrule(lr){2-9}
    \textbf{Task}                     & \textsc{Spin}        & \textsc{DAPS}      & \textsc{Red-Diff}   & \textsc{PNP-DM}    & \textsc{DPS}       & \textsc{DDNM}      & \textsc{PGDM}      & \textsc{Diffpir}     \\
    \midrule
    Gaussian Deblur                    & \tabhi{BurntOrange}{\tabbest{0.83}{0.04}}   & \tabhi{BurntOrange!40}{\tabval{0.81}{0.05}}  & \tabhi{BurntOrange!40}{\tabval{0.81}{0.04}}  & \tabval{0.77}{0.06}  & \tabval{0.72}{0.07}  & \tabval{0.03}{0.01}  & \tabval{0.14}{0.09}  & --              \\
    Motion Deblur                      & \tabhi{BurntOrange}{\tabbest{0.82}{0.04}}   & \tabhi{BurntOrange!40}{\tabval{0.79}{0.05}}  & \tabval{0.74}{0.03}  & \tabval{0.75}{0.06}  & \tabval{0.65}{0.07}  & --               & --               & --              \\
    SR ($\times$4)                     & \tabhi{BurntOrange!40}{\tabval{0.79}{0.04}}   & \tabhi{BurntOrange}{\tabbest{0.80}{0.05}}  & \tabval{0.63}{0.03}  & \tabval{0.75}{0.05}  & \tabval{0.67}{0.08}  & \tabval{0.69}{0.03}  & \tabval{0.56}{0.03}  & --              \\
    SR ($\times$16)                    & \tabhi{BurntOrange}{\tabbest{0.63}{0.07}}   & \tabval{0.55}{0.08}  & \tabval{0.59}{0.06}  & \tabval{0.50}{0.07}  & \tabval{0.50}{0.08}  & \tabhi{BurntOrange!40}{\tabval{0.62}{0.07}}  & \tabval{0.42}{0.06}  & --              \\
    Box Inpainting                     & \tabval{0.76}{0.04}   & \tabhi{BurntOrange!40}{\tabval{0.80}{0.03}}  & \tabval{0.70}{0.03}  & \tabval{0.55}{0.03}  & \tabval{0.72}{0.06}  & \tabval{0.73}{0.03}  & \tabval{0.70}{0.03}  & \tabhi{BurntOrange}{\tabbest{0.82}{0.03}} \\
    Half Inpainting                    & \tabval{0.68}{0.05}   & \tabhi{BurntOrange!40}{\tabval{0.71}{0.04}}  & \tabval{0.63}{0.04}  & \tabval{0.43}{0.02}  & \tabval{0.66}{0.06}  & \tabval{0.65}{0.05}  & \tabval{0.59}{0.04}  & \tabhi{BurntOrange}{\tabbest{0.73}{0.05}} \\
    JPEG (QF=2)                        & \tabhi{BurntOrange}{\tabbest{0.77}{0.05}}   & \tabhi{BurntOrange!40}{\tabval{0.76}{0.05}}  & \tabval{0.72}{0.05}  & \tabval{0.71}{0.06}  & \tabval{0.60}{0.10}  & --               & --               & --              \\
    Phase Retrieval                    & \tabhi{BurntOrange!40}{\tabval{0.58}{0.21}}   & \tabhi{BurntOrange}{\tabbest{0.65}{0.23}}  & \tabval{0.53}{0.22}  & \tabval{0.54}{0.21}  & \tabval{0.39}{0.16}  & --               & --               & --              \\
    HDR                                & \tabhi{BurntOrange!40}{\tabval{0.78}{0.09}}   & \tabhi{BurntOrange}{\tabbest{0.85}{0.07}}  & \tabval{0.72}{0.09}  & \tabval{0.67}{0.13}  & \tabval{0.34}{0.34}  & --               & --               & --              \\
    \midrule
    \textbf{Memory (MB)}               & \leftcell{\tabhi{BurntOrange}{\textbf{1983}}}              & \factorcell{2095}{1.1}             & \colorfactorcell{BurntOrange!40}{1985}{1.0}             & \colorfactorcell{BurntOrange!40}{1985}{1.0}             & \factorcell{3309}{1.7}             & \factorcell{2019}{1.0}             & \factorcell{3409}{1.7}             & \colorfactorcell{BurntOrange!40}{1985}{1.0}            \\
    \textbf{Run time (sec)}            & \leftcell{\tabhi{BurntOrange}{\textbf{25}}}                & \factorcell{75}{3.0}               & \factorcell{50}{2.0}               & \factorcell{194}{7.8}              & \factorcell{105}{4.2}              & \colorfactorcell{BurntOrange!40}{47}{1.9}               & \factorcell{101}{4.0}              & \factorcell{50}{2.0}              \\
    \bottomrule
    \end{tabular}
\end{table*}

\begin{table*}[!htbp]
    \centering
    \caption[FFHQ PSNR comparison]{Quantitative comparison of different methods across various inverse problems. Results are reported as PSNR $\pm$ standard deviation. Higher is better. \textcolor{RoyalBlue}{Blue} factors are relative to \textsc{Spin}; larger values mean greater time or memory savings.}
    \label{tab:ffhq_psnr}
    \fontsize{7.3}{8.5}\selectfont
    \setlength{\tabcolsep}{2.0pt}
    \begin{tabular}{lrrrrrrrr}
    \toprule
    \multirow{2}{*}{} & \multicolumn{8}{c}{\textbf{PSNR FFHQ}} \\
    \cmidrule(lr){2-9}
    \textbf{Task}                     & \textsc{Spin}        & \textsc{DAPS}      & \textsc{Red-Diff}   & \textsc{PNP-DM}    & \textsc{DPS}       & \textsc{DDNM}      & \textsc{PGDM}      & \textsc{Diffpir}     \\
    \midrule
    Gaussian Deblur                    & \tabhi{BurntOrange}{\tabwidebest{29.0}{1.80}}  & \tabwideval{28.2}{2.08} & \tabhi{BurntOrange!40}{\tabwideval{28.7}{1.82}} & \tabwideval{25.2}{2.76} & \tabwideval{24.7}{2.07} & \tabwideval{7.8}{0.10}  & \tabwideval{13.2}{0.70} & --              \\
    Motion Deblur                      & \tabhi{BurntOrange}{\tabwidebest{28.1}{1.76}}  & \tabwideval{27.0}{1.86} & \tabhi{BurntOrange!40}{\tabwideval{27.8}{1.22}} & \tabwideval{24.7}{2.33} & \tabwideval{22.2}{1.83} & --               & --              & --              \\
    SR ($\times$4)                     & \tabhi{BurntOrange}{\tabwidebest{28.2}{1.62}}  & \tabhi{BurntOrange!40}{\tabwideval{27.5}{1.76}} & \tabwideval{26.1}{0.92} & \tabwideval{24.7}{2.21} & \tabwideval{22.8}{2.05} & \tabwideval{26.9}{1.17}  & \tabwideval{24.6}{1.22} & --              \\
    SR ($\times$16)                    & \tabhi{BurntOrange}{\tabwidebest{21.6}{1.69}}  & \tabwideval{17.8}{1.48} & \tabhi{BurntOrange!40}{\tabwideval{21.2}{1.44}} & \tabwideval{16.3}{1.10} & \tabwideval{17.9}{1.65} & \tabhi{BurntOrange}{\tabwidebest{21.6}{1.67}}  & \tabwideval{18.4}{1.23} & --              \\
    Box Inpainting                     & \tabwideval{21.7}{2.65}  & \tabwideval{22.4}{2.90} & \tabwideval{21.6}{2.60} & \tabwideval{12.5}{0.74} & \tabwideval{20.9}{2.41} & \tabhi{BurntOrange!40}{\tabwideval{22.5}{2.78}}  & \tabwideval{21.1}{2.37} & \tabhi{BurntOrange}{\tabwidebest{22.6}{3.42}} \\
    Half Inpainting                    & \tabwideval{16.0}{2.57}  & \tabwideval{15.4}{2.51} & \tabwideval{15.5}{2.51} & \tabwideval{10.8}{0.94} & \tabwideval{15.9}{2.44} & \tabhi{BurntOrange!40}{\tabwideval{16.1}{2.89}}  & \tabwideval{15.0}{2.32} & \tabhi{BurntOrange}{\tabwidebest{16.2}{2.82}} \\
    JPEG (QF=2)                        & \tabhi{BurntOrange}{\tabwidebest{26.2}{1.56}}  & \tabhi{BurntOrange!40}{\tabwideval{25.4}{1.73}} & \tabwideval{24.5}{1.19} & \tabwideval{22.4}{1.49} & \tabwideval{20.4}{1.71} & --               & --               & --              \\
    Phase Retrieval                    & \tabwideval{19.7}{6.82}  & \tabhi{BurntOrange}{\tabwidebest{21.6}{8.92}} & \tabhi{BurntOrange!40}{\tabwideval{21.5}{7.81}} & \tabwideval{18.1}{6.59} & \tabwideval{14.1}{4.32} & --               & --               & --              \\
    HDR                                & \tabhi{BurntOrange!40}{\tabwideval{22.9}{2.87}}  & \tabhi{BurntOrange}{\tabwidebest{27.1}{2.96}} & \tabwideval{21.7}{2.86} & \tabwideval{21.4}{2.12} & \tabwideval{12.9}{7.61} & --               & --               & --              \\
    \midrule
    \textbf{Memory (MB)}               & \leftcell{\tabhi{BurntOrange}{\textbf{1983}}}               & \factorcell{2095}{1.1}             & \colorfactorcell{BurntOrange!40}{1985}{1.0}             & \colorfactorcell{BurntOrange!40}{1985}{1.0}             & \factorcell{3309}{1.7}             & \factorcell{2019}{1.0}             & \factorcell{3409}{1.7}             & \colorfactorcell{BurntOrange!40}{1985}{1.0}            \\
    \textbf{Run time (sec)}            & \leftcell{\tabhi{BurntOrange}{\textbf{25}}}                 & \factorcell{75}{3.0}               & \factorcell{50}{2.0}               & \factorcell{194}{7.8}              & \factorcell{105}{4.2}              & \colorfactorcell{BurntOrange!40}{47}{1.9}               & \factorcell{101}{4.0}              & \factorcell{50}{2.0}              \\
    \bottomrule
    \end{tabular}
\end{table*}

\begin{table*}[!htbp]
    \centering
    \caption[ImageNet SSIM comparison]{Quantitative comparison of different methods across various inverse problems. Results are reported as SSIM $\pm$ standard deviation. Higher is better. \textcolor{RoyalBlue}{Blue} factors are relative to \textsc{Spin}; larger values mean greater time or memory savings.}
    \label{tab:imagenet_ssim}
    \fontsize{7.3}{8.5}\selectfont
    \setlength{\tabcolsep}{1.9pt}
    \begin{tabular}{lrrrrrrrr}
    \toprule
    \multirow{2}{*}{} & \multicolumn{8}{c}{\textbf{SSIM ImageNet}} \\
    \cmidrule(lr){2-9}
    \textbf{Task}                     & \textsc{Spin}      & \textsc{DAPS}      & \textsc{Red-Diff}   & \textsc{PNP-DM}    & \textsc{DPS}       & \textsc{DDNM}      & \textsc{PGDM}      & \textsc{Diffpir}     \\ 
    \midrule
    Gaussian Deblur                    & \tabhi{BurntOrange}{\tabbest{0.77}{0.07}}   & \tabval{0.68}{0.13}  & \tabhi{BurntOrange!40}{\tabval{0.69}{0.10}}  & \tabval{0.63}{0.14}  & \tabval{0.57}{0.17}  & \tabval{0.60}{0.15}  & \tabval{0.07}{0.02}  & --              \\
    Motion Deblur                      & \tabhi{BurntOrange}{\tabbest{0.72}{0.05}}   & \tabhi{BurntOrange!40}{\tabval{0.66}{0.13}}  & \tabval{0.65}{0.05}  & \tabval{0.60}{0.14}  & \tabval{0.49}{0.17}  & --               & --               & --              \\
    SR ($\times$4)                     & \tabhi{BurntOrange!40}{\tabval{0.71}{0.05}}   & \tabval{0.67}{0.13}  & \tabval{0.59}{0.05}  & \tabval{0.60}{0.15}  & \tabval{0.51}{0.17}  & \tabhi{BurntOrange}{\tabbest{0.74}{0.10}}  & \tabval{0.27}{0.05}  & --              \\
    SR ($\times$16)                    & \tabhi{BurntOrange}{\tabbest{0.58}{0.14}}   & \tabval{0.43}{0.15}  & \tabval{0.45}{0.13}  & \tabval{0.40}{0.14}  & \tabval{0.34}{0.16}  & \tabhi{BurntOrange!40}{\tabval{0.50}{0.16}}  & \tabval{0.21}{0.09}  & --              \\
    Box Inpainting                     & \tabval{0.72}{0.07}   & \tabval{0.73}{0.05}  & \tabval{0.36}{0.06}  & \tabval{0.51}{0.04}  & \tabval{0.58}{0.15}  & \tabhi{BurntOrange!40}{\tabval{0.76}{0.05}}  & \tabval{0.61}{0.02}  & \tabhi{BurntOrange}{\tabbest{0.78}{0.04}} \\
    Half Inpainting                    & \tabhi{BurntOrange!40}{\tabval{0.67}{0.10}}   & \tabval{0.65}{0.07}  & \tabval{0.58}{0.05}  & \tabval{0.38}{0.03}  & \tabval{0.52}{0.14}  & \tabval{0.66}{0.08}  & \tabval{0.52}{0.04}  & \tabhi{BurntOrange}{\tabbest{0.72}{0.08}} \\
    JPEG (QF=2)                        & \tabhi{BurntOrange}{\tabbest{0.75}{0.09}}   & \tabhi{BurntOrange!40}{\tabval{0.64}{0.14}}  & \tabval{0.61}{0.11}  & \tabval{0.59}{0.14}  & \tabval{0.46}{0.16}  & --               & --               & --              \\
    Phase Retrieval                    & \tabval{0.25}{0.14}   & \tabhi{BurntOrange}{\tabbest{0.36}{0.24}}  & \tabval{0.23}{0.12}  & \tabhi{BurntOrange!40}{\tabval{0.33}{0.15}}  & \tabval{0.20}{0.11}  & --               & --               & --              \\
    HDR                                & \tabhi{BurntOrange!40}{\tabval{0.77}{0.14}}   & \tabhi{BurntOrange}{\tabbest{0.82}{0.11}}  & \tabval{0.72}{0.12}  & \tabval{0.64}{0.21}  & \tabval{0.16}{0.21}  & --               & --               & --              \\
    \midrule
    \textbf{Memory (MB)}               & \leftcell{\tabhi{BurntOrange}{\textbf{4991}}}              & \colorfactorcell{BurntOrange!40}{4993}{1.0}             & \factorcell{4995}{1.0}             & \colorfactorcell{BurntOrange!40}{4993}{1.0}             & \factorcell{8701}{1.7}             & \factorcell{5031}{1.0}             & \factorcell{8741}{1.8}             & \factorcell{5007}{1.0}            \\
    \textbf{Run time (sec)}            & \leftcell{\tabhi{BurntOrange}{\textbf{83}}}                & \factorcell{219}{2.6}              & \colorfactorcell{BurntOrange!40}{165}{2.0}              & \factorcell{636}{7.7}              & \factorcell{360}{4.3}              & \factorcell{296}{3.6}              & \factorcell{371}{4.5}              & \factorcell{169}{2.0}             \\
    \bottomrule
    \end{tabular}
\end{table*}

\begin{table*}[!htbp]
    \centering
    \caption[ImageNet PSNR comparison]{Quantitative comparison of different methods across various inverse problems. Results are reported as PSNR $\pm$ standard deviation. Higher is better. \textcolor{RoyalBlue}{Blue} factors are relative to \textsc{Spin}; larger values mean greater time or memory savings.}
    \label{tab:imagenet_psnr}
    \fontsize{7.3}{8.5}\selectfont
    \setlength{\tabcolsep}{1.8pt}
    \begin{tabular}{lrrrrrrrr}
    \toprule
    \multirow{2}{*}{} & \multicolumn{8}{c}{\textbf{PSNR ImageNet}} \\
    \cmidrule(lr){2-9}
    \textbf{Task}                     & \textsc{Spin}    & \textsc{DAPS}      & \textsc{Red-Diff}   & \textsc{PNP-DM}    & \textsc{DPS}       & \textsc{DDNM}      & \textsc{PGDM}      & \textsc{Diffpir}     \\
    \midrule
    Gaussian Deblur                    & \tabhi{BurntOrange}{\tabwidebest{27.3}{2.57}}  & \tabwideval{25.4}{3.15} & \tabhi{BurntOrange!40}{\tabwideval{25.8}{2.99}} & \tabwideval{24.5}{2.83} & \tabwideval{22.6}{3.06} & \tabwideval{23.7}{3.26} & \tabwideval{9.7}{0.64}  & --              \\
    Motion Deblur                      & \tabhi{BurntOrange}{\tabwidebest{26.9}{2.20}}  & \tabwideval{24.7}{2.97} & \tabhi{BurntOrange!40}{\tabwideval{25.5}{1.97}} & \tabwideval{23.4}{2.68} & \tabwideval{20.4}{2.65} & --              & --               & --              \\
    SR ($\times$4)                     & \tabhi{BurntOrange}{\tabwidebest{26.6}{1.95}}  & \tabwideval{25.1}{2.91} & \tabwideval{24.5}{1.88} & \tabwideval{23.8}{2.68} & \tabwideval{20.9}{3.00} & \tabhi{BurntOrange!40}{\tabwideval{26.4}{3.33}} & \tabwideval{18.2}{1.80} & --              \\
    SR ($\times$16)                    & \tabhi{BurntOrange}{\tabwidebest{20.8}{2.78}}  & \tabwideval{16.9}{1.90} & \tabwideval{19.7}{1.97} & \tabwideval{15.1}{1.37} & \tabwideval{16.4}{2.21} & \tabhi{BurntOrange!40}{\tabwideval{20.4}{2.38}} & \tabwideval{15.4}{1.89} & --              \\
    Box Inpainting                     & \tabwideval{18.3}{1.99}  & \tabwideval{18.2}{2.39} & \tabwideval{18.0}{2.82} & \tabwideval{12.6}{1.00} & \tabwideval{17.1}{2.18} & \tabhi{BurntOrange}{\tabwidebest{19.2}{2.51}} & \tabwideval{16.4}{1.85} & \tabhi{BurntOrange!40}{\tabwideval{18.70}{3.23}}              \\
    Half Inpainting                    & \tabhi{BurntOrange!40}{\tabwideval{15.9}{2.75}}  & \tabwideval{15.2}{3.03} & \tabwideval{14.3}{3.00} & \tabwideval{10.8}{1.51} & \tabwideval{14.0}{2.83} & \tabwideval{15.5}{3.14} & \tabwideval{13.8}{2.09} & \tabhi{BurntOrange}{\tabwidebest{16.56}{4.09}}              \\
    JPEG (QF=2)                        & \tabhi{BurntOrange}{\tabwidebest{25.6}{2.26}}  & \tabhi{BurntOrange!40}{\tabwideval{23.7}{2.63}} & \tabwideval{22.9}{1.95} & \tabwideval{21.6}{2.00} & \tabwideval{18.6}{2.60} & --              & --               & --              \\
    Phase Retrieval                    & \tabwideval{12.6}{4.43}  & \tabhi{BurntOrange}{\tabwidebest{14.6}{6.76}} & \tabhi{BurntOrange!40}{\tabwideval{14.2}{4.65}} & \tabwideval{13.0}{3.24} & \tabwideval{11.9}{2.44} & --              & --               & --              \\
    HDR                                & \tabhi{BurntOrange!40}{\tabwideval{23.2}{4.04}}  & \tabhi{BurntOrange}{\tabwidebest{25.6}{3.65}} & \tabwideval{22.5}{3.31} & \tabwideval{22.1}{3.96} & \tabwideval{7.98}{3.95}  & --              & --               & --              \\
    \midrule
    \textbf{Memory (MB)}               & \leftcell{\tabhi{BurntOrange}{\textbf{4991}}}             & \colorfactorcell{BurntOrange!40}{4993}{1.0}             & \factorcell{4995}{1.0}             & \colorfactorcell{BurntOrange!40}{4993}{1.0}             & \factorcell{8701}{1.7}             & \factorcell{5031}{1.0}             & \factorcell{8741}{1.8}             & \factorcell{5007}{1.0}               \\
    \textbf{Run time (sec)}            & \leftcell{\tabhi{BurntOrange}{\textbf{83}}}               & \factorcell{219}{2.6}              & \colorfactorcell{BurntOrange!40}{165}{2.0}              & \factorcell{636}{7.7}              & \factorcell{360}{4.3}              & \factorcell{296}{3.6}              & \factorcell{371}{4.5}              & \factorcell{169}{2.0}              \\
    \bottomrule
    \end{tabular}

\end{table*}

\clearpage
\section{Hyperparameters}
\label{sec:hyperparameters}

\begin{table*}[!htbp]
    \centering
    \caption{Pixel-space hyperparameters for the \texttt{ImageNet} and \texttt{FFHQ} experiments, grouped by task family. Unless stated otherwise, we use $\eta_{\text{main}} = 10^{-3}$, $\lambda = 0$, $\eta_{\text{interleave}} = 1.0$, $N_{\texttt{OPT}}^{\texttt{int}} = 50$, $\eta_{\text{init}} = 10^{-4}$, $M=30$, and a Gaussian guidance schedule with $\mu_G=0.4$ and $\sigma_G=10.0$. Table ranges summarize task- and dataset-specific choices within each group.}
    \label{tab:hyperparams_imagenet}
    \label{tab:hyperparams_ffhq}
    \begin{tabular}{lccc}
    \toprule
    \textbf{Task group} & $t_*$ & $N$ & $G_{\texttt{OPT}}$ \\
    \midrule
    Deblurring (Gaussian, Motion) & $[0.44, 0.50]$ & 10 & $[25, 50]$ \\
    Super-resolution (SR4, SR16) & $[0.50, 0.60]$ & $[1, 30]$ & $[75, 200]$ \\
    Inpainting (Box, Half) & $[0.70, 0.80]$ & $[5, 30]$ & 50 \\
    JPEG / Phase Retrieval / HDR & $[0.44, 0.70]$ & $[5, 20]$ & $[25, 200]$ \\
    \bottomrule
    \end{tabular}
    \end{table*}

\begin{table*}[!htbp]
\centering
\caption{Latent-space hyperparameters for the \texttt{FFHQ} experiments, grouped by task family. Unless stated otherwise, we use $\lambda = 0$, $\eta_{\text{init}} = 10^{-4}$, $\eta_{\text{main}} = 10^{-4}$, $\eta_{\text{interleave}} = 1.0$, $N=30$, $M=10$, and a polynomial guidance schedule $\textsc{Poly}(p)$. Table ranges summarize task-specific choices within each group.}
\label{tab:hyperparams_ffhq_ldm}
\begin{tabular}{lcccc}
\toprule
\textbf{Task group} & $t_*$ & $N_{\texttt{OPT}}^{\texttt{int}}$ & $G_{\texttt{OPT}}$ & $p$ \\
\midrule
Deblurring (Gaussian, Motion) & $[0.50, 0.70]$ & $[50, 300]$ & 3000 & $[2.0, 2.5]$ \\
Super-resolution (SR4, SR16) & $[0.30, 0.40]$ & 300 & $[1000, 2000]$ & $[2.5, 4.0]$ \\
Inpainting (Box, Half) & 0.44 & $[800, 1000]$ & 800 & $[2.5, 3.0]$ \\
JPEG / Phase Retrieval / HDR & $[0.40, 0.70]$ & $[50, 500]$ & $[500, 3000]$ & 3.0 \\
\bottomrule
\end{tabular}
\end{table*}

\clearpage
\section{Reconstruction Samples on \texttt{FFHQ}}
\label{sec:images_ffhq}

\begin{figure}[!htbp]
    \centering
    \includegraphics[width=\linewidth]{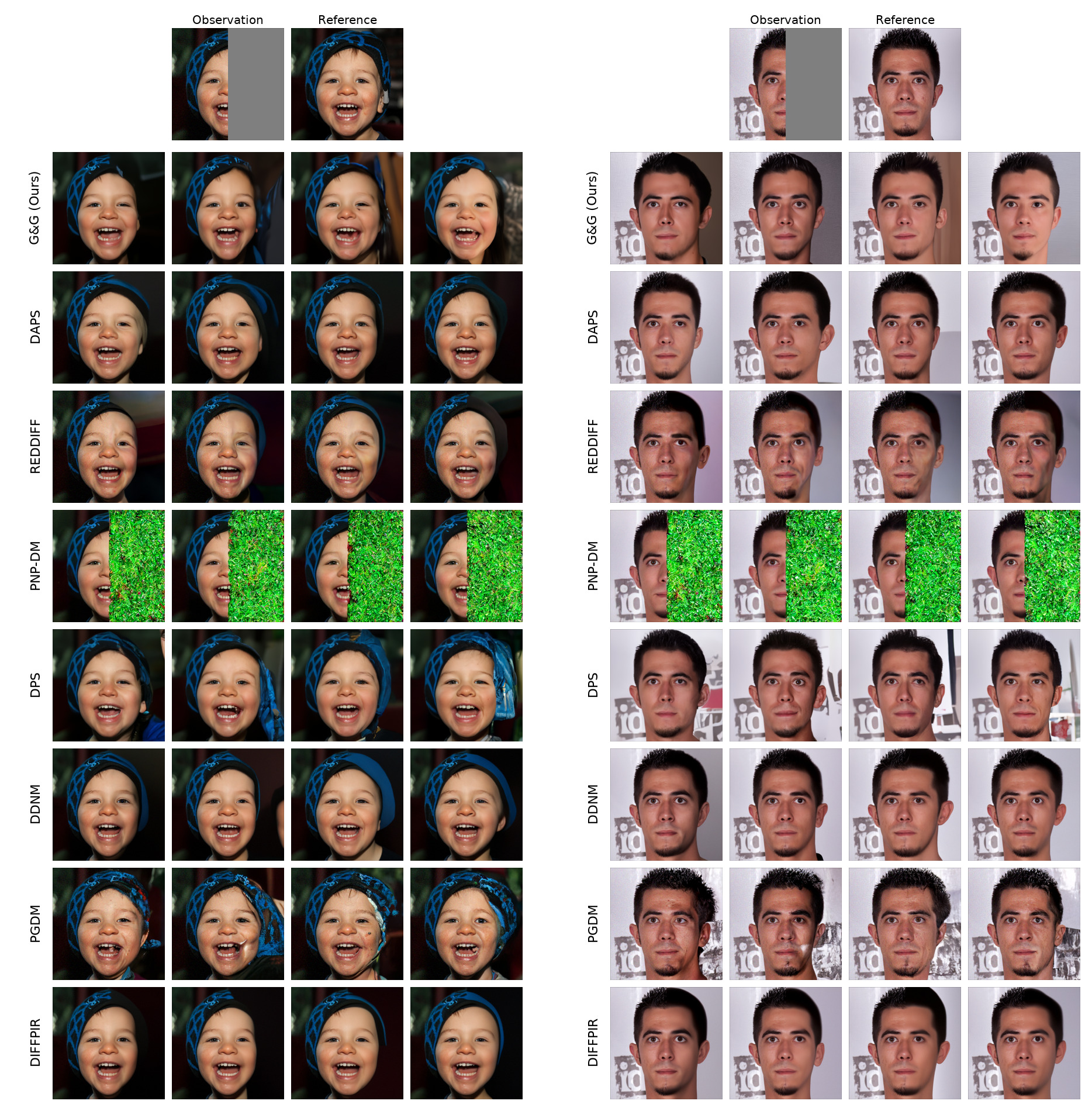}
    \vspace{-0.3cm}
    \caption{Reconstructions for half mask inpainting on \texttt{FFHQ} dataset.}
    \label{fig:all_samplers_ffhq_half_inp}
\end{figure}
\clearpage

\begin{figure}[!htbp]
    \centering
    \includegraphics[width=\linewidth]{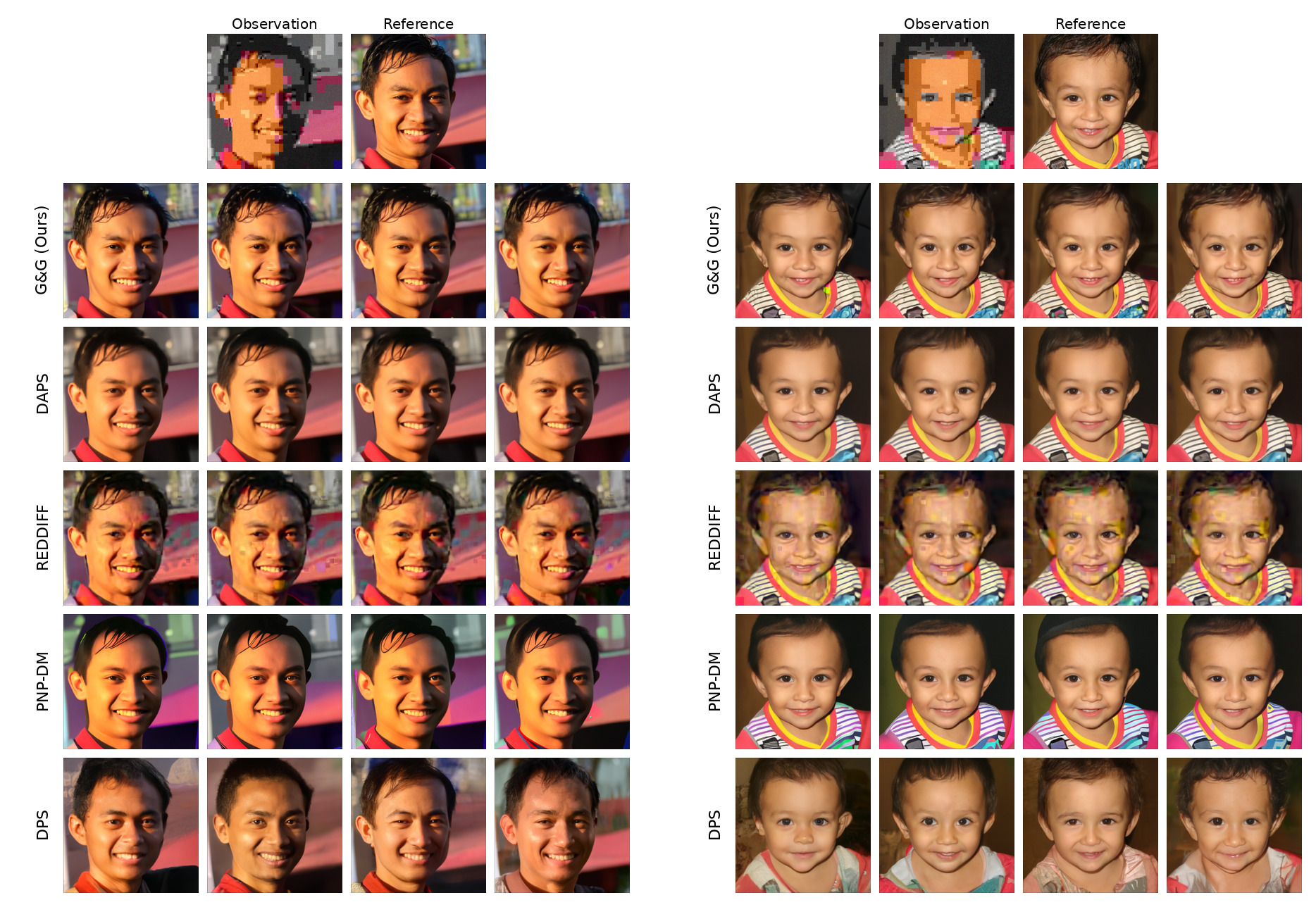}
    \vspace{-0.3cm}
    \caption{JPEG dequantization with QF = 2 on \texttt{FFHQ} dataset.}
    \label{fig:all_samplers_ffhq_jpeg2}
\end{figure}
\clearpage

\begin{figure}[!htbp]
    \centering
    \includegraphics[width=\linewidth]{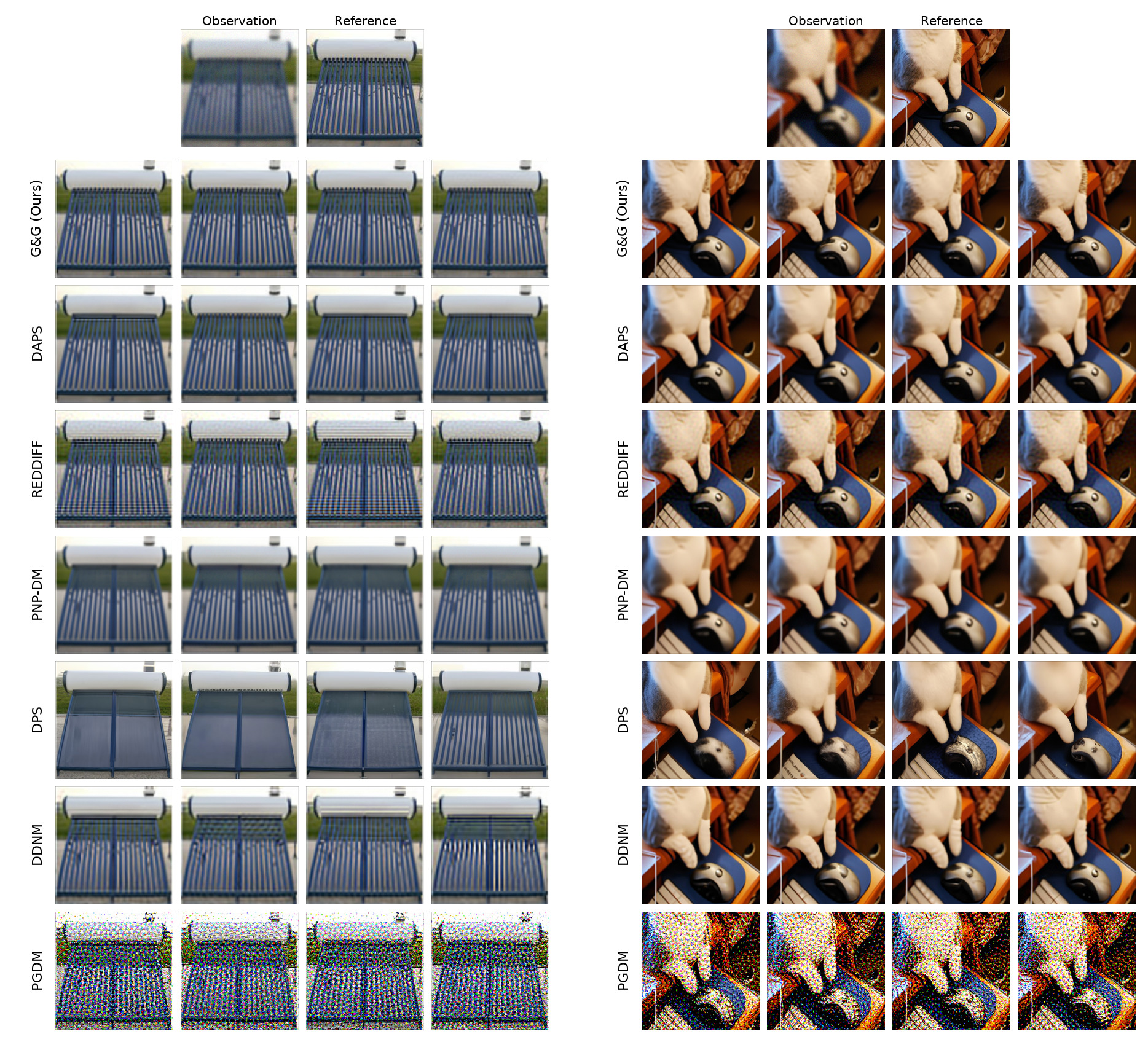}
    \vspace{-0.3cm}
    \caption{Reconstructions for Gaussian deblurring on \texttt{ImageNet} dataset.}
    \label{fig:all_samplers_imagenet_blur}
\end{figure}
\clearpage

\begin{figure}[!htbp]
    \centering
    \includegraphics[width=0.81\linewidth]{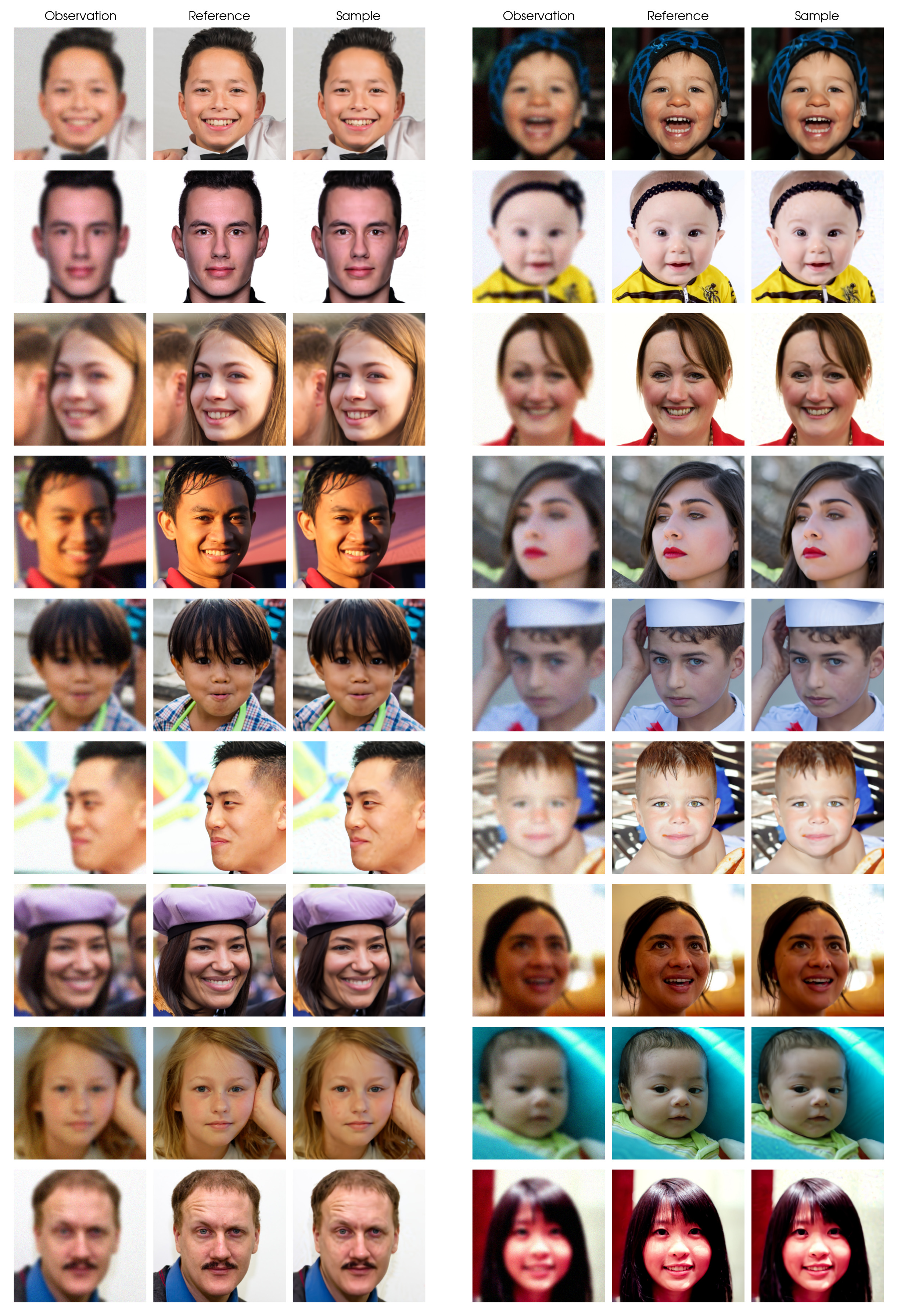}
    \vspace{-0.3cm}
    \caption{Gaussian Deblurring on \texttt{FFHQ} dataset.}
    \label{fig:images_ffhq_blur}
\end{figure}
\clearpage

\begin{figure}[!htbp]
    \centering
    \includegraphics[width=0.81\linewidth]{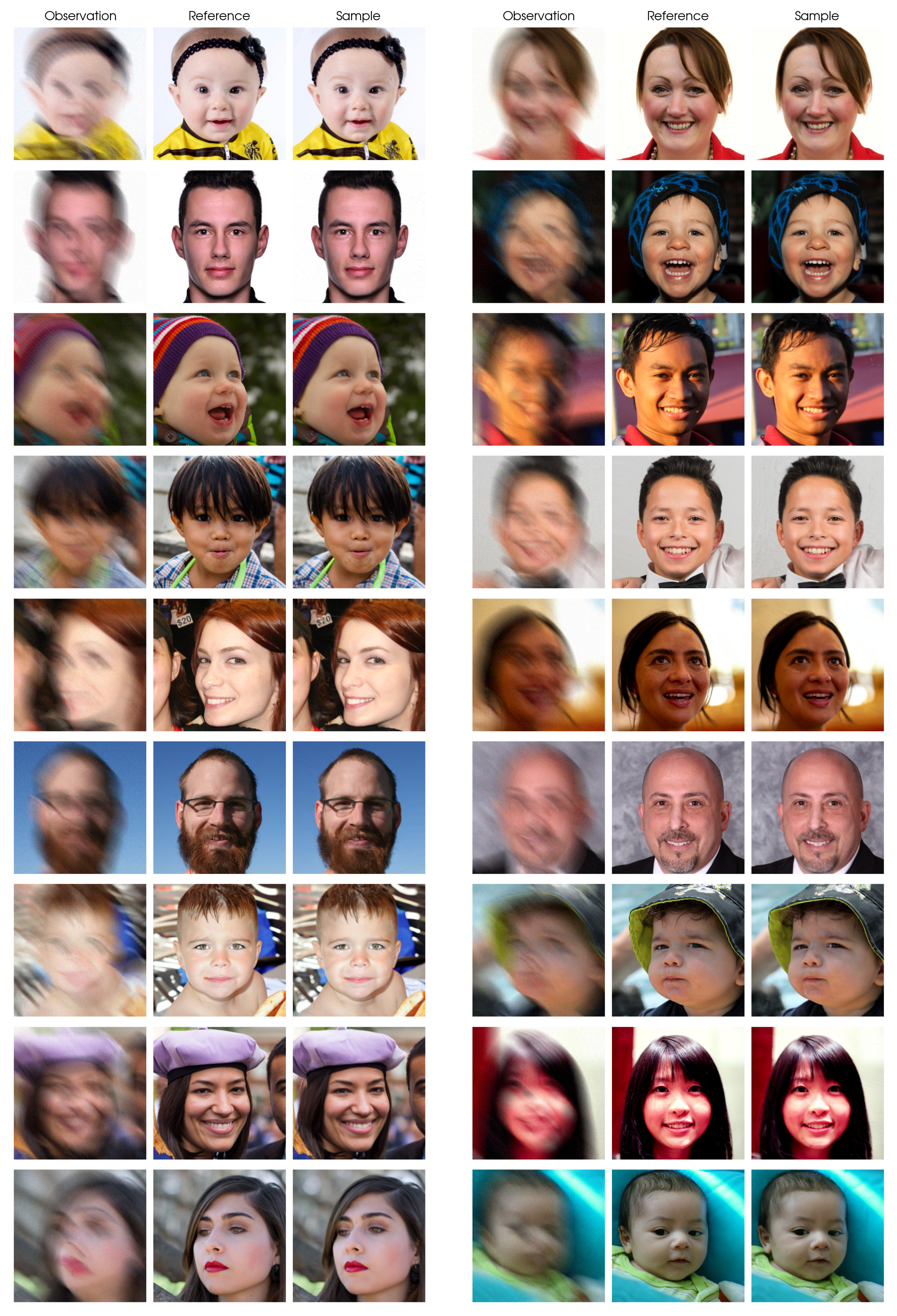}
    \vspace{-0.3cm}
    \caption{Motion Deblurring on \texttt{FFHQ} dataset.}
    \label{fig:images_ffhq_motion_blur}
\end{figure}
\clearpage

\begin{figure}[!htbp]
    \centering
    \includegraphics[width=0.81\linewidth]{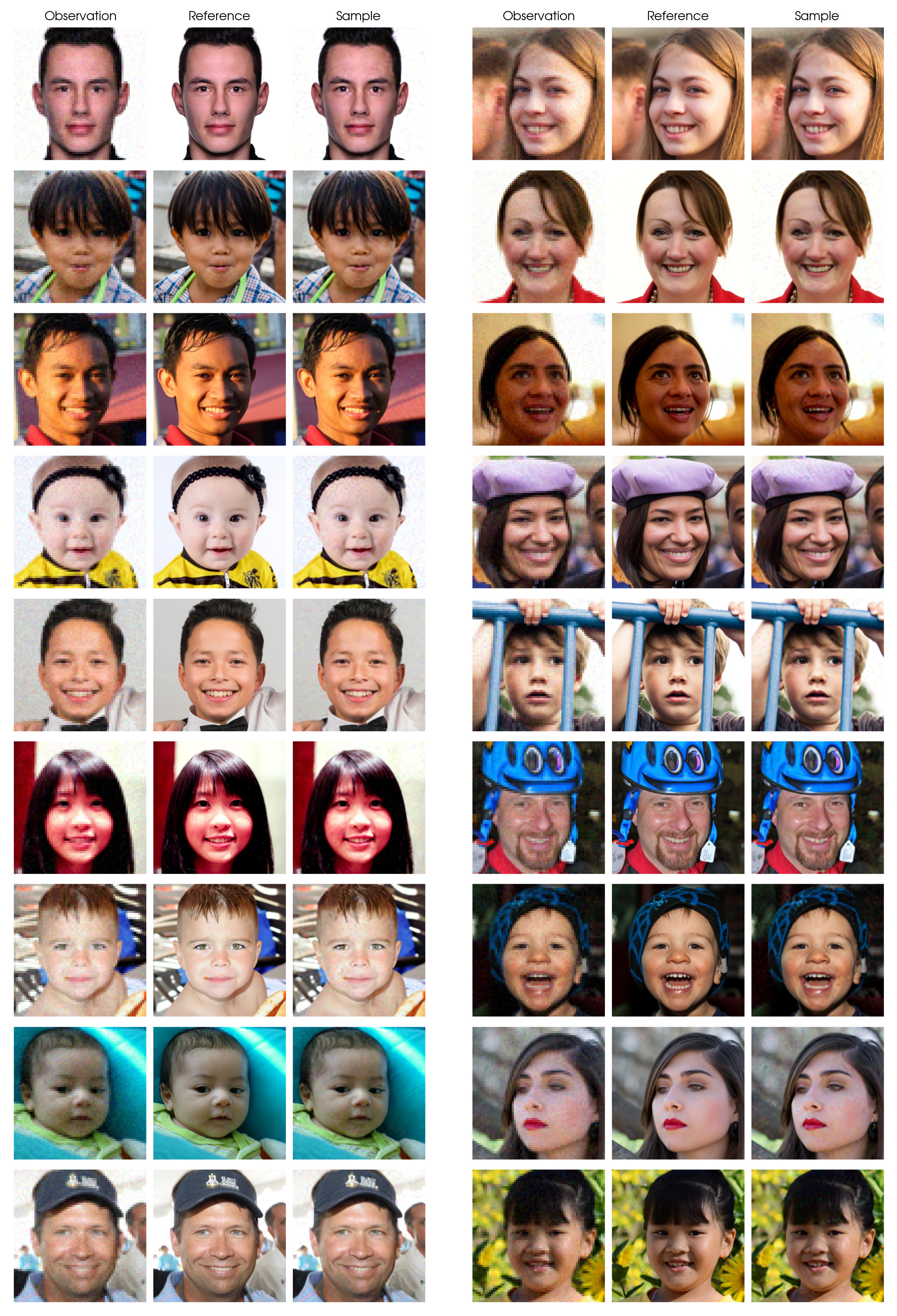}
    \vspace{-0.3cm}
    \caption{Super-resolution ($\times$4) on \texttt{FFHQ} dataset.}
    \label{fig:images_ffhq_sr4}
\end{figure}
\clearpage

\begin{figure}[!htbp]
    \centering
    \includegraphics[width=0.81\linewidth]{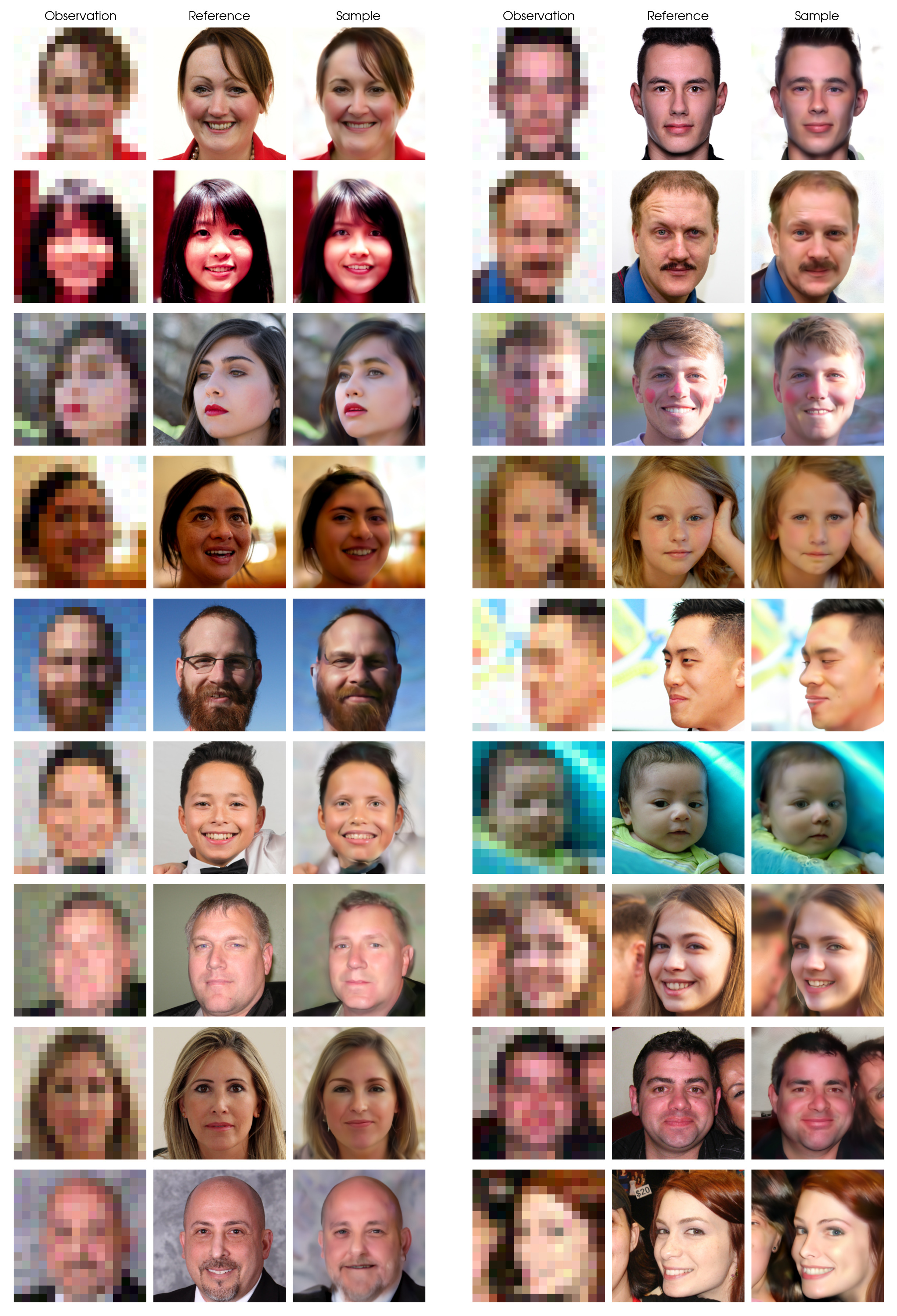}
    \vspace{-0.3cm}
    \caption{Super-resolution ($\times$16) on \texttt{FFHQ} dataset.}
    \label{fig:images_ffhq_sr16}
\end{figure}
\clearpage

\begin{figure}[!htbp]
    \centering
    \includegraphics[width=0.81\linewidth]{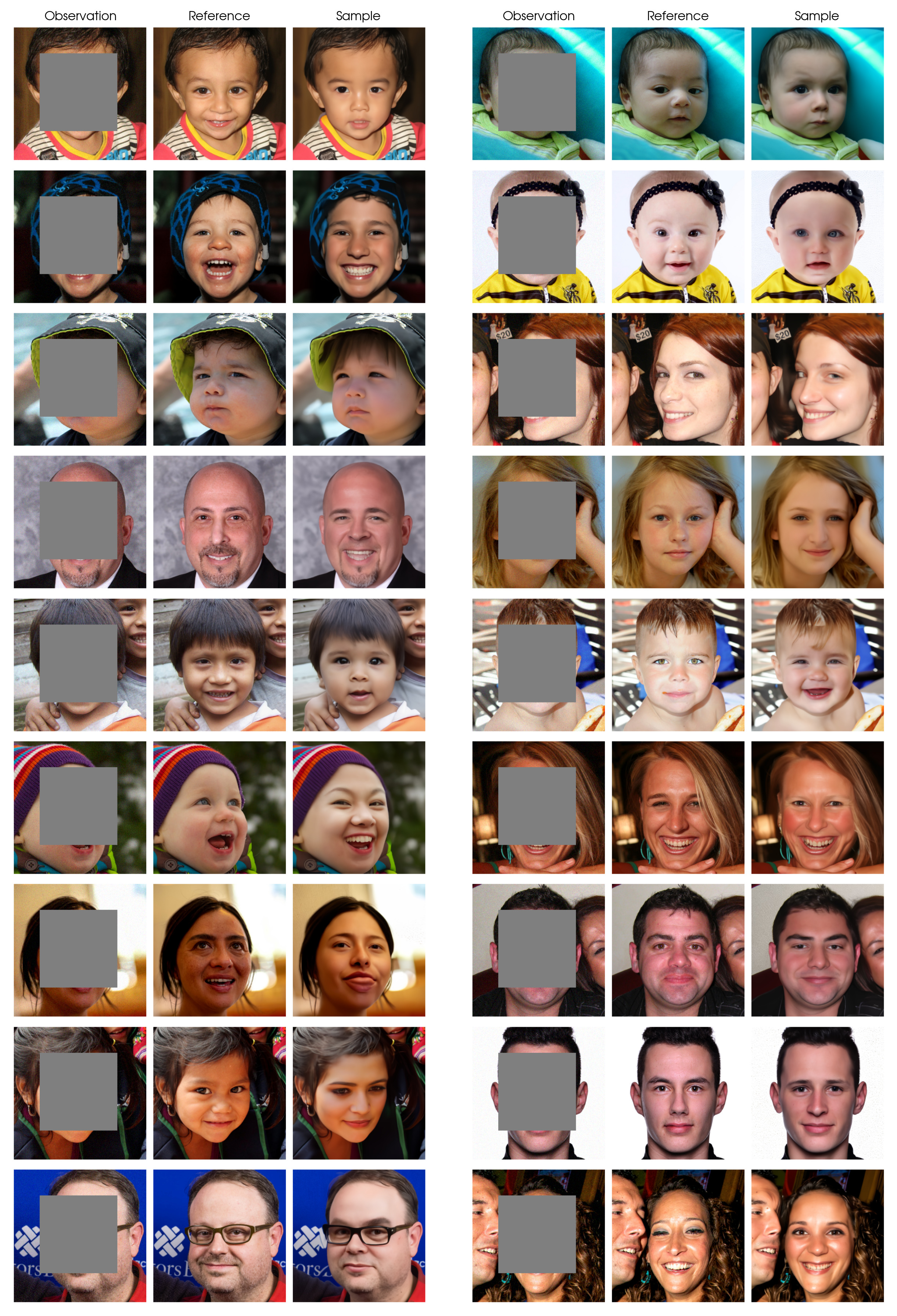}
    \vspace{-0.3cm}
    \caption{Box Inpainting on \texttt{FFHQ} dataset.}
    \label{fig:images_ffhq_box_inpainting}
\end{figure}
\clearpage

\begin{figure}[!htbp]
    \centering
    \includegraphics[width=0.81\linewidth]{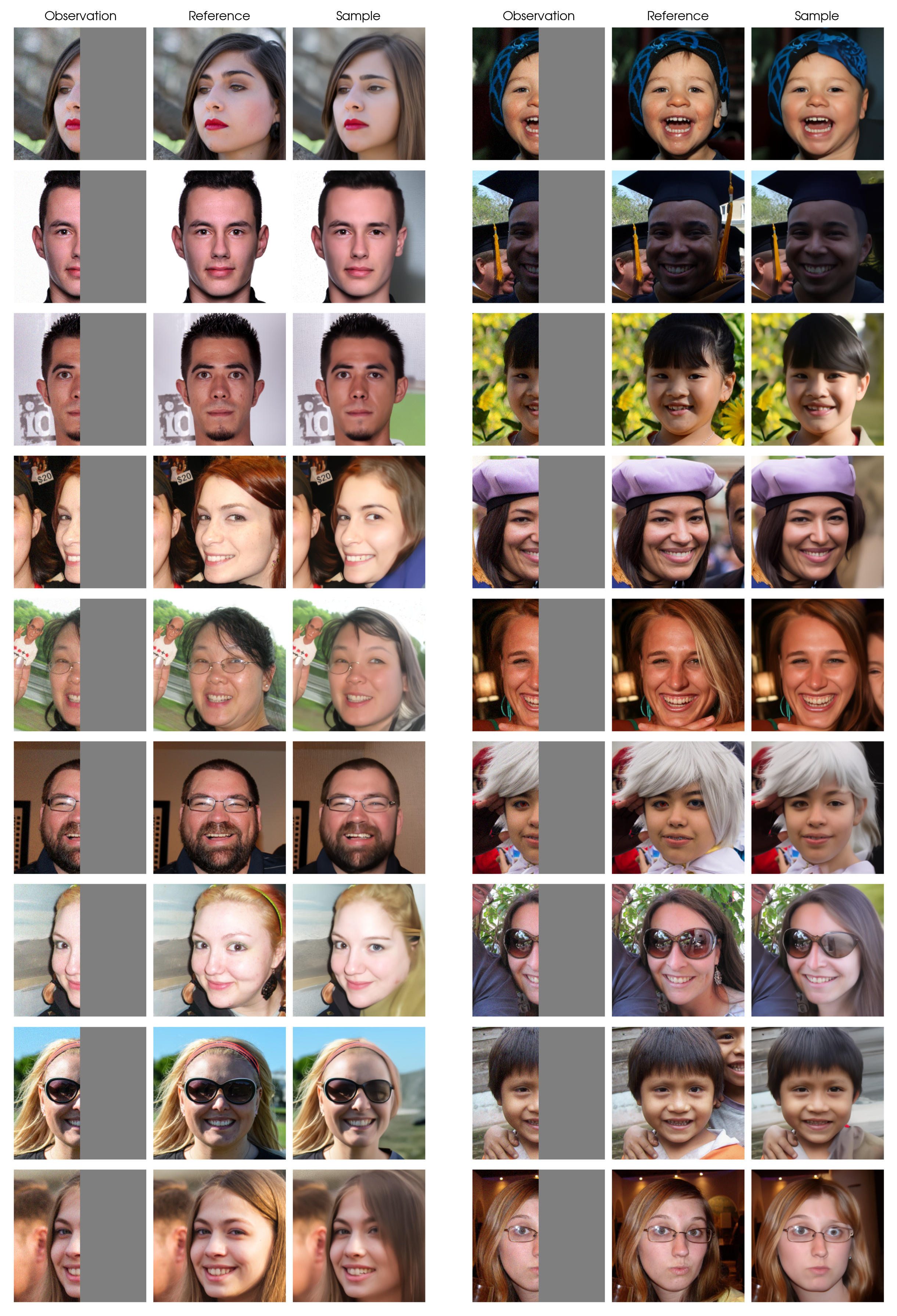}
    \vspace{-0.3cm}
    \caption{Half Inpainting on \texttt{FFHQ} dataset.}
    \label{fig:images_ffhq_half_inpainting}
\end{figure}
\clearpage

\begin{figure}[!htbp]
    \centering
    \includegraphics[width=0.81\linewidth]{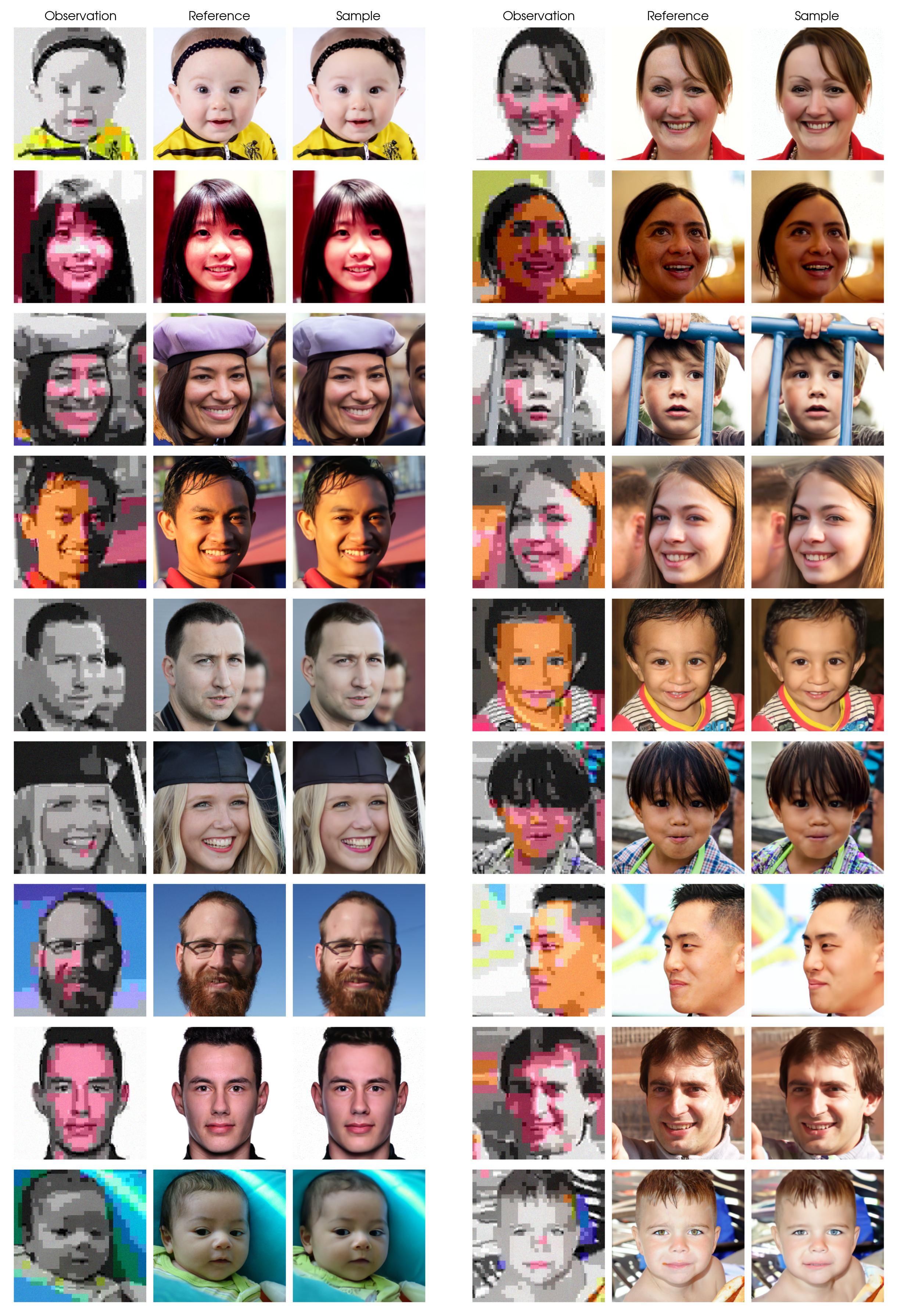}
    \vspace{-0.3cm}
    \caption{JPEG Compression (QF=2) on \texttt{FFHQ} dataset.}
    \label{fig:images_ffhq_jpeg2}
\end{figure}
\clearpage

\begin{figure}[!htbp]
    \centering
    \includegraphics[width=0.81\linewidth]{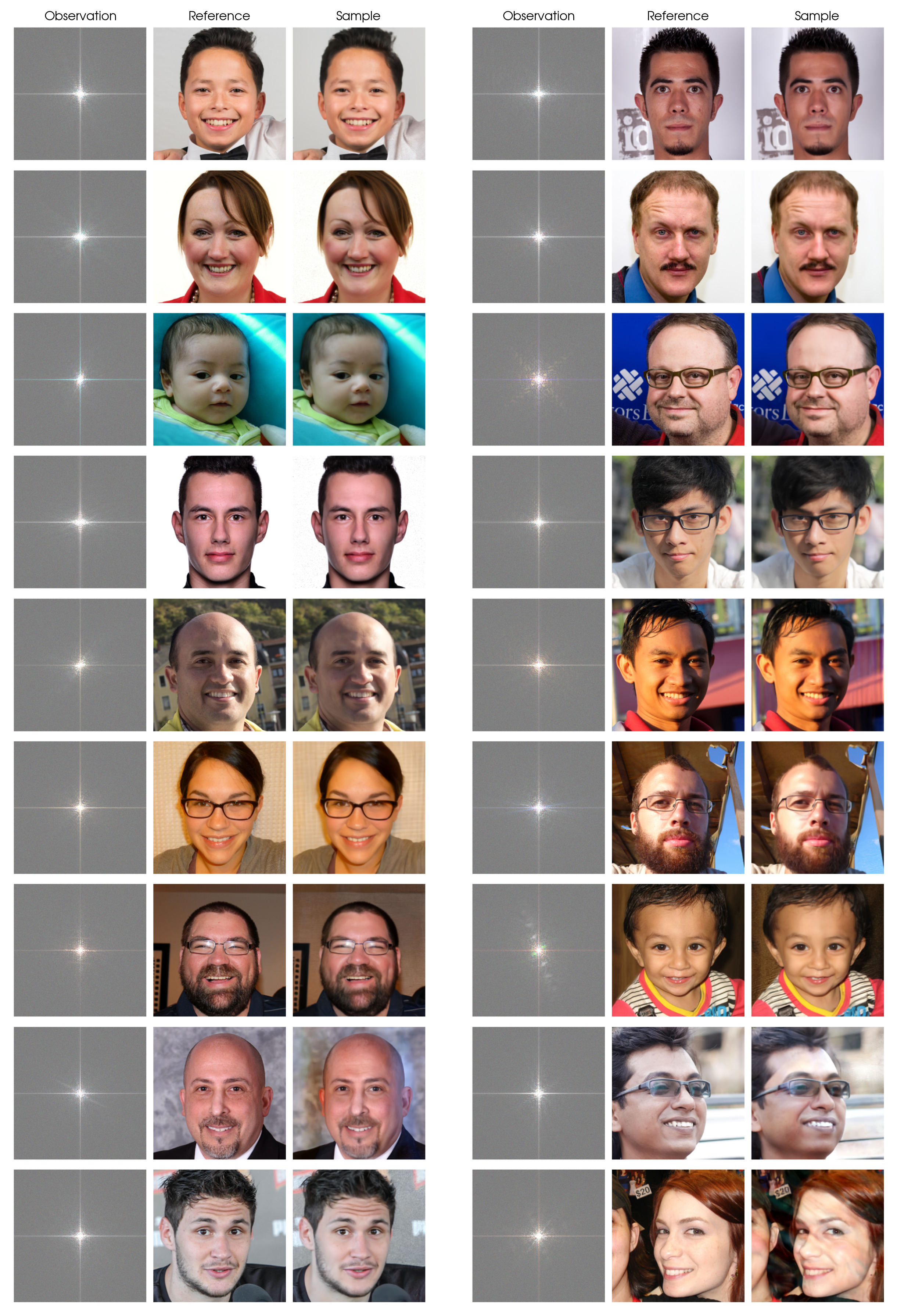}
    \vspace{-0.3cm}
    \caption{Phase Retrieval on \texttt{FFHQ} dataset.}
    \label{fig:images_ffhq_phase_retrieval}
\end{figure}

\clearpage
\section{Reconstruction Samples on \texttt{FFHQ} with \texttt{LDM}}
\label{sec:images_ffhq_ldm}

\begin{figure}[!htbp]
    \centering
    \includegraphics[width=0.81\linewidth]{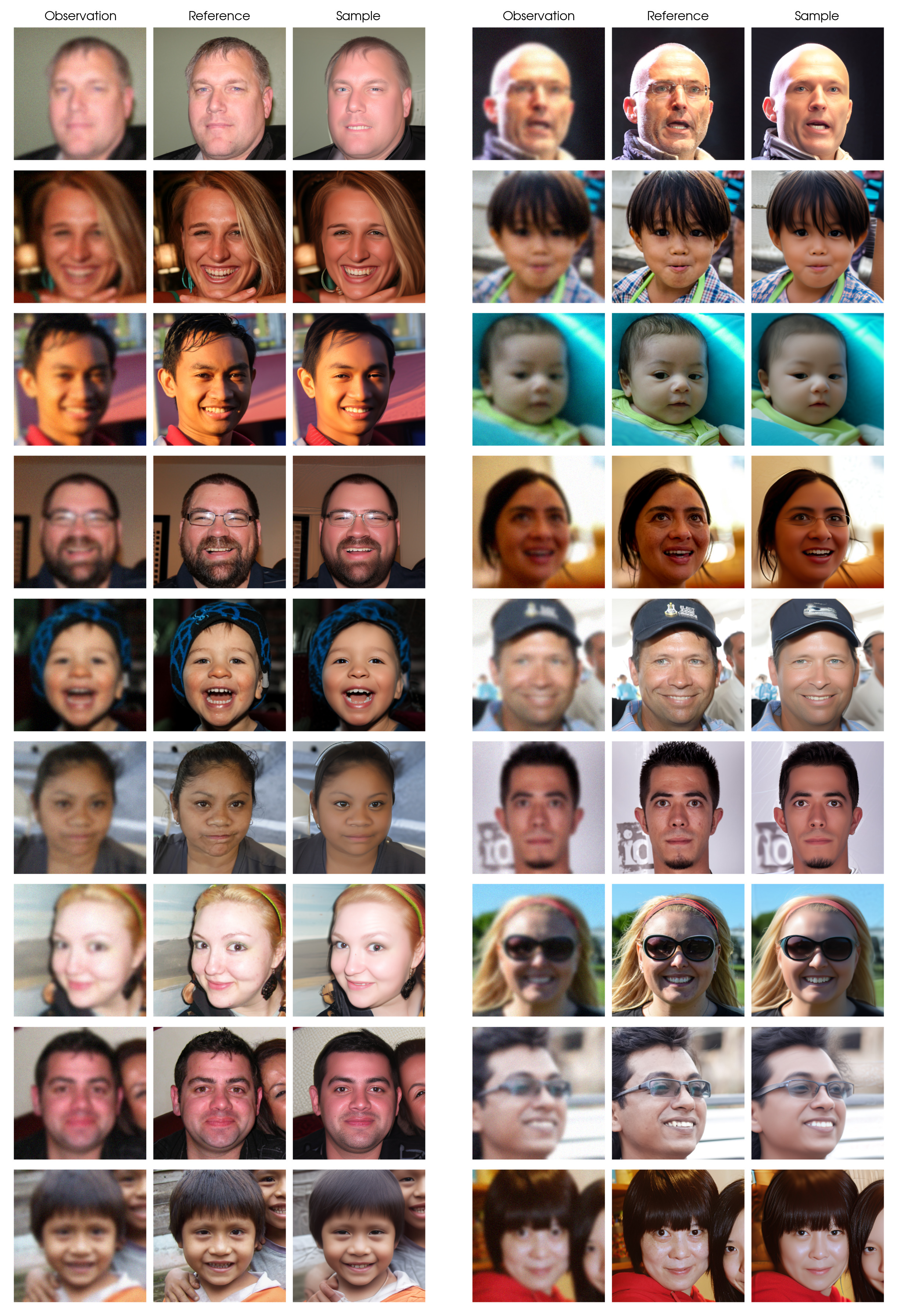}
    \vspace{-0.3cm}
    \caption{Gaussian Deblurring on \texttt{FFHQ} dataset with \texttt{LDM} prior.}
    \label{fig:images_ffhq_ldm_blur}
\end{figure}
\clearpage

\begin{figure}[!htbp]
    \centering
    \includegraphics[width=0.81\linewidth]{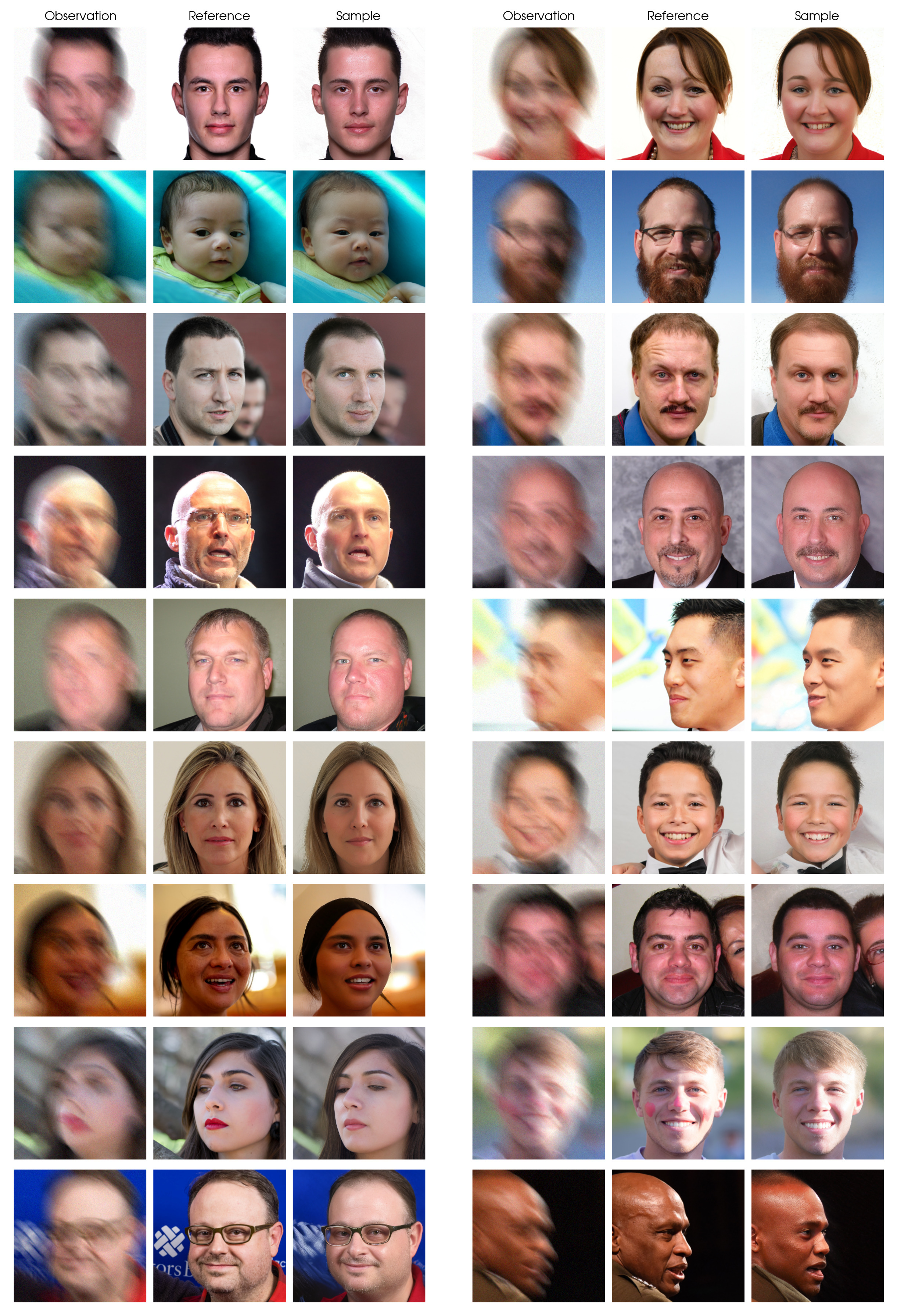}
    \vspace{-0.3cm}
    \caption{Motion Deblurring on \texttt{FFHQ} dataset with \texttt{LDM} prior.}
    \label{fig:images_ffhq_moblur}
\end{figure}
\clearpage

\begin{figure}[!htbp]
    \centering
    \includegraphics[width=0.81\linewidth]{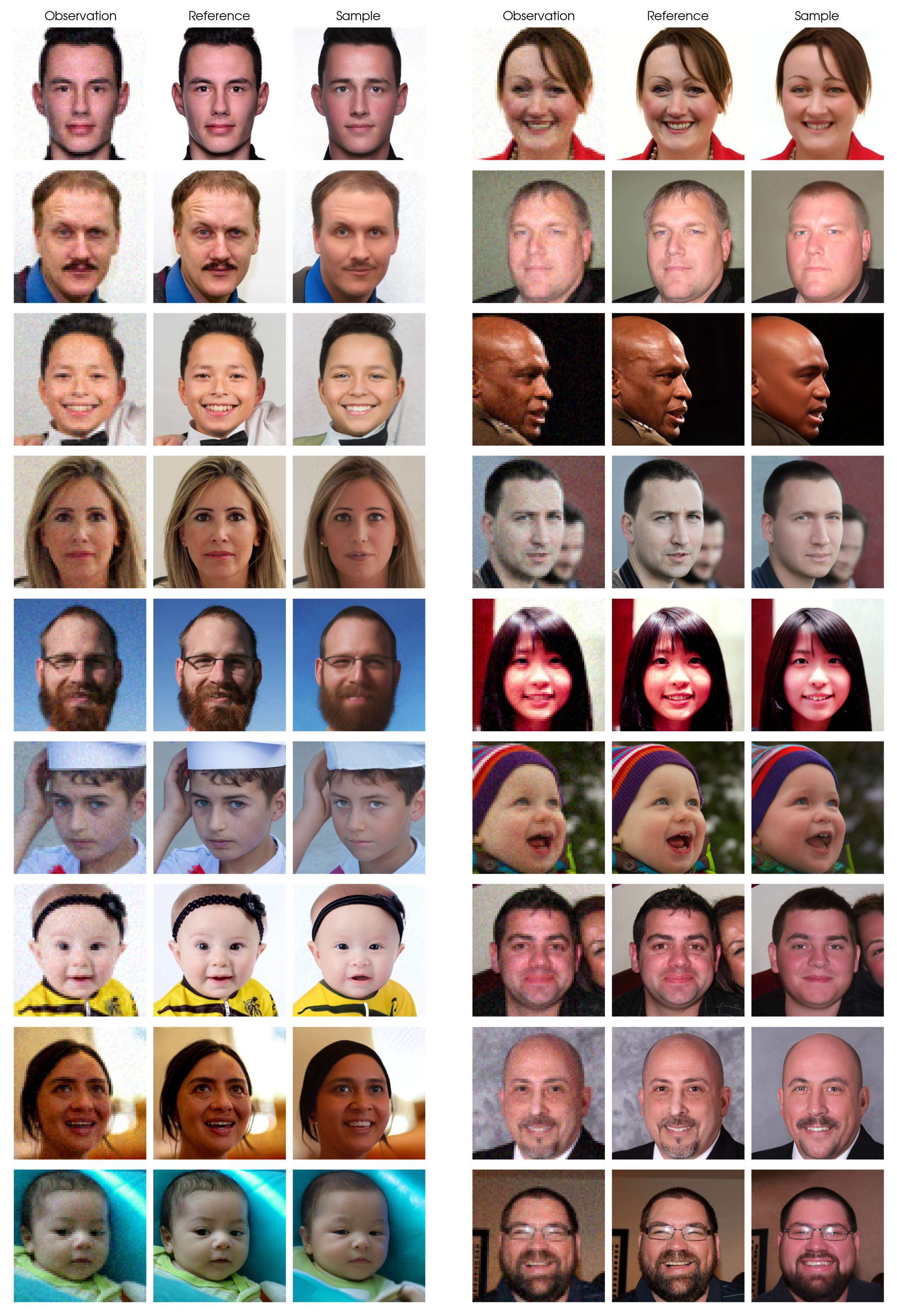}
    \vspace{-0.3cm}
    \caption{Super-Resolution ($\times$4) on \texttt{FFHQ} dataset with \texttt{LDM} prior.}
    \label{fig:images_ffhq_ldm_sr4}
\end{figure}
\clearpage

\begin{figure}[!htbp]
    \centering
    \includegraphics[width=0.81\linewidth]{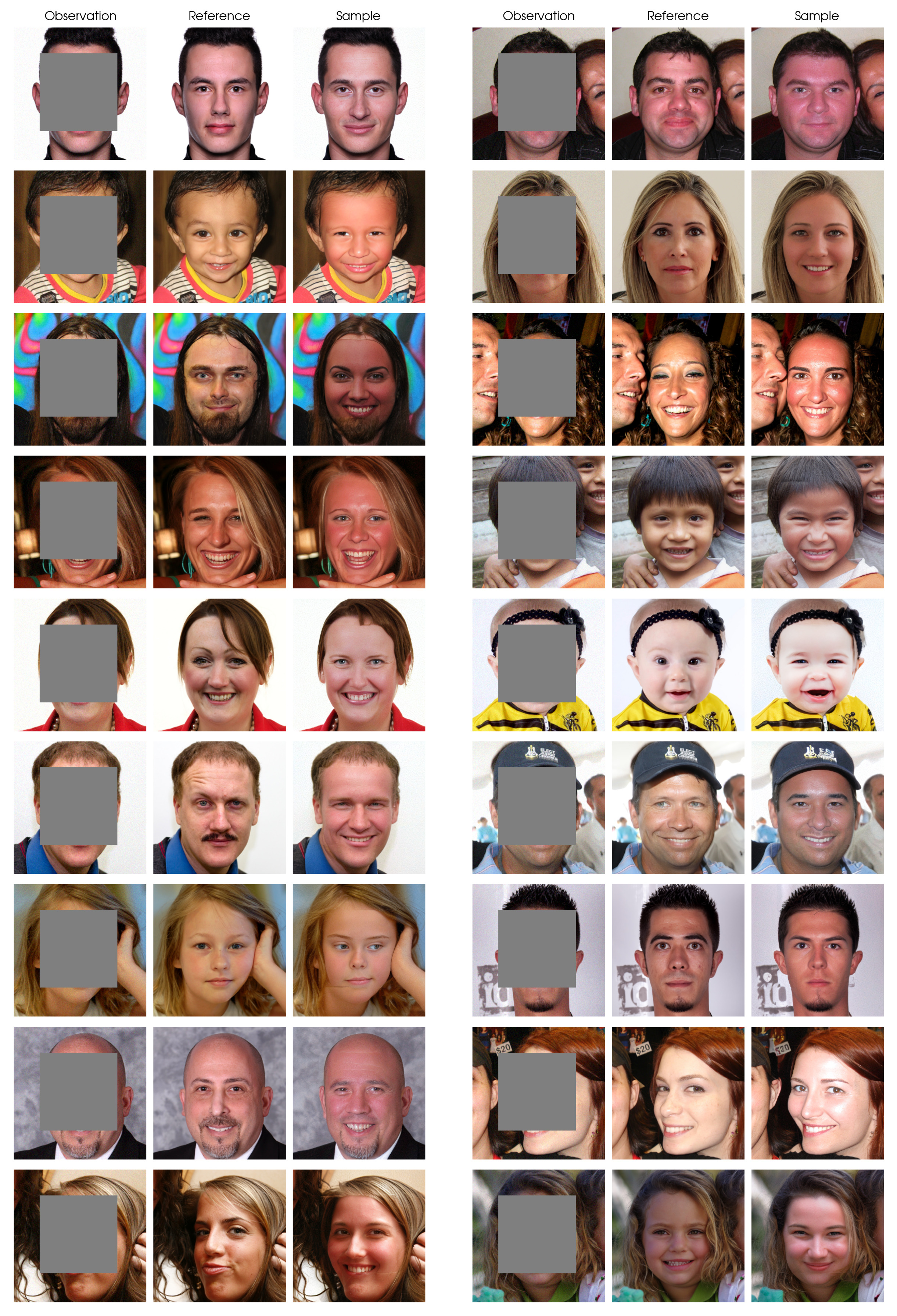}
    \vspace{-0.3cm}
    \caption{Box Inpainting on \texttt{FFHQ} dataset with \texttt{LDM} prior.}
    \label{fig:images_ffhq_inpainting}
\end{figure}
\clearpage

\begin{figure}[!htbp]
    \centering
    \includegraphics[width=0.81\linewidth]{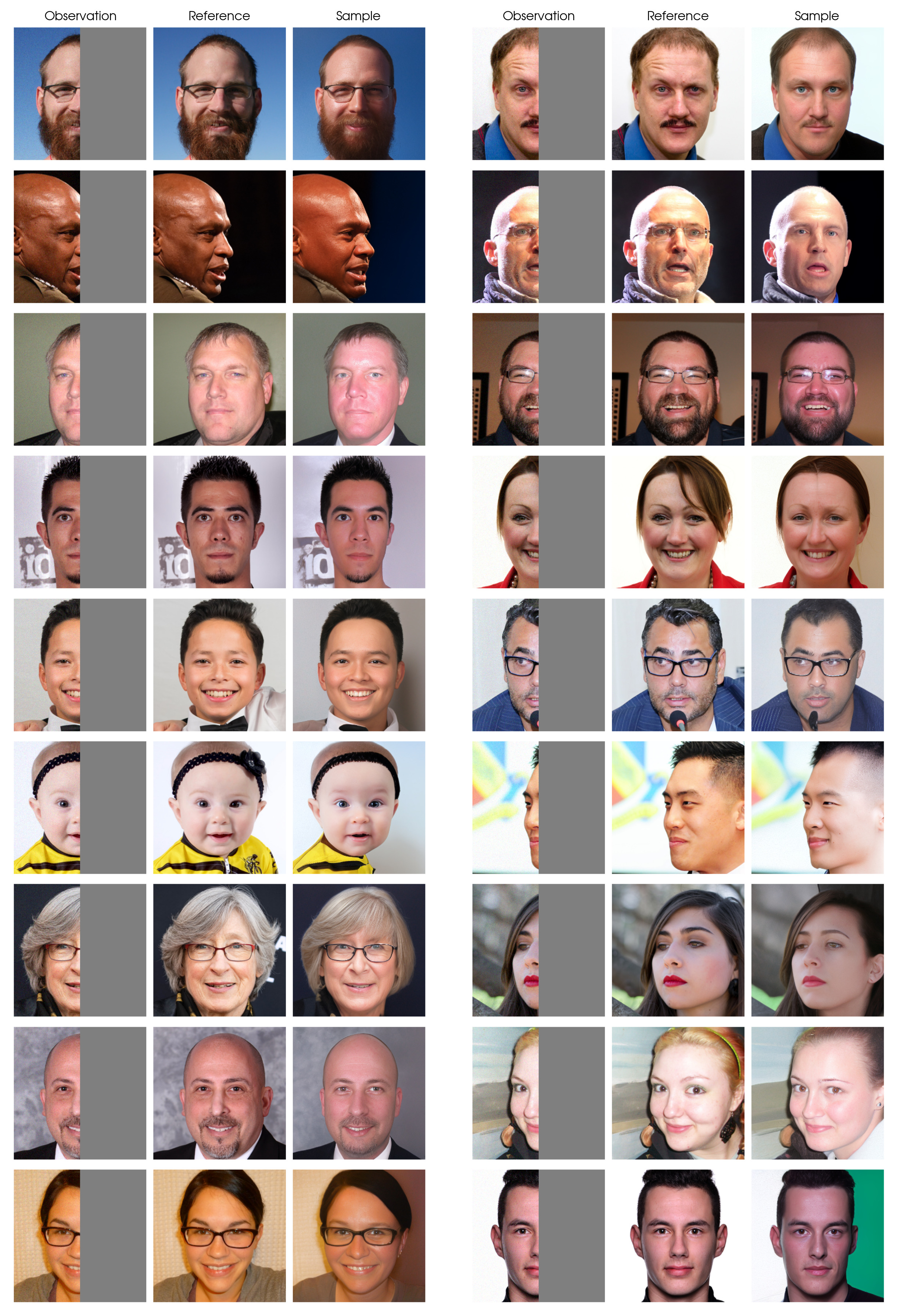}
    \vspace{-0.3cm}
    \caption{Half Inpainting on \texttt{FFHQ} dataset with \texttt{LDM} prior.}
    \label{fig:images_ffhq_outpainting}
\end{figure}
\clearpage

\clearpage
\section{Reconstruction Samples on \texttt{ImageNet}}
\label{sec:images_imagenet}

\begin{figure}[!htbp]
    \centering
    \includegraphics[width=0.935\linewidth]{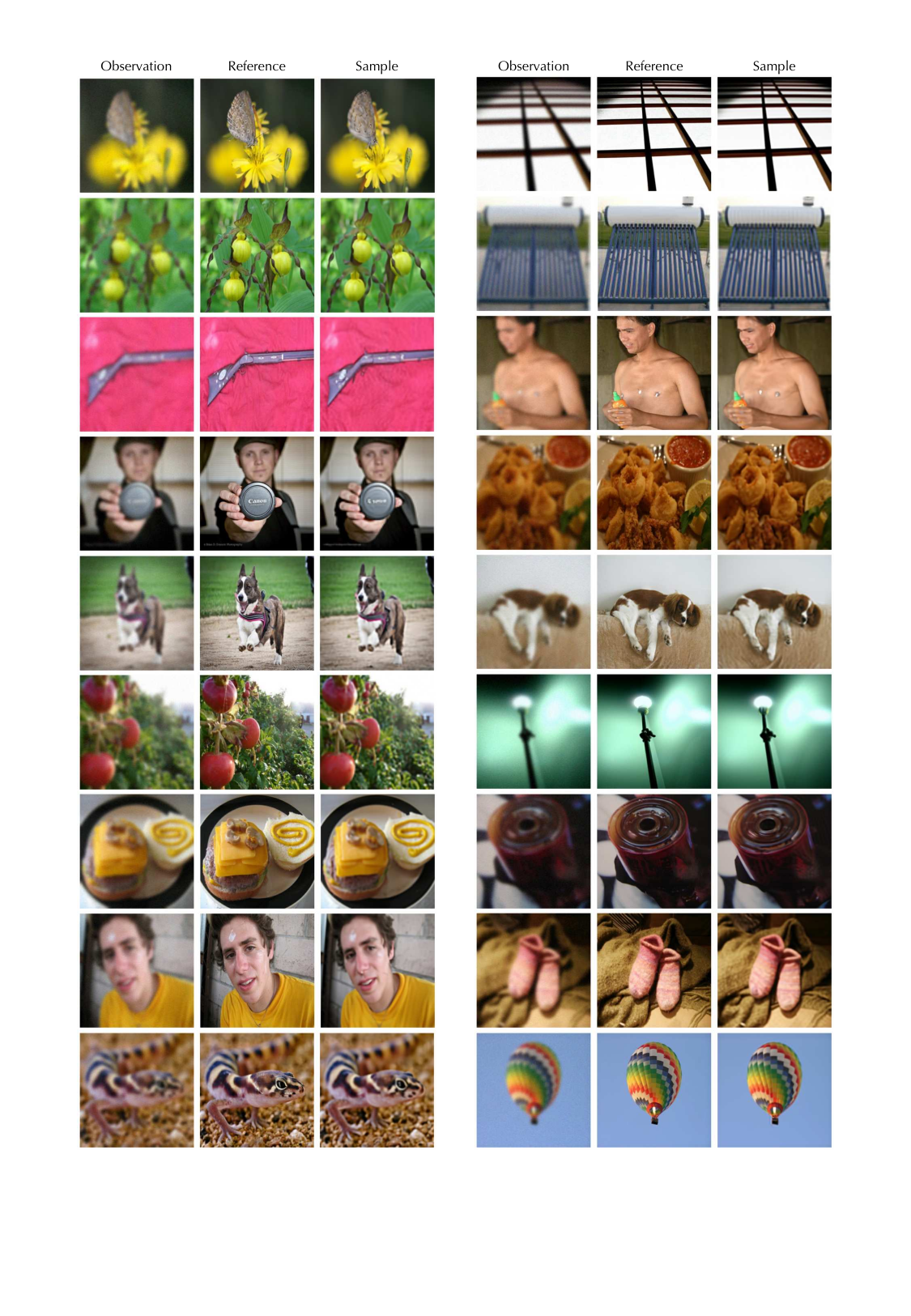}
    \vspace{-2.5cm}
    \caption{Deblurring on \texttt{ImageNet} dataset.}
    \label{fig:images_imagenet_blur}
\end{figure}
\clearpage

\begin{figure}[!htbp]
    \centering
    \includegraphics[width=0.935\linewidth]{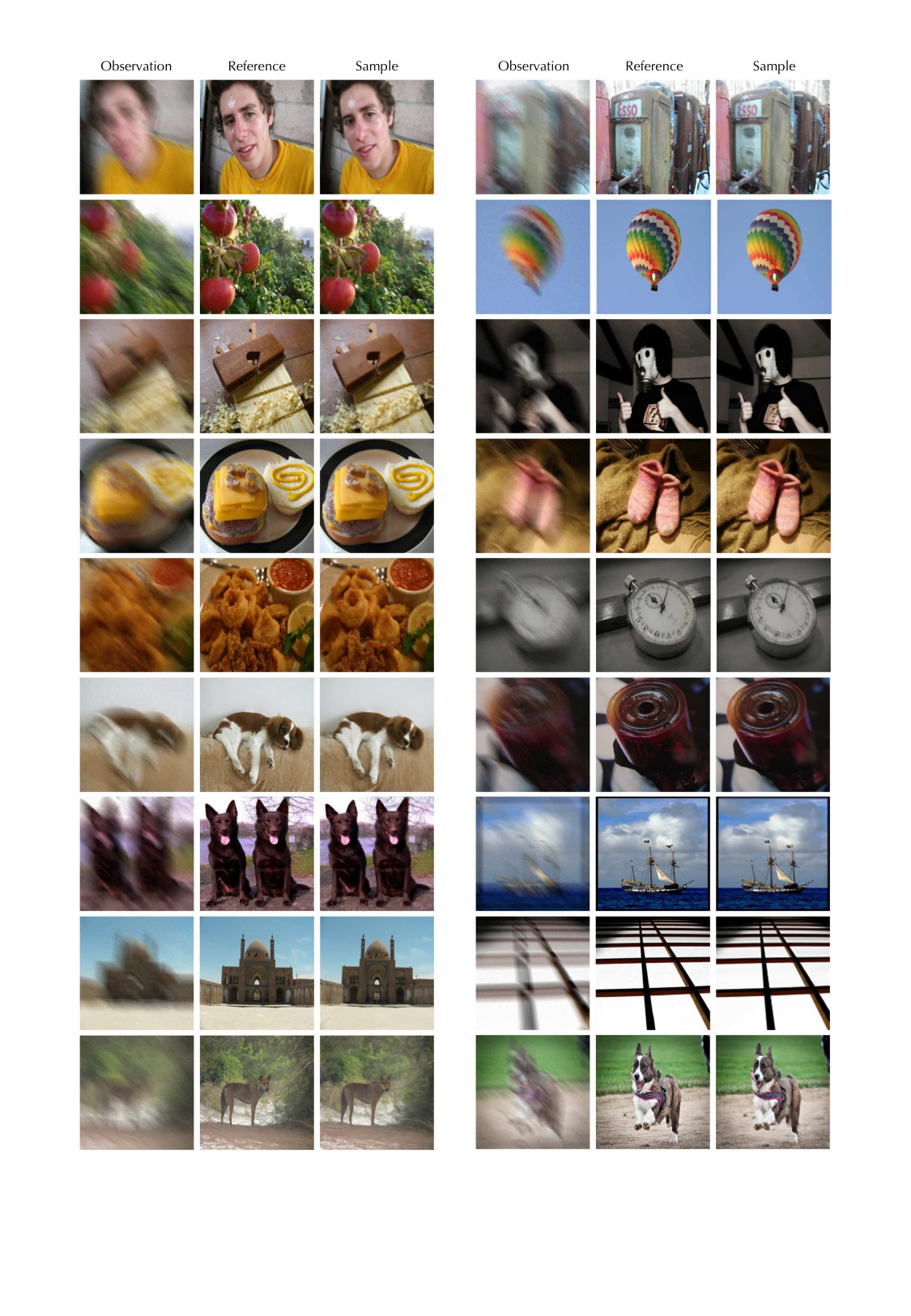}
    \vspace{-2.5cm}
    \caption{Deblurring on \texttt{ImageNet} dataset.}
    \label{fig:images_imagenet_motion_blur}
\end{figure}
\clearpage

\begin{figure}[!htbp]
    \centering
    \includegraphics[width=0.935\linewidth]{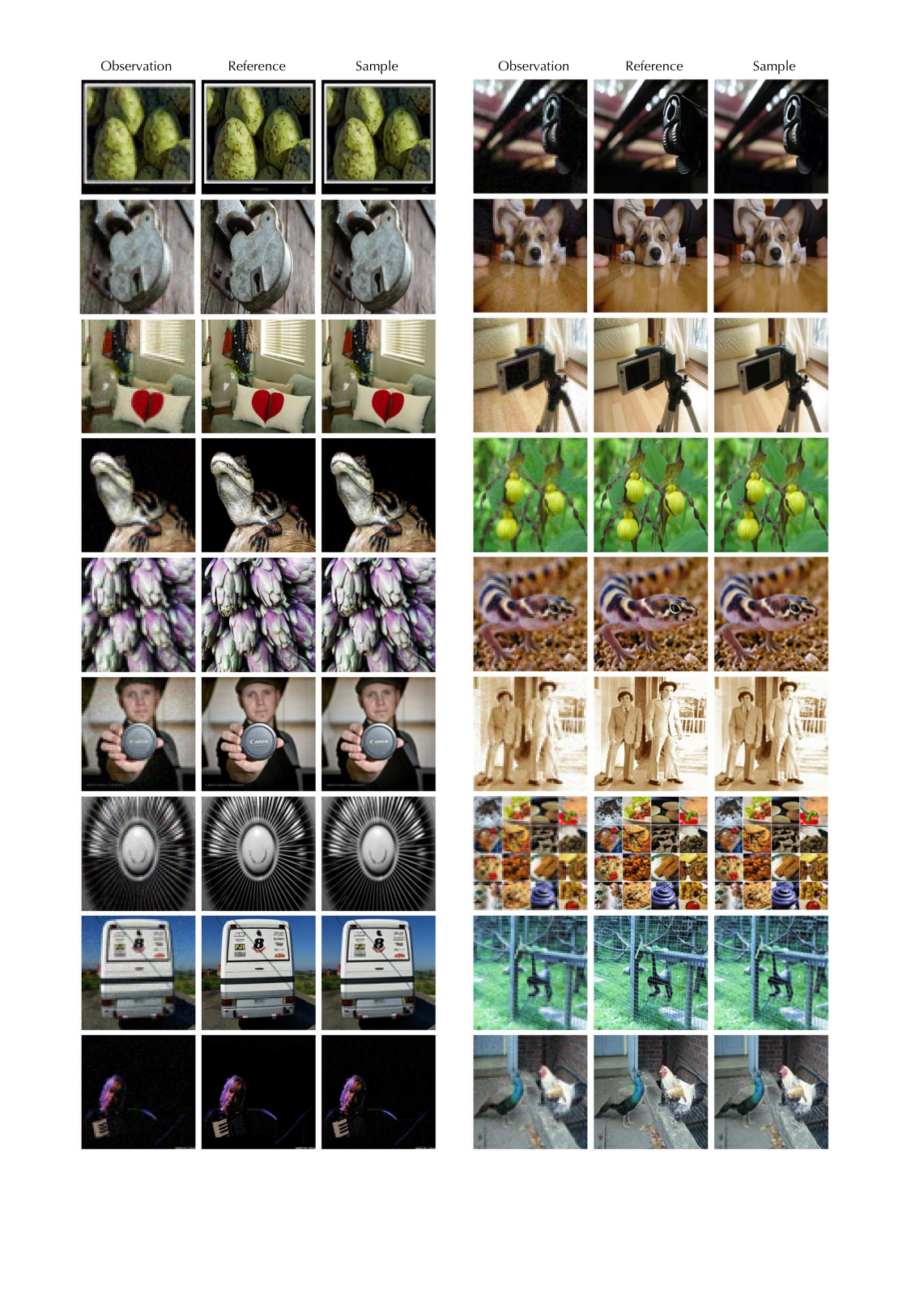}
    \vspace{-2.5cm}
    \caption{Super-resolution ($\times$4) on \texttt{ImageNet} dataset.}
    \label{fig:images_imagenet_sr4}
\end{figure}
\clearpage

\begin{figure}[!htbp]
    \centering
    \includegraphics[width=0.935\linewidth]{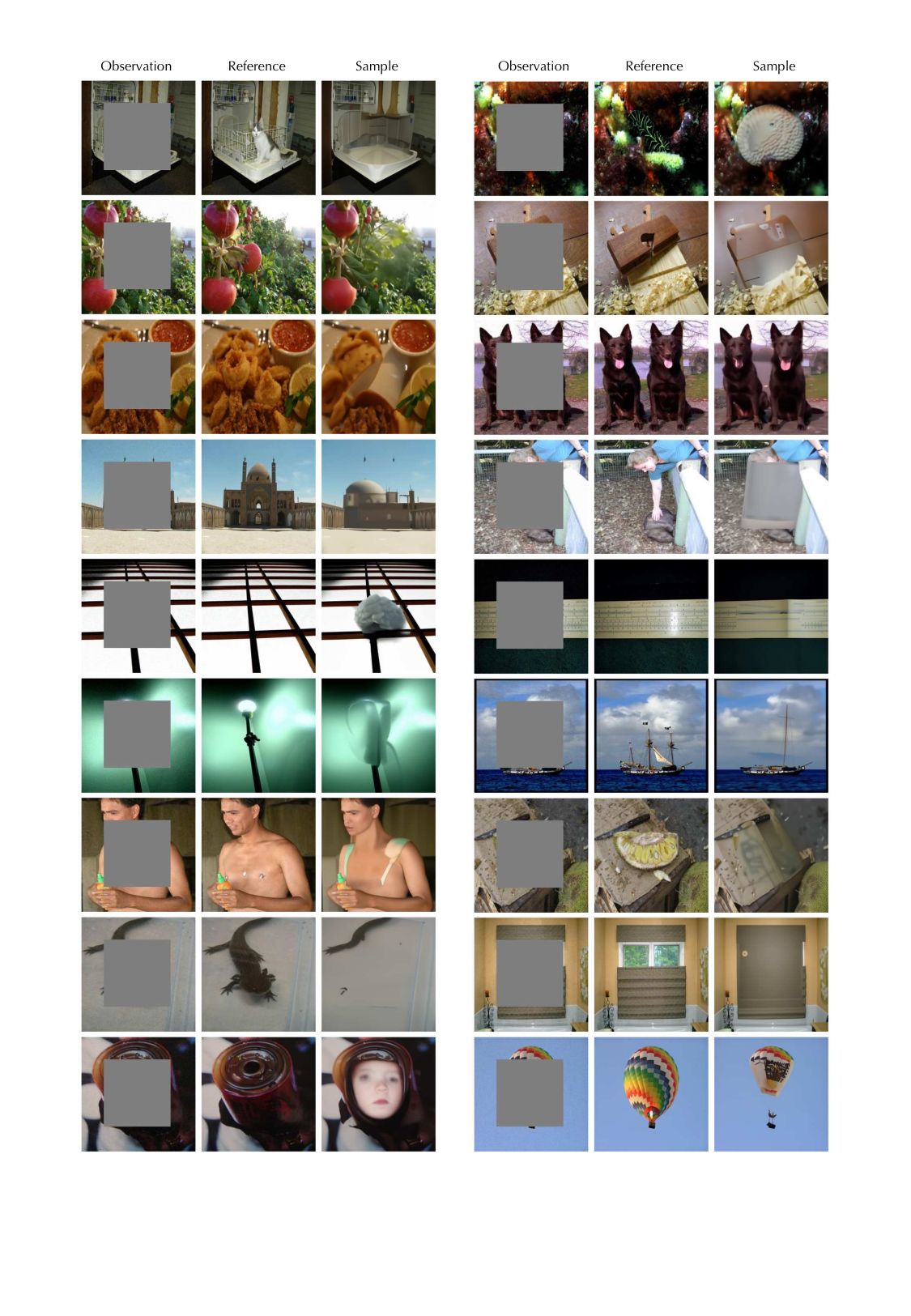}
    \vspace{-2.5cm}
    \caption{Center inpainting on \texttt{ImageNet} dataset.}
    \label{fig:images_imagenet_center_inpainting}
\end{figure}
\clearpage

\begin{figure}[!htbp]
    \centering
    \includegraphics[width=0.935\linewidth]{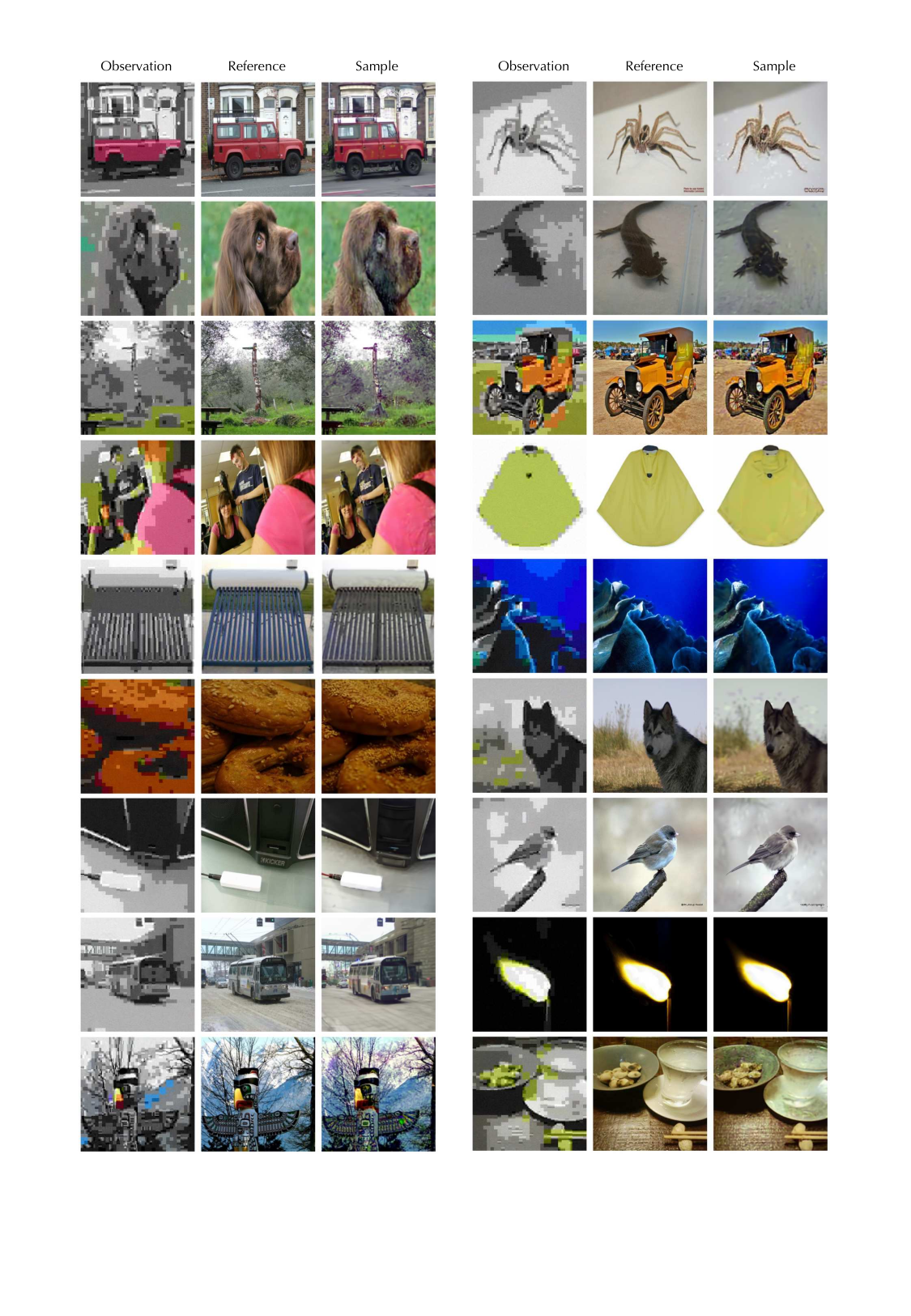}
    \vspace{-2.5cm}
    \caption{JPEG compression (QF=2) on \texttt{ImageNet} dataset.}
    \label{fig:images_imagenet_jpeg2}
\end{figure}
\clearpage

\begin{figure}[!htbp]
    \centering
    \includegraphics[width=0.935\linewidth]{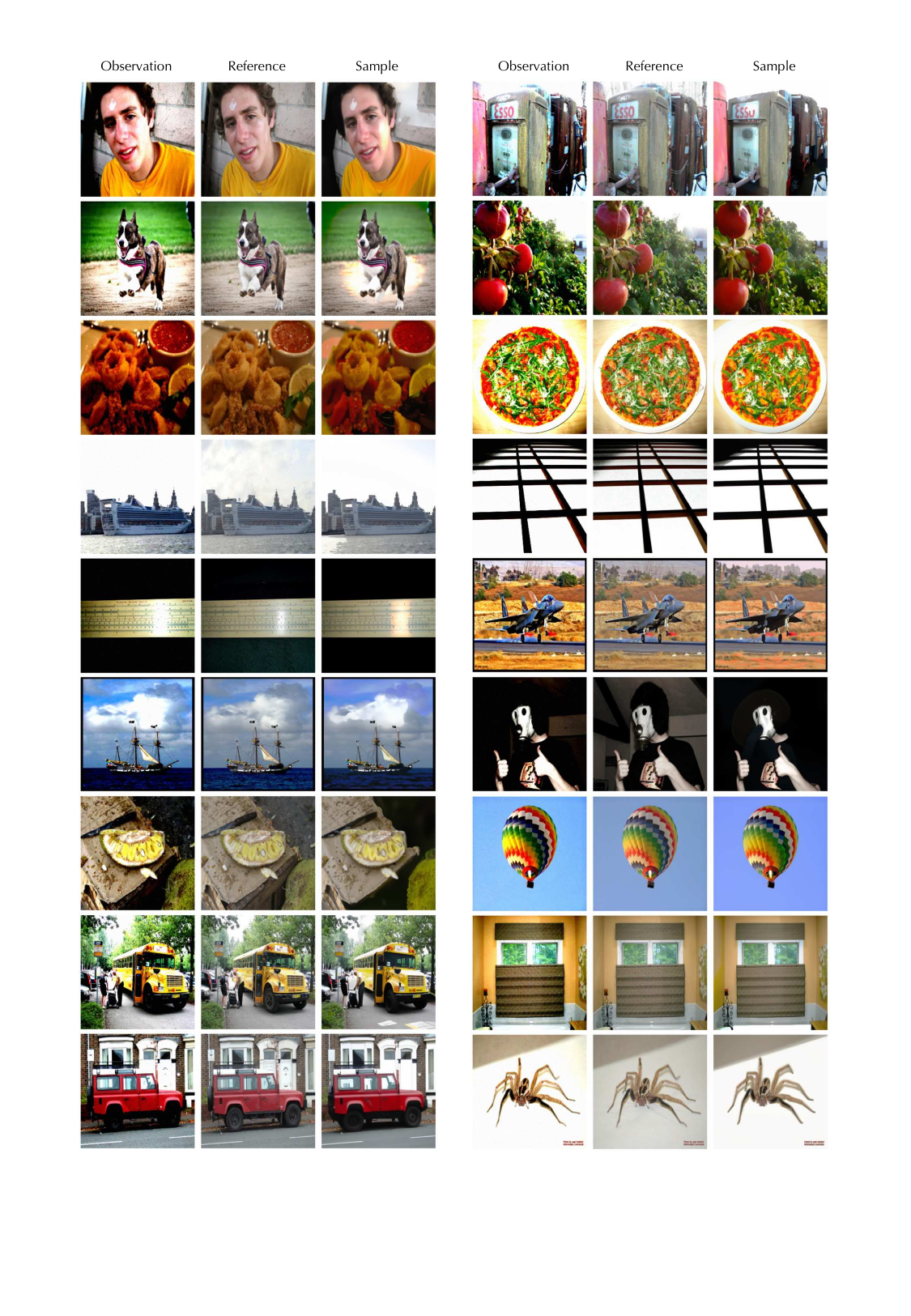}
    \vspace{-2.5cm}
    \caption{High Dynamic Range on \texttt{ImageNet} dataset.}
    \label{fig:images_imagenet_HDR}
\end{figure}
\clearpage

\newpage
\section*{NeurIPS Paper Checklist}

\begin{enumerate}

\item {\bf Claims}
    \item[] Question: Do the main claims made in the abstract and introduction accurately reflect the paper's contributions and scope?
    \item[] Answer: \answerYes{}
    \item[] Justification: The abstract and introduction state the three main contributions: an intermediate-time warm start, sparse scheduled guidance without denoiser/decoder VJPs, and improved quality--compute trade-offs for inverse problems. These claims are supported by the method section, Lemma~\ref{lem:warm_start_truncation}, and the FFHQ/ImageNet experiments, with scope restricted to the evaluated DDPM/DDIM-style image priors and inverse-problem settings.
    \item[] Guidelines:
    \begin{itemize}
        \item The answer \answerNA{} means that the abstract and introduction do not include the claims made in the paper.
        \item The abstract and/or introduction should clearly state the claims made, including the contributions made in the paper and important assumptions and limitations. A \answerNo{} or \answerNA{} answer to this question will not be perceived well by the reviewers. 
        \item The claims made should match theoretical and experimental results, and reflect how much the results can be expected to generalize to other settings. 
        \item It is fine to include aspirational goals as motivation as long as it is clear that these goals are not attained by the paper. 
    \end{itemize}

\item {\bf Limitations}
    \item[] Question: Does the paper discuss the limitations of the work performed by the authors?
    \item[] Answer: \answerYes{}
    \item[] Justification: Section~\ref{sec:experiments} includes a limitations paragraph discussing sensitivity to the warm-start timestep, guidance schedule, and optimization budget, as well as the current restriction to pretrained DDPM/DDIM-style image priors on \texttt{FFHQ} and \texttt{ImageNet} at $256\times256$ resolution.
    \item[] Guidelines:
    \begin{itemize}
        \item The answer \answerNA{} means that the paper has no limitation while the answer \answerNo{} means that the paper has limitations, but those are not discussed in the paper. 
        \item The authors are encouraged to create a separate ``Limitations'' section in their paper.
        \item The paper should point out any strong assumptions and how robust the results are to violations of these assumptions (e.g., independence assumptions, noiseless settings, model well-specification, asymptotic approximations only holding locally). The authors should reflect on how these assumptions might be violated in practice and what the implications would be.
        \item The authors should reflect on the scope of the claims made, e.g., if the approach was only tested on a few datasets or with a few runs. In general, empirical results often depend on implicit assumptions, which should be articulated.
        \item The authors should reflect on the factors that influence the performance of the approach. For example, a facial recognition algorithm may perform poorly when image resolution is low or images are taken in low lighting. Or a speech-to-text system might not be used reliably to provide closed captions for online lectures because it fails to handle technical jargon.
        \item The authors should discuss the computational efficiency of the proposed algorithms and how they scale with dataset size.
        \item If applicable, the authors should discuss possible limitations of their approach to address problems of privacy and fairness.
        \item While the authors might fear that complete honesty about limitations might be used by reviewers as grounds for rejection, a worse outcome might be that reviewers discover limitations that aren't acknowledged in the paper. The authors should use their best judgment and recognize that individual actions in favor of transparency play an important role in developing norms that preserve the integrity of the community. Reviewers will be specifically instructed to not penalize honesty concerning limitations.
    \end{itemize}

\item {\bf Theory assumptions and proofs}
    \item[] Question: For each theoretical result, does the paper provide the full set of assumptions and a complete (and correct) proof?
    \item[] Answer: \answerYes{}
    \item[] Justification: The main theoretical claim is Lemma~\ref{lem:warm_start_truncation}; its assumptions are stated in Assumption~\ref{ass:warm_start_stability}, and the proof is provided in Appendix~\ref{sec:warm_start_error}. Appendix~\ref{sec:renoising_app} also states and proves the optimal-coupling lemma used to motivate conservative re-noising.
    \item[] Guidelines:
    \begin{itemize}
        \item The answer \answerNA{} means that the paper does not include theoretical results. 
        \item All the theorems, formulas, and proofs in the paper should be numbered and cross-referenced.
        \item All assumptions should be clearly stated or referenced in the statement of any theorems.
        \item The proofs can either appear in the main paper or the supplemental material, but if they appear in the supplemental material, the authors are encouraged to provide a short proof sketch to provide intuition. 
        \item Inversely, any informal proof provided in the core of the paper should be complemented by formal proofs provided in appendix or supplemental material.
        \item Theorems and Lemmas that the proof relies upon should be properly referenced. 
    \end{itemize}

    \item {\bf Experimental result reproducibility}
    \item[] Question: Does the paper fully disclose all the information needed to reproduce the main experimental results of the paper to the extent that it affects the main claims and/or conclusions of the paper (regardless of whether the code and data are provided or not)?
    \item[] Answer: \answerYes{}
    \item[] Justification: The paper gives pseudocode in Algorithm~\ref{alg:spin-highlevel}, implementation-level details in Appendix~\ref{sec:algorithm}, datasets, tasks, metrics, baselines, sample counts, and noise levels in Section~\ref{sec:experiments}, and method/baseline hyperparameters in Appendices~\ref{sec:hyperparameters} and~\ref{sec:competitors}.
    \item[] Guidelines:
    \begin{itemize}
        \item The answer \answerNA{} means that the paper does not include experiments.
        \item If the paper includes experiments, a \answerNo{} answer to this question will not be perceived well by the reviewers: Making the paper reproducible is important, regardless of whether the code and data are provided or not.
        \item If the contribution is a dataset and\slash or model, the authors should describe the steps taken to make their results reproducible or verifiable. 
        \item Depending on the contribution, reproducibility can be accomplished in various ways. For example, if the contribution is a novel architecture, describing the architecture fully might suffice, or if the contribution is a specific model and empirical evaluation, it may be necessary to either make it possible for others to replicate the model with the same dataset, or provide access to the model. In general. releasing code and data is often one good way to accomplish this, but reproducibility can also be provided via detailed instructions for how to replicate the results, access to a hosted model (e.g., in the case of a large language model), releasing of a model checkpoint, or other means that are appropriate to the research performed.
        \item While NeurIPS does not require releasing code, the conference does require all submissions to provide some reasonable avenue for reproducibility, which may depend on the nature of the contribution. For example
        \begin{enumerate}
            \item If the contribution is primarily a new algorithm, the paper should make it clear how to reproduce that algorithm.
            \item If the contribution is primarily a new model architecture, the paper should describe the architecture clearly and fully.
            \item If the contribution is a new model (e.g., a large language model), then there should either be a way to access this model for reproducing the results or a way to reproduce the model (e.g., with an open-source dataset or instructions for how to construct the dataset).
            \item We recognize that reproducibility may be tricky in some cases, in which case authors are welcome to describe the particular way they provide for reproducibility. In the case of closed-source models, it may be that access to the model is limited in some way (e.g., to registered users), but it should be possible for other researchers to have some path to reproducing or verifying the results.
        \end{enumerate}
    \end{itemize}

\item {\bf Open access to data and code}
    \item[] Question: Does the paper provide open access to the data and code, with sufficient instructions to faithfully reproduce the main experimental results, as described in supplemental material?
    \item[] Answer: \answerYes{}
    \item[] Justification: The paper provides open access to the data and code, with sufficient instructions to faithfully reproduce the main experimental results, as described in supplemental material.
    \item[] Guidelines:
    \begin{itemize}
        \item The answer \answerNA{} means that paper does not include experiments requiring code.
        \item Please see the NeurIPS code and data submission guidelines (\url{https://neurips.cc/public/guides/CodeSubmissionPolicy}) for more details.
        \item While we encourage the release of code and data, we understand that this might not be possible, so \answerNo{} is an acceptable answer. Papers cannot be rejected simply for not including code, unless this is central to the contribution (e.g., for a new open-source benchmark).
        \item The instructions should contain the exact command and environment needed to run to reproduce the results. See the NeurIPS code and data submission guidelines (\url{https://neurips.cc/public/guides/CodeSubmissionPolicy}) for more details.
        \item The authors should provide instructions on data access and preparation, including how to access the raw data, preprocessed data, intermediate data, and generated data, etc.
        \item The authors should provide scripts to reproduce all experimental results for the new proposed method and baselines. If only a subset of experiments are reproducible, they should state which ones are omitted from the script and why.
        \item At submission time, to preserve anonymity, the authors should release anonymized versions (if applicable).
        \item Providing as much information as possible in supplemental material (appended to the paper) is recommended, but including URLs to data and code is permitted.
    \end{itemize}

\item {\bf Experimental setting/details}
    \item[] Question: Does the paper specify all the training and test details (e.g., data splits, hyperparameters, how they were chosen, type of optimizer) necessary to understand the results?
    \item[] Answer: \answerYes{}
    \item[] Justification: Section~\ref{sec:experiments} specifies the datasets, pretrained priors, validation-sample protocol, image resolution, inverse tasks, observation noise, baselines, metrics, and task families. Appendix~\ref{sec:hyperparameters} reports the warm-start, guidance, optimization, and schedule hyperparameters used for the reported experiments.
    \item[] Guidelines:
    \begin{itemize}
        \item The answer \answerNA{} means that the paper does not include experiments.
        \item The experimental setting should be presented in the core of the paper to a level of detail that is necessary to appreciate the results and make sense of them.
        \item The full details can be provided either with the code, in appendix, or as supplemental material.
    \end{itemize}

\item {\bf Experiment statistical significance}
    \item[] Question: Does the paper report error bars suitably and correctly defined or other appropriate information about the statistical significance of the experiments?
    \item[] Answer: \answerYes{}
    \item[] Justification: The quantitative tables report mean $\pm$ standard deviation over 100 validation samples for LPIPS in the main paper and PSNR/SSIM in the supplementary material. No formal hypothesis tests are claimed.
    \item[] Guidelines:
    \begin{itemize}
        \item The answer \answerNA{} means that the paper does not include experiments.
        \item The authors should answer \answerYes{} if the results are accompanied by error bars, confidence intervals, or statistical significance tests, at least for the experiments that support the main claims of the paper.
        \item The factors of variability that the error bars are capturing should be clearly stated (for example, train/test split, initialization, random drawing of some parameter, or overall run with given experimental conditions).
        \item The method for calculating the error bars should be explained (closed form formula, call to a library function, bootstrap, etc.)
        \item The assumptions made should be given (e.g., Normally distributed errors).
        \item It should be clear whether the error bar is the standard deviation or the standard error of the mean.
        \item It is OK to report 1-sigma error bars, but one should state it. The authors should preferably report a 2-sigma error bar than state that they have a 96\% CI, if the hypothesis of Normality of errors is not verified.
        \item For asymmetric distributions, the authors should be careful not to show in tables or figures symmetric error bars that would yield results that are out of range (e.g., negative error rates).
        \item If error bars are reported in tables or plots, the authors should explain in the text how they were calculated and reference the corresponding figures or tables in the text.
    \end{itemize}

\item {\bf Experiments compute resources}
    \item[] Question: For each experiment, does the paper provide sufficient information on the computer resources (type of compute workers, memory, time of execution) needed to reproduce the experiments?
    \item[] Answer: \answerYes{}
    \item[] Justification: The main tables report per-method GPU memory and runtime measurements, Table~\ref{tab:ffhq_nfe} reports NFE, and Appendix~\ref{sec:additional-experiments} states that experiments were run on a single node with 8 NVIDIA A6000 GPUs with runtime and memory measured on GPUs without competing processes.
    \item[] Guidelines:
    \begin{itemize}
        \item The answer \answerNA{} means that the paper does not include experiments.
        \item The paper should indicate the type of compute workers CPU or GPU, internal cluster, or cloud provider, including relevant memory and storage.
        \item The paper should provide the amount of compute required for each of the individual experimental runs as well as estimate the total compute. 
        \item The paper should disclose whether the full research project required more compute than the experiments reported in the paper (e.g., preliminary or failed experiments that didn't make it into the paper). 
    \end{itemize}
    
\item {\bf Code of ethics}
    \item[] Question: Does the research conducted in the paper conform, in every respect, with the NeurIPS Code of Ethics \url{https://neurips.cc/public/EthicsGuidelines}?
    \item[] Answer: \answerYes{}
    \item[] Justification: The research uses existing public datasets, pretrained image priors, and synthetic inverse-problem degradations; it does not involve human-subject experiments, private data collection, or deployment decisions. The authors have reviewed the NeurIPS Code of Ethics and are not aware of any deviation.
    \item[] Guidelines:
    \begin{itemize}
        \item The answer \answerNA{} means that the authors have not reviewed the NeurIPS Code of Ethics.
        \item If the authors answer \answerNo, they should explain the special circumstances that require a deviation from the Code of Ethics.
        \item The authors should make sure to preserve anonymity (e.g., if there is a special consideration due to laws or regulations in their jurisdiction).
    \end{itemize}

\item {\bf Broader impacts}
    \item[] Question: Does the paper discuss both potential positive societal impacts and negative societal impacts of the work performed?
    \item[] Answer: \answerNo{}
    \item[] Justification: The paper does not discuss the broader impacts of the work.
    \item[] Guidelines:
    \begin{itemize}
        \item The answer \answerNA{} means that there is no societal impact of the work performed.
        \item If the authors answer \answerNA{} or \answerNo, they should explain why their work has no societal impact or why the paper does not address societal impact.
        \item Examples of negative societal impacts include potential malicious or unintended uses (e.g., disinformation, generating fake profiles, surveillance), fairness considerations (e.g., deployment of technologies that could make decisions that unfairly impact specific groups), privacy considerations, and security considerations.
        \item The conference expects that many papers will be foundational research and not tied to particular applications, let alone deployments. However, if there is a direct path to any negative applications, the authors should point it out. For example, it is legitimate to point out that an improvement in the quality of generative models could be used to generate Deepfakes for disinformation. On the other hand, it is not needed to point out that a generic algorithm for optimizing neural networks could enable people to train models that generate Deepfakes faster.
        \item The authors should consider possible harms that could arise when the technology is being used as intended and functioning correctly, harms that could arise when the technology is being used as intended but gives incorrect results, and harms following from (intentional or unintentional) misuse of the technology.
        \item If there are negative societal impacts, the authors could also discuss possible mitigation strategies (e.g., gated release of models, providing defenses in addition to attacks, mechanisms for monitoring misuse, mechanisms to monitor how a system learns from feedback over time, improving the efficiency and accessibility of ML).
    \end{itemize}
    
\item {\bf Safeguards}
    \item[] Question: Does the paper describe safeguards that have been put in place for responsible release of data or models that have a high risk for misuse (e.g., pre-trained language models, image generators, or scraped datasets)?
    \item[] Answer: \answerNA{}
    \item[] Justification: The paper does not introduce or release a new pretrained model, scraped dataset, or high-risk data asset. It uses existing pretrained diffusion priors and datasets under their original access conditions.
    \item[] Guidelines:
    \begin{itemize}
        \item The answer \answerNA{} means that the paper poses no such risks.
        \item Released models that have a high risk for misuse or dual-use should be released with necessary safeguards to allow for controlled use of the model, for example by requiring that users adhere to usage guidelines or restrictions to access the model or implementing safety filters. 
        \item Datasets that have been scraped from the Internet could pose safety risks. The authors should describe how they avoided releasing unsafe images.
        \item We recognize that providing effective safeguards is challenging, and many papers do not require this, but we encourage authors to take this into account and make a best faith effort.
    \end{itemize}

\item {\bf Licenses for existing assets}
    \item[] Question: Are the creators or original owners of assets (e.g., code, data, models), used in the paper, properly credited and are the license and terms of use explicitly mentioned and properly respected?
    \item[] Answer: \answerNo{}
    \item[] Justification: The paper credits the datasets, pretrained models, metrics, and baseline methods through citations, but the current draft does not explicitly list the licenses or terms of use for all existing assets.
    \item[] Guidelines:
    \begin{itemize}
        \item The answer \answerNA{} means that the paper does not use existing assets.
        \item The authors should cite the original paper that produced the code package or dataset.
        \item The authors should state which version of the asset is used and, if possible, include a URL.
        \item The name of the license (e.g., CC-BY 4.0) should be included for each asset.
        \item For scraped data from a particular source (e.g., website), the copyright and terms of service of that source should be provided.
        \item If assets are released, the license, copyright information, and terms of use in the package should be provided. For popular datasets, \url{paperswithcode.com/datasets} has curated licenses for some datasets. Their licensing guide can help determine the license of a dataset.
        \item For existing datasets that are re-packaged, both the original license and the license of the derived asset (if it has changed) should be provided.
        \item If this information is not available online, the authors are encouraged to reach out to the asset's creators.
    \end{itemize}

\item {\bf New assets}
    \item[] Question: Are new assets introduced in the paper well documented and is the documentation provided alongside the assets?
    \item[] Answer: \answerNA{}
    \item[] Justification: The paper introduces a new algorithmic method, but it does not introduce or release a new dataset, benchmark, pretrained model, or other standalone asset requiring separate asset documentation.
    \item[] Guidelines:
    \begin{itemize}
        \item The answer \answerNA{} means that the paper does not release new assets.
        \item Researchers should communicate the details of the dataset\slash code\slash model as part of their submissions via structured templates. This includes details about training, license, limitations, etc. 
        \item The paper should discuss whether and how consent was obtained from people whose asset is used.
        \item At submission time, remember to anonymize your assets (if applicable). You can either create an anonymized URL or include an anonymized zip file.
    \end{itemize}

\item {\bf Crowdsourcing and research with human subjects}
    \item[] Question: For crowdsourcing experiments and research with human subjects, does the paper include the full text of instructions given to participants and screenshots, if applicable, as well as details about compensation (if any)? 
    \item[] Answer: \answerNA{}
    \item[] Justification: The work does not involve crowdsourcing, new data collection from participants, or experiments with human subjects.
    \item[] Guidelines:
    \begin{itemize}
        \item The answer \answerNA{} means that the paper does not involve crowdsourcing nor research with human subjects.
        \item Including this information in the supplemental material is fine, but if the main contribution of the paper involves human subjects, then as much detail as possible should be included in the main paper. 
        \item According to the NeurIPS Code of Ethics, workers involved in data collection, curation, or other labor should be paid at least the minimum wage in the country of the data collector. 
    \end{itemize}

\item {\bf Institutional review board (IRB) approvals or equivalent for research with human subjects}
    \item[] Question: Does the paper describe potential risks incurred by study participants, whether such risks were disclosed to the subjects, and whether Institutional Review Board (IRB) approvals (or an equivalent approval/review based on the requirements of your country or institution) were obtained?
    \item[] Answer: \answerNA{}
    \item[] Justification: The work does not involve crowdsourcing or human-subject experiments, so IRB approval or equivalent review is not applicable.
    \item[] Guidelines:
    \begin{itemize}
        \item The answer \answerNA{} means that the paper does not involve crowdsourcing nor research with human subjects.
        \item Depending on the country in which research is conducted, IRB approval (or equivalent) may be required for any human subjects research. If you obtained IRB approval, you should clearly state this in the paper. 
        \item We recognize that the procedures for this may vary significantly between institutions and locations, and we expect authors to adhere to the NeurIPS Code of Ethics and the guidelines for their institution. 
        \item For initial submissions, do not include any information that would break anonymity (if applicable), such as the institution conducting the review.
    \end{itemize}

\item {\bf Declaration of LLM usage}
    \item[] Question: Does the paper describe the usage of LLMs if it is an important, original, or non-standard component of the core methods in this research? Note that if the LLM is used only for writing, editing, or formatting purposes and does \emph{not} impact the core methodology, scientific rigor, or originality of the research, declaration is not required.
    \item[] Answer: \answerNA{}
    \item[] Justification: The core methodology does not use LLMs as an important, original, or non-standard research component.
    \item[] Guidelines:
    \begin{itemize}
        \item The answer \answerNA{} means that the core method development in this research does not involve LLMs as any important, original, or non-standard components.
        \item Please refer to our LLM policy in the NeurIPS handbook for what should or should not be described.
    \end{itemize}

\end{enumerate}

\end{document}